\definecolor{green2}{HTML}{c2fdfe}
\definecolor{green1}{HTML}{bce672}
\definecolor{yellow1}{HTML}{f5dd6f}
\definecolor{blue1}{HTML}{3eede7}
\definecolor{orange1}{HTML}{e77c4b}
\definecolor{red1}{HTML}{f47983}
\def\supp{\mathop{\text{supp}}}
\long\def\comment#1{}
\def\cS{{\mathcal{S}}}
\newcommand{\bel}{\begin{eqnarray}\label}
\newcommand{\eel}{\end{eqnarray}}
\newcommand{\bes}{\begin{eqnarray*}}
\newcommand{\ees}{\end{eqnarray*}}
\let\emptyset\varnothing
\let\hat\widehat 
\let\tilde\widetilde
\def\mid{\,|\,}
\def\GG{{\mathbb G}}
\def\supp{\mathop{\text{supp}\kern.2ex}}
\def\argmin{\mathop{\text{\rm arg\,min}}}
\def\argmax{\mathop{\text{\rm arg\,max}}}
\def\supp{\mathop{\text{supp}}}
\def\M{{\mathrm M}}
\newcommand{\CR}{\textnormal{CR}}
\newcommand{\eqw}[1]{\stackrel{\mathclap{\textnormal{#1}}}{=}}
\newcommand{\Blin}{\mathbb{B}_{\textnormal{lin}}}
\newcommand{\Glin}{\mathbb{G}_{\textnormal{lin}}}
\newcommand{\Pilin}{\Pi_{\textnormal{lin}}}
\def\##1\#{\begin{align}#1\end{align}}
\def\$#1\${\begin{align*}#1\end{align*}}
\title{Pessimism in the Face of Confounders: Provably Efficient\\ Offline Reinforcement Learning in Partially Observable Markov Decision Processes}
\author{Miao Lu\thanks{Stanford University. Email: \texttt{miaolu@stanford.edu}} \qquad Yifei Min\thanks{Yale University. Email: \texttt{yifei.min@yale.edu}} 
\qquad Zhaoran Wang\thanks{Northwestern University. Email: \texttt{zhaoranwang@gmail.com}} \qquad Zhuoran Yang\thanks{Yale University. Email: \texttt{zhuoran.yang@yale.edu}}}
\date{\today}
\begin{document}
\maketitle

\begin{abstract}
We study offline reinforcement learning (RL) in partially observable Markov decision processes. In particular, we aim to learn an optimal policy from a dataset collected by a behavior policy which possibly depends on the latent state. Such a dataset is confounded in the sense that the latent state simultaneously affects the action and the observation, which is prohibitive for existing offline RL algorithms. To this end, we propose the \underline{P}roxy variable \underline{P}essimistic \underline{P}olicy \underline{O}ptimization (\texttt{P3O}) algorithm, which addresses the confounding bias and the distributional shift between the optimal and behavior policies in the context of general function approximation. 
At the core of \texttt{P3O} is a coupled sequence of pessimistic confidence regions  constructed via proximal causal inference, which is  formulated as  minimax estimation. 
Under a partial coverage assumption on the confounded dataset, we prove that \texttt{P3O} achieves a $n^{-1/2}$-suboptimality, where $n$ is the number of trajectories in the dataset. To our best knowledge, \texttt{P3O} is the first provably efficient offline RL algorithm for POMDPs with a confounded dataset.
\end{abstract}

\tableofcontents
\newpage
\section{Introduction}\label{sec: intro}

Offline reinforcement learning (RL) \citep{sutton2018reinforcement} aims to learn an   optimal policy of a sequential decision making problem  purely from an offline dataset collected a priori, without any  further interactions with the environment. 
Offline RL is particularly pertinent to applications in critical domains such as precision medicine \citep{gottesman2019guidelines} and autonomous driving \citep{shalev2016safe}.
In particular, in these scenarios, 
interacting with the environment via online experiments might be  risky, slow, 
or even possibly unethical, 
but oftentimes offline datasets consisting of past interactions, e.g.,   treatment records for precision medicine \citep{chakraborty2013statistical, chakraborty2014dynamic} and  human driving data for autonomous driving \citep{sun2020scalability},   are   adequately available.
As a result, offline RL has attracted substantial research interest recently  \citep{levine2020offline}.

Most of the existing works on offline RL develop algorithms and theory on   the model of Markov decision processes (MDPs). 
However, in many real-world applications, due to certain  privacy concerns or limitations of the sensor  apparatus, the states of the environment cannot be directly stored in the offline datasets. Instead, only partial  observations generated from the states  of the environments are stored \citep{dulac2021challenges}. 
For example, in precision  medicine, a physician's treatment might consciously or subconsciously depend on the patient's mood and socioeconomic status 
\citep{zhang2016markov}, which are not recorded in the data due to privacy concerns. 
As another example, 
in autonomous driving, a human driver  makes decisions based on multimodal information that is  not limited to visual and auditory inputs, but only observations captured by LIDARs and cameras are stored in the datasets \citep{sun2020scalability}. 
In light of the partial observations in the datasets, these situations are better modeled as partially observable Markov decision processes (POMDPs) \citep{lovejoy1991survey}. Existing offline RL methods for MDPs, which fail to handle partial observations, are thus not applicable.

In this work,  we make the initial  step towards studying offline RL in POMDPs where the datasets   only contain  partial observations of the states. 
In particular, motivated from the aforementioned real-world applications, we consider the case where the behavior policy takes actions based on the states of the environment, which are not part of  the dataset and  thus are  latent variables. 
Instead, the trajectories in datasets consist of 
partial observations emitted from the latent states, as well as the actions and rewards. 
For such a dataset, our  goal is to learn an  optimal policy  in the context of general function approximation.

Furthermore, offline RL in PODMP suffers from  several challenges. 
First of all, it is known that both planning and estimation in PODMPs are intractable in the worst case \citep{papadimitriou1987complexity, burago1996complexity, goldsmith1998complexity, mundhenk2000complexity, vlassis2012computational}. Thus, we have to identify a set of sufficient conditions that warrants offline RL. 
More importantly, our problem faces the unique challenge of the confounding issue  caused by the latent states, which does not appear in either online and offline MDPs or online POMDPs. 
In particular, both the actions and observations in the offline dataset depend on the unobserved latent states, and thus are confounded \citep{pearl2009causality}. 
Such a confounding issue is illustrated by a causal graph in Figure~\ref{fig: reactive simple}. 
As a result, directly applying offline RL methods for MDPs will nevertheless incur a considerable confounding bias. Besides, since the latent states evolve according to the Markov transition kernel, the causal structure is thus dynamic, which makes the confounding issue more challenging to handle than that in  static causal problems. 
Furthermore, apart from the confounding issue, since we aim to learn the optimal policy, our algorithm also needs to handle 
the distributional shift between the trajectories induced by the behavior policy  and the family of  target policies. 
Finally, to handle large   observation spaces, we need to employ  powerful function approximators. 
As a result, the coupled challenges due to  (i) the confounding bias, (ii) distributional shift, and (iii) large observation spaces that are distinctive in our problem necessitates new algorithm design and theory.

To this end, by leveraging tools from  proximal causal inference \citep{lipsitch2010negative, tchetgen2020introduction}, we propose the  \underline{P}roxy variable \underline{P}essimistic \underline{P}olicy \underline{O}ptimization (\texttt{P3O}) algorithm,
which provably addresses the confounding bias and the  distributional shift in the context of general function approximation. 
Specifically, we focus on a benign class of POMDPs where the causal structure involving latent states can be captured by 
the past and current observations, which serves as the negative control action and outcome  respectively \citep{ miao2018identifying,miao2018confounding,cui2020semiparametric,singh2020kernel,  kallus2021causal,  bennett2021proximal,shi2021minimax}. 
Then the value of each policy can be identified by a set of  confounding bridge functions corresponding to that policy, which satisfy a sequence of backward moment equations that are similar to the celebrated Bellman equations in classical RL \citep{bellman1965dynamic}. 
Thus, by estimating these confounding  bridge functions from offline data, we can estimate the value of each policy without incurring the confounding bias.

\begin{figure}[t]
    \centering
    \begin{tikzpicture}[->,>=stealth', very thick, main node/.style={circle,draw}]
    
       \node[main node, text=black, minimum width =30pt, 
minimum height =30pt] (-3) at  (-2,-0.875) {\small $R_{h-1}$};
    \node[main node, text=black, minimum width =30pt ,
minimum height =30pt] (-2) at  (2,-0.875) {\small $R_{h}$};

    \node[main node, text=black, minimum width =30pt, 
minimum height =30pt, style=dotted] (1) at  (-4,0) {\small$S_{h-1}$};
    \node[main node, text=black, minimum width =30pt ,
minimum height =30pt, style=dotted] (2) at  (0,0) {\small$S_{h}$};
\node[main node, text=black, minimum width =30pt ,
minimum height =30pt, style=dotted] (3) at  (4,0) {\small$S_{h+1}$};

\node[main node, text=black, minimum width =30pt ,
minimum height =30pt] (4) at  (-4,1.5) {\small$O_{h-1}$};
\node[main node, text=black, minimum width =30pt ,
minimum height =30pt] (5) at  (0,1.5) {\small$O_{h}$};
\node[main node, text=black, minimum width =30pt ,
minimum height =30pt] (6) at  (4,1.5) {\small$O_{h+1}$};

\node[main node, text=black, minimum width=30pt ,
minimum height =30pt] (7) at  (-2,1.5) {\small $A_{h-1}$};
\node[main node, text=black, minimum width =30pt ,
minimum height =30pt ] (8) at  (2,1.5) {\small $A_{h}$};

    \draw[->, style, draw = red1] (1) --(-3);
    \draw[->, style, draw = red1] (2) --(-2);

    \draw[->, style] (-5,0) --(1);
    \draw[->, style] (1) --(2);
    \draw[->, style] (2) --(3);
    
    \draw[->, style, draw = red1] (1) --(4);
    \draw[->, style, draw = red1] (2) --(5);
    \draw[->, style, draw = red1] (3) --(6);

    
    \draw[->] (7) --(-3);
    \draw[->] (8) --(-2);
    
    \draw[->, style] (7) --(2);
    \draw[->, style] (8) --(3);
    
     \draw[->, style, draw = blue1] (1) --(7);
    \draw[->, style, draw = blue1] (2) --(8);

    \end{tikzpicture}
    \caption{Causal graph of  the  data generating process for offline learning in POMDP. The dotted nodes indicate that these variables are not stored in the offline dataset. Here $S_{h}$ is the  state  of the environment at  step $h$. Besides,  $A_{h}$, $R_h$, $O_h$  are the action, immediate reward, and observation at the $h$-th  step, respectively. These variables are stored in the offline dataset and thus are represented by black solid circles. 
    We use  the solid arrows to indicate the dependency among  the variables. In specific, 
    the action $A_h$ is specified by the behavior policy which is a function of $S_h$, such dependency is depicted in the \textcolor{blue1}{\bf blue} arrows. 
    Moreover, both the observations and rewards depends on the state $S_h$ and such dependency is depicted in  \textcolor{red1}{\bf red}. 
 We would like to highlight that $S_h$ affects both $A_h$ and $O_h$ and thus serves as an \textbf{unobserved confounder}.}
    \label{fig: reactive simple}
\end{figure}
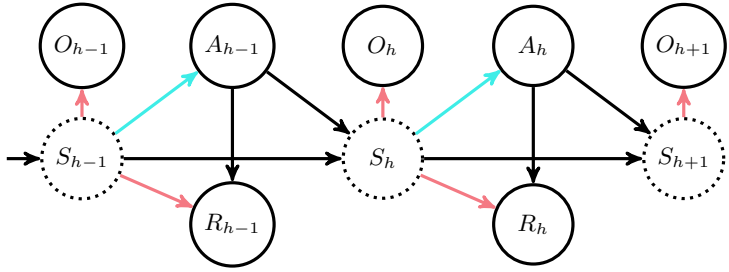

More concretely,  \texttt{P3O} involves two components  ---  policy evaluation via minimax estimation and policy optimization via pessimism. 
Specifically, to tackle the distributional shift, \texttt{P3O} returns the policy that maximizes pessimistic estimates of the values obtained by  policy evaluation. 
Meanwhile, in policy evaluation, to ensure pessimism, we construct a coupled sequence of confidence regions for  the confounding bridge functions 
via minimax estimation, using function approximators. 
Furthermore, under a partial coverage assumption on the confounded dataset, we prove that  \texttt{P3O}  achieves a $\tilde \cO( H    \sqrt{\log(\mathcal{N}_{\mathrm{fun}} )/ n} )$ suboptimality, where  $n$ is the number of trajectories, $H$ is the length of each trajectory, $\mathcal{N}_{\mathrm{fun}}$ stands for the complexity of the employed function classes (e.g., the covering number), and $\tilde \cO(\cdot)$ hides logarithmic factors. 
When specified  to linear function classes, the suboptimality of \texttt{P3O}  becomes $\tilde \cO ( \sqrt{H^3 d /n})$, where $d$ is the dimension of the feature mapping.  
To our best knowledge, we establish the first provably efficient offline RL algorithm for POMDP with a confounded dataset.

\subsection{Overview of Techniques}
To deal with the coupled  challenges of confounding bias, distributional shift, and large observational spaces,  our algorithm and analysis rely on the following technical ingredients. 
Although some aspects of these techniques  have been studied in existing works for other problems, 
to our best knowledge, these techniques are novel to  the literature of offline policy learning   in POMDPs.

\vspace{2mm}
\noindent
\textbf{Confidence regions based on minimax estimation via proximal causal inference.}
In order to handle the confounded offline dataset, we use the proxy variables from proximal causal inference \citep{miao2018identifying,miao2018confounding, cui2020semiparametric,kallus2021causal}, which allows us to identify the value of  each policy  by a set of confounding bridge functions. 
These bridge functions only depend on observed variables and satisfy  a set of backward conditional moment equations.
We then estimate these bridge functions via   minimax estimation \citep{dikkala2020minimax,chernozhukov2020adversarial, uehara2021finite}. 
More importantly, to handle the distributional shift,  we propose a sequence of novel confidence regions for the bridge functions, which quantifies the uncertainty of minimax estimation based on finite data. 
This sequence of new confidence regions has not been  considered in the previous works on off-policy evaluation (OPE) in POMDPs \citep{bennett2021proximal, shi2021minimax} as pessimism seems unnecessary in these works.  
Meanwhile, the confidence regions are constructed as a level set with respect to the loss functions of the minimax estimation. 
Such construction contrasts sharply with previous  works on offline RL  which build confidence regions via either least square regression or maximum likelihood estimation~\citep{xie2021bellman, uehara2021pessimistic, liu2022partially}.
Furthermore, we develop a novel theoretical analysis to show that any function in the confidence regions enjoys a  fast statistical rate of convergence  \citep{uehara2021finite}.
Finally, leveraging the backwardly  inductive nature of the bridge functions, our proposed confidence regions and analysis take the temporal structure   into consideration, which might be of independent interest to the research on dynamic causal inference \citep{friston2003dynamic}.

\vspace{2mm}
\noindent
\textbf{Pessimism principle for learning POMDPs.}
To learn the optimal policy in the face of distributional shift, we adopt the pessimism principle which is shown to be effective in offline RL in MDPs    \citep{PessFQI,jin2021pessimism,MACONG-2021new-Offline+IL,uehara2021pessimistic,xie2021bellman,yin2021towards,zanette2021provable,yin2022near,yan2022efficacy}.
Specifically, the newly proposed confidence regions, combined with the identification result based on proximal causal inference, allows us to construct a novel pessimistic estimator for the value of each target policy.
From a theoretical perspective, the identification result and the backward induction property of the bridge functions  provide  a way of decomposing the suboptimality of the learned policy in terms of   statistical  errors of the bridge functions.
When combined with the pessimism and the fast statistical rates  enjoyed by any functions in the   confidence regions, we show that our proposed \texttt{P3O} algorithm efficiently learns the optimal policy under a partial coverage assumption of the confounded dataset. 
We highlight that we firstly extend the pessimism principle to offline RL in POMDPs with confounded data.

\begin{table}[t]
  \centering
  \begin{tabular}{||c|c|c|c|c||}
  \hline
    &    Offline& Partial Observations & Confounded Data &  Policy Optimization    \\
    \hline
    \cite{xie2021bellman} &\ding{51}  &\ding{56} & \ding{56}   & \ding{51}     \\
    \hline
    \cite{uehara2021pessimistic} &\ding{51}  &\ding{56} & \ding{56}   & \ding{51}     \\
    \hline
    \hline
     \cite{jin2020sample}   &\ding{56}  &\ding{51} & \ding{56}   & \ding{51}     \\
    \hline
    \cite{efroni2022provable} &\ding{56}  &\ding{51} & \ding{56}   & \ding{51}     \\
    \hline
    \cite{liu2022partially}   &\ding{56}  &\ding{51} & \ding{56}   & \ding{51}     \\
    \hline
    \hline
     \cite{bennett2021proximal}   &\ding{51}  &\ding{51} & \ding{51}   & \ding{56}     \\
    \hline
    \cite{shi2021minimax}   &\ding{51}  &\ding{51} & \ding{51}   & \ding{56}     \\
    \hline
    \hline
    \rowcolor{green2}
      Our algorithm: \texttt{P3O}  &\ding{51}     & \ding{51}  &  \ding{51}   & \ding{51}  \\
    \hline
  \end{tabular}
\caption{We compare with most related representative works in closely related lines of research. The first line of research  studies offline RL in standard MDPs without any partial observability. The second line of research  studies online RL in POMDPs where the actions are specified by history-dependent policies.  
Thus, the actions does not directly depends on the latent states and thus these works  do  not involve the challenge due to  confounded data. The third line of research studies OPE in POMDPs where the goal is to learn the value of the target policy as opposed to learning the optimal policy. As a result, these works do not  to need to handle the challenge of  distributional shift via pessimism.}
  \label{table: related work}
\end{table}

\subsection{Related Works}
Our work is closely related to the bodies of literature on (i) reinforcement learning POMDPs, (ii) offline reinforcement learning (in MDPs), and (iii) OPE via causal inference. 
For a comparison, we summarize and contrast with most related existing works in Table \ref{table: related work}. 
Compared to the literature, our work simultaneously involve partial observability, confounded data, and offline policy optimization simultaneously, and thus involves the challenges faced by (i)--(iii).
In the sequel, we discuss the related works in detail.

\vspace{2mm}
\noindent
\textbf{Reinforcement learning in POMDPs.} Our work is related to the recent line of research on developing provably efficient online RL methods for POMDPs  \citep{guo2016pac,krishnamurthy2016pac,jin2020sample,xiong2021sublinear,jafarnia2021online,efroni2022provable,liu2022partially}. 
In the online setting, the actions are specified by history-dependent policies and thus the latent state does not directly affect the actions. 
Thus, the actions and observations in the online setting are not confounded by latent states. 
Consequently, although these work also  conduct uncertainty quantification  to encourage exploration, 
the  confidence regions are not  based on confounded data and are thus constructed differently.

\vspace{2mm}
\noindent
\textbf{Offline reinforcement learning and pessimism.}
Our work is also  related  to the literature on offline RL and  particularly related to the works based on the pessimism principle \citep{uniform3,uniform4,chen2019information,buckman2020importance,PessFQI,jin2021pessimism,zanette2021exponential,jin2021pessimism,xie2021bellman,uehara2021pessimistic,yin2021towards,MACONG-2021new-Offline+IL,zhan2022offline,yin2022near,yan2022efficacy}. 
Offline RL faces the challenge  of the distributional shift between the behavior policy and the family of  target policies. 
Without any coverage assumption on the offline data,   the number of data   needed  to find a  near-optimal policy  can be exponentially large \citep{buckman2020importance,zanette2021exponential}. 
To circumvent this problem, a few existing  works study offline RL under a uniform  coverage assumption, which requires the concentrability coefficients  between the behavior  and target policies are  uniformly bounded. See, e.g., \cite{uniform3,uniform4,chen2019information} and the references therein. 
Furthermore, a more recent line of work aims to weaken the uniform coverage assumption by adopting the pessimism principle in algorithm design  \citep{PessFQI,jin2021pessimism,MACONG-2021new-Offline+IL,uehara2021pessimistic,xie2021bellman,yin2021towards,zanette2021provable,yin2022near,yan2022efficacy}.
In particular, these works proves theoretically that pessimism is effective in tackling the distributional shift of the  offline dataset. 
In particular, by constructing pessimistic value function  estimates, 
these works establish  upper bounds on the suboptimality of the proposed methods based on significantly weaker partial coverage assumption. 
That is, these methods can find a near-optimal policy as long as the dataset covers the optimal policy. 
The efficacy of  pessimism 
has  also been validated empirically in 
\cite{kumar2020conservative,kidambi2020morel,yu2021combo,janner2021offline}. 
Compared with these works on pessimism, we focus on the more challenging setting of POMDP with a  confounded dataset. To perform pessimism in the face of confounders, we conduct uncertainty quantification for the minimax estimation regarding the confounding bridge functions. 
Our work complements this line of research by successfully applying  pessimism to confounded data.


\vspace{2mm}
\noindent
\textbf{OPE via causal inference.}
Our work is closely related to the line of research that employing tools from  causal inference  \citep{pearl2009causality} for studying OPE with unobserved confounders \citep{oberst2019counterfactual,kallus2020confounding,bennett2021off,kallus2021minimax,mastouri2021proximal,shi2021minimax,bennett2021proximal,shi2022dynamic}.
Among them,  \cite{bennett2021proximal,shi2021minimax} are most relevant to our work.
In particular, these works also leverage 
   proximal causal inference \citep{lipsitch2010negative, miao2018identifying,miao2018confounding,cui2020semiparametric,tchetgen2020introduction,singh2020kernel}
to identify the value of the target policy in POMDPs. 
See \cite{tchetgen2020introduction} for a detailed survey of proximal causal inference.
In comparison, this line of research only focuses on evaluating a single policy, whereas we focus on learning the optimal policy within a class of target policies.  
As a result, we need to handle a more challenging distributional shift problem between the behavior policy and an entire class of target policies, as opposed to a single target policy in OPE. 
However, thanks to the pessimism, we establish theory based on a partial coverage assumption that is similar to that in the OPE literature. 
To achieve such a goal, we conduct uncertainty quantification for the bridge function estimators, which is absent in the the works on OPE. As a result, our analysis is different from that in \cite{bennett2021proximal,shi2021minimax}.

\section{Preliminaries}\label{sec: prelim}

\textbf{Notations.} 
In the sequel, we use lower case letters (i.e., $s$, $a$, $o$, and $\tau$) to represent dummy variables and upper case letters (i.e., $S$, $A$, $O$, and $\Gamma$) to represent random variables. We use the variables in the calligraphic font (i.e., $\cS$, $\cA$, and $\cO$) to represent the spaces of variables, and the blackboard bold font (i.e., $\mathbb{P}$ and $\mathbb{O}$) to represent probability kernels.  

\subsection{Episodic Partially Observable Markov Decision Process}\label{subsec: POMDP}
We consider an episodic, finite-horizon POMDP, specified by a tuple $(\mathcal{S}, \mathcal{O}, \mathcal{A}, H, \mu_1, \mathbb{P},\mathbb{O}, R)$.
Here we let  $\mathcal{S}$, $\cA$, and $\cO$   denote the  state, action, and  observation spaces, respectively. 
The integer $H\in\mathbb{N}$ denotes the length of each episode.
The distribution $\mu_1\in\Delta(\mathcal{S})$ denotes the distribution of the initial state. The set $\mathbb{P}=\{\mathbb{P}_h\}_{h\in[H]}$ denotes the collection of state transition kernels where each kernel $\mathbb{P}_h(\cdot|s,a):\mathcal{S}\times\mathcal{A}\mapsto\Delta(\mathcal{S})$ characterizes the distribution of the next state $s_{h+1}$ given that the agent takes action  $a_h = a\in\mathcal{A}$ at state $s_h = s\in\mathcal{S}$ and  step $h\in[H]$.
The set $\mathbb{O}=\{\mathbb{O}_h\}_{h=1}^H$ denotes the observation  emission kernels where each kernel $\mathbb{O}_h(\cdot|s):\mathcal{S}\mapsto\Delta(\mathcal{O})$ characterizes the distribution over observations given the current state $s\in\mathcal{S}$ at step $h\in[H]$.
Finally, the set $R=\{R_h\}_{h=1}^H$ denotes the collection of reward functions where each function $R_h(s,a):\mathcal{S}\times\mathcal{A}\mapsto[0,1]$ specifies the reward the agent receives when taking action $a\in\mathcal{A}$ at state $s\in\mathcal{S}$ and step $h\in[H]$.

Different from an MDP, in a POMDP, only the observation $o$, the action $a$, and the reward $r$ are observable, while the state variable $s$ is unobservable. 
In each episode, the environment first samples an initial state $S_1$ from $\mu_1(\cdot)$.
At each step $h\in[H]$, the environment emits an observation $O_h$ from $\mathbb{O}_h(\cdot|S_h)$.
If an action $A_h$ is taken, then the environment samples the next state $S_{h+1}$ from $\mathbb{P}_h(\cdot|S_h,A_h)$ and assign a reward $R_h$ given by $R_h(S_h,A_h)$.
In our setting, we also let $O_0\in\mathcal{O}$ denote the prior observation before step $h=1$. 
We assume that $O_0$ is independent of other random variables in this episode given the first state $S_1$.

\subsection{Offline Data Generation: Confounded Dataset}\label{subsec: data generation}
Now we describe the data generation process. 
Motivated by real-world examples such as precision medicine and autonomous driving discussed in Section~\ref{sec: intro}, we assume that the offline data is generated by some behavior policy $\pi^b$ which has access to the latent states.
Specifically, we let  $\pi^b=\{\pi_h^b\}_{h=1}^H$ denote a collection of policies such that $\pi_h^b(\cdot |s):\mathcal{S}\mapsto\Delta(\mathcal{A})$ specifies  the probability of taking action each $a\in\cA $ at state $s$ and step $h$. 
This behavior policy induces a set of probability distribution $\mathcal{P}^b=\{\mathcal{P}_h^b\}_{h=1}^H$ 
on the trajectories of the POMDP, where $\mathcal{P}_h^b$ is the distribution of the variables at step $h$ when following the policy $\pi^b$. 
Formally, we assume that the offline data is denoted by $\mathbb{D}=\{(o_0^k,(o_1^k,a_1^k,r_1^k),\cdots,(o_H^k,a_H^k,r_H^k))\}_{k=1}^n$, where $n$ is the number of trajectories, and  for each $k \in [n ] $, $(o_0^k,(o_1^k,a_1^k,r_1^k),\cdots,(o_H^k,a_H^k,r_H^k))$ is independently sampled from $\mathcal{P}^b$.
We highlight that such an offline dataset is confounded since the latent state $S_h$, which is not stored  in the dataset, simultaneously affects the control variables (i.e., action $A_h$) and the outcome variables (i.e., observation $O_h$ and reward $R_h$)~\citep{pearl2009causality}.
Such a   setting is prohibitive for existing  offline RL algorithms for MDPs  as directly applying them will nevertheless incur a confounding bias that is not negligible.

\subsection{Learning Objective}\label{subsec: goal}
The goal of offline RL is to learn an optimal policy from the offline dataset  which maximizes the expected cumulative reward. 
For POMDPs, the learned policy can only depend on the observable information mentioned in Section \ref{subsec: POMDP}. 
To formally define the set policies of interest, we first define the space of observable history as $\mathcal{H}=\{\mathcal{H}_{h}\}_{h=0}^{H-1}$, where each element $\tau_h \in \mathcal{H}_h$ is a (partial) trajectory such that $\tau_h \subseteq  \{(o_1,a_1),\cdots,(o_h,a_h)\}$.

We denote by $\Pi(\cH)$ the class of policies that make decisions based on the current observation $o_h\in\mathcal{O}$ and the history information $\tau_{h-1}\in\cH_{h-1}$. 
That means, a policy $\pi=\{\pi_h\}_{h=1}^H\in\Pi(\cH)$ is a collection of policies where $\pi_h(\cdot |o,\tau):\mathcal{O}\times\mathcal{H}_{h-1}\mapsto\Delta(\mathcal{A})$ denotes the probability of taking each  action $a\in \cA $ given observation $o\in\mathcal{O}$ and history $\tau\in\cH_{h-1}$. 
The choice of $\cH$ induces the policy set $\Pi (\cH)$ by specifying the input of the policies. 
We now  introduce three  examples of $\cH$ and the corresponding $\Pi(\cH)$.

\begin{example}[Reactive policy]\label{example: reactive} The policy only depends on the current observation $O_h$. Formally, we have $\mathcal{H}_{h-1}=\{\emptyset\}$ and therefore $\tau_{h-1}=\emptyset$ for each $h\in[H]$.
\end{example}

\begin{example}[Finite-history 
policy]\label{example: finite history}
The policy depends on the current observation and the history of length at most $k$. Formally, we have $\mathcal{H}_{h-1}=(\mathcal{O}\times\mathcal{A})^{\otimes\min\{k, h-1\}}$ and 
$\tau_{h-1}=((o_l,a_l),\cdots,(o_{h-1},a_{h-1}))$ for some $k\in\mathbb{N}$, where the index $l = \max\{1, h-k\}$.
\end{example}

\begin{example}[Full-history policy]\label{example: full history}
The policy depends on the current observation and the full history.
Formally, we have
$\mathcal{H}_{h-1}=(\mathcal{O}\times\mathcal{A})^{\otimes(h-1)}$,
and $\tau_{h-1}=((o_1,a_1),\cdots,(o_{h-1},a_{h-1}))$.
\end{example}

We illustrate these examples with causal graphs and a more detailed discussion in Appendix~\ref{subsec: illustration plot}. 
Now given a policy $\pi\in\Pi(\cH)$, we denote by $J(\pi)$ the value of $\pi$ that characterizes the expected cumulative
rewards the agent receives by following $\pi$.
Formally, $J(\pi)$ is defined as 
\begin{align}\label{eq: value}
    J(\pi)\coloneqq\mathbb{E}_{\pi}\left[\sum_{h=1}^H\gamma^{h-1}R_h(S_h,A_h)\right],
\end{align}
where $\gamma\in(0,1]$ denotes the discount factor, $\mathbb{E}_{\pi}$ denotes the expectation with respect to $\mathcal{P}^\pi=\{\mathcal{P}_h^{\pi}\}_{h=1}^H$ which is the distribution of the trajectories induced by $\pi$.
We now define the suboptimality gap of any policy $\hat \pi$ as 
\begin{align}\label{eq: suboptimality}
    \textnormal{SubOpt}(\hat{\pi})\coloneqq J(\pi^{\star})-J(\hat \pi),\quad \textnormal{where}\quad \pi^{\star}\in\argmax_{\pi\in\Pi(\cH)}J(\pi).
\end{align}
Here $\pi^\star $ is the optimal policy within $\Pi (\cH)$.
Our goal is to find some policy $\hat{\pi}\in\Pi(\cH)$ that minimizes the  suboptimality gap in \eqref{eq: suboptimality} based on the offline dataset $\mathbb{D}$.

\section{Algorithm}\label{sec: algorithms}

The offline RL problem introduced in   Section~\ref{sec: prelim} for POMDPs 
suffers from three coupled challenges ---  (i) the confounding bias, (ii) distributional shift, and (iii) large observation spaces. 
Among these, the confounding bias is caused by the fact that the latent state $S_h$ simultaneous affects the action variable (i.e., $A_h$) the outcome variables (i.e., $O_h$ and $R_h$). 
The distributional shift exists between the trajectories induced by the behavior policy $\pi^b$ and the family of  target policies $\Pi(\cH)$. The challenge of large observation spaces necessitates the adoption of function approximators.  In the sequel,  we  introduce an algorithm that addresses all three challenges simultaneously. 

Offline RL for POMDPs is known to be intractable in the worst case~\citep{krishnamurthy2016pac}. So we first identify a benign class of POMDPs where the causal structure involving latent states can be captured by only the observable variables available in the dataset $\mathbb{D}$. 
For such a  class of POMDPs, by leveraging tools from proximal causal inference~\citep{lipsitch2010negative, tchetgen2020introduction}, we then seek to identify the value $J(\pi)$ of the policy $\pi \in \Pi(\mathcal{H})$ via  some confounding bridge functions $\bbb^\pi$ (Assumption~\ref{assump: bridge functions exist}) which only depend on the observable variables and thus can be estimated using  $\mathbb{D}$~(Theorem~\ref{thm: identification}). 
Identification via proximal causal inference will be  discussed in Section~\ref{subsec: identification}.

In addition, to  estimate these confounding bridge functions, 
we utilize the    the fact that these functions satisfy a sequence of conditional moment equations which resembles the Bellman equations in classical MDPs~\citep{bellman1965dynamic}. 
Then we adopt the idea of  minimax estimation \citep{dikkala2020minimax,kallus2021causal,uehara2021finite, duan2021risk} which  formulates the bridge functions as the solution to  a series of minimax optimization problems in  \eqref{eq: hat b}. 
Additionally, the loss function in   minimax estimation readily incorporates  function approximators and thus  addresses the challenge of large observation spaces.

To further handle the distributional shift, we extend the pessimism principle~\citep{PessFQI,jin2021pessimism,MACONG-2021new-Offline+IL,uehara2021pessimistic,xie2021bellman,yin2021towards,zanette2021provable} to POMDPs with the help of the confounding bridge functions.
In specific, based on the confounded dataset,  we first     constructing a novel confidence region $\CR^\pi(\xi)$ for $\bbb^\pi$ based on   level sets  with respect to the loss functions of the minimax estimation (See    \eqref{eq: define CR} for details). 
Our algorithm, \underline{P}roxy variable \underline{P}essimistic \underline{P}olicy \underline{O}ptimization (\texttt{P3O}), outputs the policy that maximizes   pessimistic estimates the values of the policies within $\Pi(\cH)$. 
Concretely,  \texttt{P3O}  outputs a policy $\hat\pi\in\Pi(\cH)$ via 
\begin{align*}
    \hat{\pi}=\argmax_{\pi\in\Pi(\cH)}\min_{\bbb\in\CR^{\pi}(\xi)}\hat F(\bbb).
\end{align*} 
where $\hat F(\cdot )$ is a functional that plays a similar  role as $J(\pi)$ and its details will be presented in  Section \ref{subsec: minimax estimator}.
The details of \texttt{P3O} is summarized by Algorithm~\ref{alg: main} in Section \ref{subsec: policy optimization}.

\subsection{Policy Value Identification via Proximal Causal Inference}\label{subsec: identification}
To handle the confounded dataset $\mathbb{D}$, we first identify the policy value $J(\pi)$ for each $\pi\in\Pi(\cH)$ using the idea of proxy variables.
Following the notions of proximal causal inference \citep{miao2018confounding,miao2018identifying,cui2020semiparametric}, we assume that there exists negative control actions $\{Z_h\}_{h=1}^H$ and negative control outcomes $\{W_h\}_{h=1}^H$ satisfying the following independence assumption.

\begin{assumption}[Negative control]\label{assump: negative control cond independence}
    We assume there exist negative control variables $\{W_h\}_{h=1}^H$ and $\{Z_h\}_{h=1}^H$ measurable w.r.t. the observed trajectories, such that under $\cP^{b}$, it holds that
    \begin{align*}
        Z_h \perp O_h, R_h, W_h,W_{h+1} \mid A_h, S_h, \Gamma_{h-1}, \quad W_h \perp A_h, \Gamma_{h-1},S_{h-1} \mid S_h.
    \end{align*}
\end{assumption}

We briefly explain the existence of such negative control variables for the three different choices of $\cH$ in Example \ref{example: reactive}, \ref{example: finite history}, and \ref{example: full history}, respectively.

\begin{example}[Example \ref{example: reactive} revisited]\label{example: reactive negative} For reactive policies, we choose the negative control action as $Z_h=O_{h-1}$ and the negative control outcome as $W_h=O_h$.
\end{example}

\begin{example}[Example \ref{example: finite history} revisited]\label{example: finite history negative}
For policies depending on a finite history of length $k$, we choose the  negative control action as
$Z_h=O_{l-1}$ and the negative control outcome as $W_h=O_h$, where the index $l = \max\{1, h-k\}$.
\end{example}

\begin{example}[Example \ref{example: full history} revisited]\label{example: full history negative}
For policies depending on the  full history,
we choose  the negative control action as $Z_h=O_0$ and the  negative control outcome as $W_h=O_h$.
\end{example}

These examples are discussed in detail and illustrated with causal graphs in Appendix \ref{subsec: illustration plot}, where we also show that all three examples provably satisfy Assumption~\ref{assump: negative control cond independence}. 
We note that by adopting the notion of negative control variables, we are featuring more generalities. 
Exploring other possible choices of negative control variables is an interesting future direction.

Besides Assumption \ref{assump: negative control cond independence}, 
our identification of policy value also relies on the notion of confounding bridge functions \citep{kallus2021causal, shi2021minimax}, for which we make the following assumption.

\begin{assumption}[Confounding bridge functions]\label{assump: bridge functions exist}
    For any history-dependent policy $\pi\in\Pi(\cH)$, we assume the existence of the value bridge functions $\{b^{\pi}_h:\mathcal{A}\times\mathcal{W}\mapsto\mathbb{R}\}_{h=1}^H$ and the weight bridge functions $\{q^{\pi}_h:\mathcal{A}\times\mathcal{Z}\mapsto\mathbb{R}\}_{h=1}^H$ which are defined as the solution to the following equations: $\cP^{b}$-almost surely, 
\begin{align}\label{eq: def value bridge}
    \EE_{\pi^b}\left[b^{\pi}_{h}(A_h, W_h)\middle| A_h, Z_h\right] = \EE_{\pi^b}\Big[ R_h \pi_h(A_h|O_h, \Gamma_{h-1}) + \gamma \sum_{a'}b^{\pi}_{h+1}(a', W_{h+1}) \pi_h(A_h|O_h, \Gamma_{h-1}  )\Big| A_h, Z_h \Big],
\end{align}
\begin{align}\label{eq: def weight bridge}
    \EE_{\pi^b}\left[ q^{\pi}_h(A_h, Z_h) \middle| A_h, S_h, \Gamma_{h-1}  \right] = \frac{\mu_h(S_h, \Gamma_{h-1})}{\pi^b_h(A_h|S_h)},
\end{align}
where $b_{H+1}^{\pi}$ is a zero function and $\mu_h(S_h, \Gamma_{h-1})$ in \eqref{eq: def weight bridge} is defined as
\begin{align}\label{eq: def mu_h}
    \mu_h(S_h, \Gamma_{h-1}) \coloneqq \frac{\cP^{\pi}_h (S_h, \Gamma_{h-1}) }{\cP^{b}_h (S_h, \Gamma_{h-1})} . 
\end{align}
\end{assumption}


We use ``confounding bridge function'' and ``bridge function'' interchangeably throughout the paper. 
We remark that in the proximal causal inference literature, the existence of such bridge functions bears more generality than assuming certain complex completeness conditions \citep{cui2020semiparametric}, 
as discussed by \cite{kallus2021causal}.
The existence of such bridge functions is justified, e.g., by conditions on the the rank of certain conditional probabilities or singular values of certain conditional expectation linear operators.
We present the following examples to explain the existence in the tabular case with reactive policies.
We refer the readers to \cite{kallus2021causal} for more detailed discussions on the conditions for the existence of bridge functions.

\begin{example}[Example \ref{example: reactive} revisited]\label{exampe: bridge exist}
For the tabular setting (i.e., $\mathcal{S}$, $\mathcal{A}$, and $\mathcal{O}$ are finite spaces) and reactive policies (i.e., $\pi_h:\mathcal{O}\mapsto\Delta(\mathcal{A})$), the sufficient condition under which Assumption \ref{assump: bridge functions exist} holds is that 
\begin{align}\label{eq: example rank condition}
    \mathbf{rank}(\mathcal{P}_h^{b}(O_h|S_h))=|\mathcal{S}|,\quad \mathbf{rank}(\mathcal{P}_h^{b}(O_{h-1}|S_h))=|\mathcal{S}|,
\end{align}
where $\mathcal{P}_h(O_h|S_h)$ denote an $|\mathcal{S}|\times|\mathcal{O}|$ matrix whose $(s,o)$-th element is $\mathcal{P}_h^b(O_h=o|S_h=s)$, and $\mathcal{P}_h^b(O_{h-1}|S_h)$ is defined similarly. See Appendix \ref{subsec: proximal causal inference proof} for a deduction from condition \eqref{eq: example rank condition} to Assumption \ref{assump: bridge functions exist}.
\end{example}

\begin{example}[Example \ref{example: finite history} revisited]\label{exampe: bridge exist 2}
    For the tabular setting (i.e., $\mathcal{S}$, $\mathcal{A}$, and $\mathcal{O}$ are finite spaces) and finite length policies (i.e., $\pi_h:\mathcal{O}\times(\mathcal{O}\times\mathcal{A})^{\min\{k, h-1\}}\mapsto\Delta(\mathcal{A}))$, the sufficient condition for which Assumption~\ref{assump: bridge functions exist} holds is that, for any action $a\in\mathcal{A}$, 
    \begin{align}\label{eq: example rank condition 2}
        \mathbf{rank}(\mathcal{P}_h^b(O_h|A_h = a, O_{h-k-1})) = |\mathcal{O}|,\quad \mathbf{rank}(\mathcal{P}_h^b(O_{h-k-1}|A_h = a, S_h, \Gamma_{h-1})) = |\mathcal{O}|,
    \end{align}
    where $\mathcal{P}_h^b(O_h|A_h = a, O_{h-k-1})$ is a $|\mathcal{O}|\times|\mathcal{O}|$ matrix with $(o,o')$-th element is $\mathcal{P}_h^b(O_h = o|A_h = a, O_{h-k-1} = o')$ and $\mathcal{P}_h^b(O_{h-k-1}|A_h = a, S_h, \Gamma_{h-1})$ is a $|\mathcal{S}||\mathcal{H}_{h-1}|\times|\mathcal{O}|$ matrix defined similarly.
    Refer to Appendix \ref{subsec: proximal causal inference proof} for a deduction from condition \eqref{eq: example rank condition 2} to Assumption \ref{assump: bridge functions exist}.
\end{example}

Now given Assumption \ref{assump: negative control cond independence} and Assumption \ref{assump: bridge functions exist} on the existence of proxy variables and bridge functions, we are ready to present the main identification result. 
It represents the true policy value $J(\pi)$ via the value bridge functions \eqref{eq: def value bridge},
as is concluded in the following theorem.

\begin{theorem}[Identification of policy value]\label{thm: identification}
Under Assumption \ref{assump: negative control cond independence} and \ref{assump: bridge functions exist}, for any history-dependent policy $\pi\in\Pi(\cH)$, it holds that
\begin{align}\label{eq: identifiaction}
    J(\pi)=\mathbb{E}_{\pi^b}\left[\sum_{a\in\mathcal{A}}b^{\pi}_{1}(a,W_1)\right].
\end{align}
\end{theorem}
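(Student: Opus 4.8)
The plan is to prove the identity \eqref{eq: identifiaction} by backward induction on $h$, establishing the more refined claim that for every $h \in [H]$,
\#
\EE_{\pi^b}\Bigl[\sum_{a \in \cA} b^\pi_h(a, W_h) \,\Big|\, S_h, \Gamma_{h-1}\Bigr] = \EE_\pi\Bigl[\sum_{t=h}^H \gamma^{t-h} R_t(S_t, A_t) \,\Big|\, S_h, \Gamma_{h-1}\Bigr] \;=:\; V^\pi_h(S_h, \Gamma_{h-1}),
\#
after which one takes expectations over $(S_1, \Gamma_0)$ under $\mu_1$ (noting $\Gamma_0$ is the prior observation $O_0$ together with an empty history, and that the target-policy marginal of $(S_1,\Gamma_0)$ agrees with the behavior-policy marginal since neither depends on the policy). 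The $h = H+1$ base case is trivial because $b^\pi_{H+1} \equiv 0$. The crux is the inductive step: assuming the claim at step $h+1$, derive it at step $h$.

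For the inductive step I would start from the defining moment equation \eqref{eq: def value bridge} for $b^\pi_h$. The key maneuver is to move from a statement conditioned on $(A_h, Z_h)$ to one conditioned on $(A_h, S_h, \Gamma_{h-1})$. Here is where the first part of Assumption~\ref{assump: negative control cond independence}, namely $Z_h \perp O_h, R_h, W_h, W_{h+1} \mid A_h, S_h, \Gamma_{h-1}$, does the work: it implies that for any function of $(A_h, W_h)$ — in particular $b^\pi_h(A_h,W_h)$ — and any function of $(O_h, R_h, W_{h+1})$, the conditional expectation given $(A_h, Z_h)$ can be "lifted" through $(A_h, S_h, \Gamma_{h-1})$ by the tower property, so that the moment equation \eqref{eq: def value bridge}, which holds $\cP^b$-almost surely in $(A_h,Z_h)$, forces the corresponding equality of the conditional expectations given $(A_h, S_h, \Gamma_{h-1})$. (More carefully: if $\EE[\alpha \mid A_h,Z_h] = \EE[\beta \mid A_h, Z_h]$ where $\alpha$ is $(A_h,W_h)$-measurable and $\beta$ is a function of $(A_h, O_h, R_h, W_{h+1})$, and both $W_h, (O_h,R_h,W_{h+1})$ are conditionally independent of $Z_h$ given $(A_h, S_h, \Gamma_{h-1})$, then $\EE[\alpha \mid A_h, S_h, \Gamma_{h-1}] = \EE[\beta \mid A_h, S_h, \Gamma_{h-1}]$, provided a suitable completeness/richness of $Z_h$ — but since we are only \emph{using} the equation rather than solving for $b^\pi_h$, the plain tower property suffices in this direction.) This yields
\#
\EE_{\pi^b}\bigl[b^\pi_h(A_h, W_h) \mid A_h, S_h, \Gamma_{h-1}\bigr] = \EE_{\pi^b}\Bigl[ R_h\,\pi_h(A_h \mid O_h, \Gamma_{h-1}) + \gamma \sum_{a'} b^\pi_{h+1}(a', W_{h+1})\,\pi_h(A_h \mid O_h, \Gamma_{h-1}) \,\Big|\, A_h, S_h, \Gamma_{h-1}\Bigr].
\#

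Next I would use the second part of Assumption~\ref{assump: negative control cond independence}, $W_h \perp A_h, \Gamma_{h-1}, S_{h-1} \mid S_h$, to show that $\EE_{\pi^b}[b^\pi_h(A_h, W_h) \mid A_h, S_h, \Gamma_{h-1}]$, after summing over $a \in \cA$ against the reweighting that converts the behavior action distribution to the target one, collapses to a function of $(S_h, \Gamma_{h-1})$ alone matching the left-hand side of the inductive claim; simultaneously, on the right-hand side, I push the conditional expectation through: the terms $R_h \pi_h(A_h \mid O_h, \Gamma_{h-1})$ and $b^\pi_{h+1}(a', W_{h+1}) \pi_h(A_h \mid O_h, \Gamma_{h-1})$ are handled by first conditioning on $(S_h, A_h, \Gamma_{h-1})$ and then on $(S_h, \Gamma_{h-1})$, summing over $A_h$, applying the inductive hypothesis to replace $\sum_{a'} b^\pi_{h+1}(a', W_{h+1})$ by $V^\pi_{h+1}(S_{h+1}, \Gamma_h)$ inside the appropriate conditional expectation (using that $W_{h+1} \perp A_h, \Gamma_h, S_h \mid S_{h+1}$, the step-$(h+1)$ version of the same assumption, to match the conditioning sets), and finally recognizing the resulting expression as the Bellman backup $\EE_\pi[R_h + \gamma V^\pi_{h+1} \mid S_h, \Gamma_{h-1}] = V^\pi_h(S_h, \Gamma_{h-1})$. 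The policy-reweighting bookkeeping — verifying that summing $\EE_{\pi^b}[\,\cdot\, \pi_h(A_h \mid O_h, \Gamma_{h-1})\mid \cdots]$ over the behavior-sampled $A_h$ reproduces an expectation under $\pi$ at step $h$ while keeping the state dynamics (which are policy-independent given $(S_h, A_h)$) intact — is the routine but delicate part.

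The main obstacle I anticipate is precisely the careful management of conditioning sets and the "history" variable $\Gamma_{h-1}$: one must be consistent about what $\Gamma_{h-1}$ contains (it is determined by $\tau_{h-1}$, hence is a deterministic function of past observations and actions), verify that the conditional-independence relations in Assumption~\ref{assump: negative control cond independence} are compatible with the causal graph of Figure~\ref{fig: reactive simple} at every step, and ensure that the tower-property manipulations are valid — in particular that $O_h$ (which enters via $\pi_h(A_h \mid O_h, \Gamma_{h-1})$) is conditionally independent of $Z_h$ given $(A_h, S_h, \Gamma_{h-1})$, which is exactly what the first display in Assumption~\ref{assump: negative control cond independence} grants. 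Everything else is bookkeeping with iterated expectations; no concentration or function-approximation argument is needed for this identification statement.
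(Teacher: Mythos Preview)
Your proposal has a genuine gap at the pivotal step where you pass from the moment equation conditioned on $(A_h,Z_h)$ to the same equation conditioned on $(A_h,S_h,\Gamma_{h-1})$. You write that ``since we are only \emph{using} the equation rather than solving for $b^\pi_h$, the plain tower property suffices in this direction,'' but this is backwards. By the conditional independence in Assumption~\ref{assump: negative control cond independence}, the tower property gives you
\[
\EE_{\pi^b}[\alpha\mid A_h,Z_h]=\EE_{\pi^b}\bigl[\EE_{\pi^b}[\alpha\mid A_h,S_h,\Gamma_{h-1}]\,\big|\,A_h,Z_h\bigr],
\]
and similarly for the right-hand side $\beta$. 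So the defining equation \eqref{eq: def value bridge} becomes an equality of the form $\EE_{\pi^b}[f(A_h,S_h,\Gamma_{h-1})\mid A_h,Z_h]=\EE_{\pi^b}[g(A_h,S_h,\Gamma_{h-1})\mid A_h,Z_h]$. To conclude $f=g$ a.s.\ you need injectivity of the operator $h\mapsto\EE_{\pi^b}[h\mid A_h,Z_h]$ on functions of $(A_h,S_h,\Gamma_{h-1})$ --- i.e., a completeness condition on $Z_h$ --- which is \emph{not} part of Assumptions~\ref{assump: negative control cond independence} or~\ref{assump: bridge functions exist}. Indeed the paper explicitly notes that assuming existence of the bridge functions ``bears more generality than assuming certain complex completeness conditions.''

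The fix, and what the paper does, is to use the \emph{weight} bridge function $q^\pi_h$ from \eqref{eq: def weight bridge}, which your proposal never invokes. The paper writes each $J_h(\pi)=\EE_\pi[R_h]$, changes to $\EE_{\pi^b}$ via the density ratio $\mu_h/\pi^b_h$, replaces that ratio by $\EE_{\pi^b}[q^\pi_h(A_h,Z_h)\mid A_h,S_h,\Gamma_{h-1}]$, and then uses the conditional independence to peel off the conditioning so that $q^\pi_h(A_h,Z_h)$ multiplies $R_h\pi_h(A_h\mid O_h,\Gamma_{h-1})$ inside $\EE_{\pi^b}[\cdot]$. Conditioning on $(A_h,Z_h)$ then lets one substitute the value-bridge equation \eqref{eq: def value bridge} \emph{as written}, without ever needing to lift it to conditioning on $S_h$. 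Summing over $h$ produces a telescoping identity (the paper's $\Delta_h=0$ computation) that yields the claim. In short: the existence of $q^\pi_h$ is precisely the surrogate for completeness that makes the identification go through under Assumption~\ref{assump: bridge functions exist}, and any correct proof must use it (or an additional completeness hypothesis you have not assumed).
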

\begin{proof}[Proof of Theorem \ref{thm: identification}]
See Appendix \ref{sec: proof identification} for a detailed proof.
\end{proof}

Note that though we have assumed the existence of both the value bridge functions \eqref{eq: def value bridge} and the weight bridge functions \eqref{eq: def weight bridge}, Theorem \ref{thm: identification} represents $J(\pi)$ using only the value bridge functions. 
In equation \eqref{eq: def value bridge} all the random variables  involved are observed by the learner and distributed according to  the data distribution $\mathcal{P}^b$, by which the value bridge functions can be estimated from data. 
This overcomes the confounding issue.

\subsection{Policy Evaluation via Minimax Estimation with Uncertainty Quantification}
\label{subsec: minimax estimator}

According to Theorem~\ref{thm: identification} and Assumption~\ref{assump: bridge functions exist},
to estimate the value $J(\pi)$ of $\pi\in\Pi(\cH)$, it suffices to estimate the value bridge functions $\{b_{h}^{\pi}\}_{h=1}^H$ by solving \eqref{eq: def value bridge}, which is a conditional moment equation. To this end, we adopt the method of minimax estimation  \citep{dikkala2020minimax,uehara2021finite, duan2021risk}. 
Furthermore, in order to handle the distributional shift between behavior policy and target policies, we construct a sequence of confidence regions for $\{b_{h}^{\pi}\}_{h=1}^H$ based on minimax estimation, which allows us to apply the pessimism principle by finding the most pessimistic estimates within the confidence regions.

Specifically,   minimax estimation involves two function classes $\mathbb{B}\subseteq\{b:\mathcal{A}\times\mathcal{W}\mapsto\mathbb{R}\}$ and $\mathbb{G}\subseteq\{g:\mathcal{A}\times\mathcal{Z}\mapsto\mathbb{R}\}$, interpreted as the primal and dual function classes, respectively.
Theoretical assumptions on $\mathbb{B}$ and $\mathbb{G}$ are presented in Section~\ref{sec: theoretical results}.
In order to find functions that satisfy   \eqref{eq: def value bridge}, it suffices to find $\mathbf{b}=(b_1,\cdots,b_H)\in\mathbb{B}^{\otimes H}$ such that the following conditional moment
\begin{align}\label{eq: bellman}
    \ell^{\pi}_h(b_h,b_{h+1})(A_h,Z_h)&\coloneqq \mathbb{E}_{\pi^{b}}\Big[b_{h}(A_h,W_h)-R_h\pi_h(A_h|O_h,\Gamma_{h-1}) \notag\\
    & \qquad -\gamma\sum_{a^{\prime}\in\cA}b_{h+1}(a^{\prime},W_{h+1})\pi_h(A_h|O_h,\Gamma_{h-1})\Big|A_h,Z_h\Big]
\end{align}
is equal to zero almost surely for all $h\in[H]$, where $b_{H+1}$ is a zero function.
Intuitively, the quantity \eqref{eq: bellman} can be interpreted as the ``Bellman residual" of the value bridge functions $\bbb$.
Notice that \eqref{eq: bellman} being zero   almost surely for all $h\in[H]$ is actually a conditional moment equation, which is
equivalent to finding   $\bbb\in\mathbb{B}^{\otimes H}$ such that the following residual mean squared error (RMSE) is minimized  for all $h$:
\begin{align}\label{eq: RMSE loss} 
    \mathcal{L}_h^{\pi}(b_h,b_{h+1})\coloneqq\mathbb{E} _{\pi^b}\bigl [ \bigl( \ell_h^\pi(b_h,b_{h+1})(A_h,Z_h) \bigr )^2  \bigr ]    .
\end{align}

It might seem tempting to directly minimize the empirical version of \eqref{eq: RMSE loss}. 
However, this is not viable as one would obtain a biased estimator due to an additional variance term. 
The reason is that the quantity defined by \eqref{eq: bellman} is a conditional expectation and therefore RMSE defined by \eqref{eq: RMSE loss} cannot be directly unbiasedly estimated from data  \citep{farahmand2016regularized}.
In the sequel, we adopt the technique of  minimax estimation to circumvent this issue.
In particular, we first use Fenchel duality to write \eqref{eq: RMSE loss} as 
\begin{align}\label{eq: RMSE 2}
    \mathcal{L}_h^{\pi}(b_h,b_{h+1})=4\lambda \mathbb{E}_{\pi^b}\left[\max_{g\in\mathbb{G}}\ell_h^{\pi}(b_h,b_{h+1})(A_h,Z_h)\cdot g(A_h,Z_h)-\lambda g(A_h,Z_h)^2\right],\ \lambda>0,
\end{align}
which holds when the dual function class $\mathbb{G}$ is expressive enough such that $\ell_h^{\pi}(b_h,b_{h+1})/2\lambda\in\mathbb{G}$.
Then thanks to the interchangeability principle \citep{rockafellar2009variational,dai2017learning,shapiro2021lectures}, we can interchange the order of maximization and expectation and derive that 
\begin{align}\label{eq: RMSE 3}
    \mathcal{L}_h^{\pi}(b_h,b_{h+1})=4\lambda \max_{g\in\mathbb{G}}\mathbb{E}_{\pi^b}\left[\ell_h^{\pi}(b_h,b_{h+1})(A_h,Z_h)\cdot g(A_h,Z_h)-\lambda g(A_h,Z_h)^2\right].
\end{align}
The core idea of minimax estimation is to minimize the empirical version of \eqref{eq: RMSE 3} instead of \eqref{eq: RMSE loss}, and
the benefit of doing so is a fast statistical rate of $\tilde{\mathcal{O}}(n^{-1/2})$ \citep{dikkala2020minimax, uehara2021finite}, as we can see in the sequel.
For simplicity, in the following, we define $\Phi^{\lambda}_{\pi,h}:\mathbb{B}\times\mathbb{B}\times\mathbb{G}\mapsto\mathbb{R}$ with parameter $\lambda>0$ as
\begin{align}\label{eq: population phi lambda}
    \Phi_{\pi,h}^{\lambda}(b_h,b_{h+1};g)\coloneqq\mathbb{E}_{\pi^b}\big[\ell_h^{\pi}(b_h,b_{h+1})(A_h,Z_h)\cdot g(A_h,Z_h)-\lambda g(A_h,Z_h)^2\big],
\end{align}
and we denote by $\hat{\Phi}_{\pi,h}^{\lambda}:\mathbb{B}\times\mathbb{B}\times\mathbb{G}\mapsto\mathbb{R}$ the empirical version of $\Phi_{\pi,h}^{\lambda}$, i.e.,
\begin{align}\label{eq: empirical phi lambda}
    \hat{\Phi}^{\lambda}_{\pi,h}(b_{h},&b_{h+1};g)\coloneqq\hat{\mathbb{E}}_{\pi^b}\Big[\Big(b_{h}(A_h,W_h)- R_h \pi_h (A_h|O_h,\Gamma_{h-1}) \notag\\
    &\qquad -\gamma\sum_{a^{\prime}\in\cA}b_{h+1}(a^{\prime},W_{h+1})\pi_h(A_h|O_h,\Gamma_{h-1})\Big)\cdot g(A_h,Z_h)-\lambda g(A_h,Z_h)^2\Big],
\end{align}
where $\hat{\mathbb{E}}_{\pi^b}$ denotes the empirical version of $\EE_{\pi^b}$ based on dataset $\mathbb{D}$ described in Section \ref{subsec: data generation}.

Furthermore, note  that the  value bridge functions $(b_1^{\pi},\cdots,b_h^{\pi})$ admit a sequential dependence structure.
To handle such dependency, for  any $\pi\in\Pi(\cH)$, $h\in[H]$,  and $b_{h+1}\in\mathbb{B}$, we first define the minimax estimator $\hat{b}_h(b_{h+1})$ as
\begin{align}\label{eq: hat b}
    \hat{b}_h(b_{h+1})\coloneqq\argmin_{b\in\mathbb{B}} \Bigl\{ \max_{g\in\mathbb{G}}\hat{\Phi}_{\pi,h}^{\lambda}(b_h,b_{h+1};g) \Bigr \}.
\end{align}
Based on \eqref{eq: hat b}, we propose a confidence region for $\bbb^\pi\coloneqq(b_1^{\pi},\cdots,b_H^{\pi})\in\mathbb{B}^{\otimes H}$ as
\begin{align}\label{eq: define CR}
    \CR^\pi(\xi) \coloneqq \left\{ \bbb \in \mathbb{B}^{\otimes H}\middle|\ \max_{g \in \mathbb{G}} \hat{\Phi}_{\pi, h}^\lambda ( b_h, b_{h+1};g) -  \max_{g \in \mathbb{G}}  \hat{\Phi}_{\pi, h}^\lambda(\hat{b}_{h}(b_{h+1}), b_{h+1};g) \leq \xi, \forall  h\right\}.
\end{align}
From the above definition, one can see that $\CR^{\pi}(\xi)$ is actually a coupled sequence of $H$ confidence regions, where each single confidence region aims to cover a function $b_h^\pi$. 
For notational simplicity, we use a single notation $\CR^{\pi}(\xi)$ to denote all the $H$ confidence regions.
Intuitively, the confidence region $\CR^{\pi}(\xi)$ contains all $\bbb\in\mathbb{B}^{\otimes H}$ whose RMSE   does not exceed that of $(\hat{b}_{h}(b_{h+1}), \ b_{h+1})$ by too much at each $h\in[H]$.
The confidence region takes the sequential dependence of confounding bridge functions into consideration in the sense that each $\bbb\in\CR^{\pi}(\xi)$ is restricted through the minimax estimation loss between continuous steps.
As we show in Section \ref{sec: proof sketch}, with high probability, the confidence region $\CR^\pi(\xi)$ contains the true bridge value functions $\bbb^{\pi}$. 
More importantly, every $\bbb\in\CR^{\pi}(\xi)$ enjoys a fast statistical rate of $\tilde{\mathcal{O}}(n^{-1/2})$.

Now combining the confidence region \eqref{eq: define CR} and the identification formula \eqref{eq: identifiaction}, for any policy $\pi\in\Pi(\cH)$, we adopt an pessimistic estimate of the value of $J(\pi)$ as 
\begin{align}\label{eq: pess J}
    \hat{J}_{\textnormal{Pess}}(\pi)\coloneqq  \min_{\bbb\in\CR^{\pi}(\xi)} \hat F(\bbb) , \ \ \textnormal{where} \ \ \hat F(\bbb) \coloneqq \hat{\EE}_{\pi^b}\left[\sum_{a\in\mathcal{A}}b_1(a,W_1)\right] .
\end{align}

\subsection{Policy Optimization}\label{subsec: policy optimization}
Finally, given the pessimistic value estimate \eqref{eq: pess J}, \texttt{P3O} chooses $\hat{\pi}$ which maximizes $\hat{J}_{\textnormal{Pess}}(\pi)$, that is,
\begin{align}\label{eq: def hat pi}
     \hat\pi \coloneqq \argmax_{\pi\in\Pi(\cH)}\, \hat{J}_{\textnormal{Pess}}(\pi).
\end{align}
We summarize the entire \texttt{P3O} algorithm in Algorithm \ref{alg: main}.
In Section \ref{sec: theoretical results}, we show that under some mild assumptions on the function classes $\mathbb{B}$ and $\mathbb{G}$ and under only a partial coverage assumption on the dataset $\mathbb{D}$, we can show that the suboptimality \eqref{eq: suboptimality} of Algorithm \ref{alg: main}
decays at the fast statistical rate of $\tilde{\mathcal{O}}(n^{-1/2})$, where $\tilde \cO(\cdot)$ omits $H$ and factors that characterize the complexity of the function classes.

\begin{algorithm}[t]
	\caption{Proxy variable Pessimistic Policy Optimization (\texttt{P3O})}
	\label{alg: main}
	\begin{algorithmic}[1]
	\STATE \textbf{Input}: confidence parameter $\xi>0$, regularization parameter $\lambda>0$, dataset $\mathbb{D}$, classes $\mathbb{B}$ and $\mathbb{G}$.
    \STATE Construct minimax estimator confidence region $\CR^{\pi}(\xi)$ for each policy $\pi\in\Pi(\cH)$ by \eqref{eq: define CR}.
    \STATE \textcolor{red1}{\bf Policy evaluation:} pessimistically estimate value $\hat{J}_{\textnormal{Pess}}(\pi)$ for each policy $\pi\in\Pi(\cH)$ by \eqref{eq: pess J}.
    \STATE \textcolor{red1}{\bf Policy optimization:} set $\hat{\pi}$ by \eqref{eq: def hat pi}.
    \STATE \textbf{Output}: $\hat\pi=\{\hat\pi_h\}_{h=1}^H$.
	\end{algorithmic}
\end{algorithm}

\section{Theoretical Results}\label{sec: theoretical results}
In this section, we present our theoretical results. 
For ease of presentation, we first assume that both the primal function class $\mathbb{B}$ and the policy class $\Pi(\cH)$ are finite sets with cardinality $|\mathbb{B}|$ and $|\Pi(\cH)|$, respectively. But we allow the dual function class $\mathbb{G}$ to be an infinite set.
Our results can be easily extended to infinite $\mathbb{B}$ and $\Pi(\cH)$ using the notion of covering numbers \citep{wainwright2019high}, which we demonstrate with linear function approximation in Section \ref{sec: linear case}.

We first introduce some necessary assumptions for efficient learning of the optimal policy.
To begin with, the following Assumption \ref{assump: partial coverage} ensures that the offline data generated by $\pi^b$ has a good coverage over $\pi^{\star}$. The problem would become intractable without such an assumption~\citep{chen2019information,jin2021pessimism}.

\begin{assumption}[Partial coverage]\label{assump: partial coverage} 
We assume that the concentrability coefficient for the optimal policy $\pi^{\star}$, defined as $C^{\pi^{\star}}\coloneqq\max_{h\in[H]}\mathbb{E}_{\pi^b}\left[(q^{\pi^{\star}}_h(A_h,Z_h))^2\right]$, satisfies that $C^{\pi^{\star}}<+\infty$.
\end{assumption}

Very importantly, Assumption \ref{assump: partial coverage} only assumes the partial coverage, i.e., the optimal policy $\pi^\star$ is well covered by $\pi^b$~\citep{jin2021pessimism,uehara2021pessimistic}, which is significantly weaker than the uniform coverage, i.e., the entire policy class $\Pi(\cH)$ is covered by $\pi^b$~\citep{munos2008finite, chen2019information} in the sense that $\max_{\pi \in \Pi(\cH)} C^{\pi} < +  \infty.$. 

The next assumption is on the functions classes $\BB$ and $\GG$. 
We require that $\BB$ and $\GG$ are uniformly bounded, and that $\GG$ is symmetric, star-shaped, and has bounded localized Rademacher complexity.

\begin{assumption}[Function classes $\mathbb{B}$ and $\mathbb{G}$]\label{assump: dual function class}
    We assume the classes $\mathbb{B}$ and $\mathbb{G}$ satisfy that:
    i) There exist $M_{\mathbb{B}},M_{\mathbb{G}}<+\infty$ such that $\mathbb{B}$, $\mathbb{G}$ are bounded by $
    \sup_{b\in\mathbb{B}}\sup_{w\in\mathcal{W}}|\sum_{a\in\mathcal{A}}b(a,w)|\leq M_{\mathbb{B}}
    $\footnote{The constant $\M_{\BB}$ might seem to be proportional to $|\cA|$ due to the summation $\sum_{a \in \cA} b(a,w)$, but it is not. The reason is that the definition of the true confounding bridge function in \eqref{eq: def value bridge} involves a product with $\pi_h(\cdot|O_h, \Gamma_{h-1})$ which is a distribution over $\cA$. Thus the summation over $\cA$ is essentially an average over $\cA$.},
    $\sup_{g\in\mathbb{G}}\sup_{(a,z)\in\mathcal{A}\times\mathcal{Z}}|g(a,z)|\leq M_{\mathbb{G}}$;
    ii) $\mathbb{G}$ is star-shaped, i.e., for any $g\in\mathbb{G}$ and $\lambda\in[0,1]$, it holds that $\lambda g\in\mathbb{G}$; 
    iii) $\mathbb{G}$ is symmetric, i.e., for any $g\in\mathbb{G}$, it holds that $-g\in\mathbb{G}$; 
    iv) For any step $h\in[H]$, $\mathbb{G}$ has bounded critical radius $\alpha_{\mathbb{G},h,n}$ which solves inequality $\mathcal{R}_n(\mathbb{G};\alpha)\leq \alpha^2 /M_{\mathbb{G}}$, where $\mathcal{R}_n(\mathbb{G},\alpha)$ is the localized population Rademacher complexity \comment{\footnote{The localized rademacher complexity \citep{wainwright2019high} generalizes the standard rademacher complexity, which is key to the development of sharp statistical rates. We refer the readers to Appendix \ref{sec: auxiliary lemmas} for more about this complexity measure.}} of $\mathbb{G}$ under the distribution of $(A_h,Z_h)$ induced by $\pi^b$, that is,
    \begin{align*}
     \mathcal{R}_n(\mathbb{G},\alpha)=\mathbb{E}_{\pi^b,\epsilon_i}\bigg[\sup_{g\in\mathbb{G}:\|g\|_2\leq \alpha}\bigg|\frac{1}{n}\sum_{i=1}^n\epsilon_i g(A_h,Z_h)\bigg|\Bigg],
    \end{align*}
    with $\|g\|_2$ defined as $(\mathbb{E}_{\pi^b}[g^2(A_h,Z_h)])^{1/2}$, random variables $\{\epsilon_i\}_{i=1}^n$ independent of $(A_h,Z_h)$ and independently uniformly distributed on $\{+1,-1\}$. Also, we denote $\alpha_{\mathbb{G},n} \coloneqq\max_{h\in[H]}\alpha_{\mathbb{G},h,n}$.
\end{assumption}

Finally, to ensure that the minimax estimation discussed in Section \ref{subsec: minimax estimator} learns the value bridge functions unbiasedly, 
we make the following completeness and realizability assumptions on the function classes $\mathbb{B}$ and $\mathbb{G}$ which are standard in the literature~\citep{dikkala2020minimax,xie2021bellman, shi2021minimax}.

\begin{assumption}[Completeness and realizability]\label{assump: completeness and realizability}
        We assume that, i) completeness: for any  $h\in[H]$, any $\pi \in \Pi(\cH)$, and any  $b_h,b_{h+1}\in\mathbb{B}$, it holds that $\frac{1}{2\lambda}\ell_h^{\pi}(b_h,b_{h+1})\in\mathbb{G}$
    where $\ell_h^{\pi}$ is defined in \eqref{eq: bellman}; ii) realizability: for any $h\in[H]$, any $\pi \in \Pi(\cH)$, and any $b_{h+1} \in \mathbb{B}$, there exists $b^{\star} \in \mathbb{B}$ such that $\cL_h^{\pi} (b^{\star},b_{h+1}) \leq \epsilon_{\mathbb{B}}$ for some $ \epsilon_{\mathbb{B}}<+\infty$, i.e., we assume that
    \begin{align*}
        0\leq \epsilon_{\mathbb{B}}\coloneqq\max_{h\in[H],\pi\in\Pi(\mathcal{H}),b_{h+1}\in\mathbb{B}}\,\min_{b_h\in\mathbb{B}}\,\,\mathbb{E}_{\pi^b}\big[\ell_h^{\pi} (b_h,b_{h+1})(A_h,Z_h)^2\big] <+\infty.
    \end{align*}
\end{assumption}

Here the completeness assumption means the dual function class $\GG$ is rich enough 
which guarantees the equivalence between $\cL_h^{\pi}(\cdot,\cdot)$ and $\max_{g\in\GG}\Phi_{\pi,h}^{\lambda}(\cdot,\cdot;g)$.
The realizability assumption means that the primal function class $\mathbb{B}$ is rich enough such that \eqref{eq: def value bridge} always admits an (approximate) solution.

With these technical assumptions, we can establish our main theoretical results in the following theorem, which gives an upper bound of the suboptimality \eqref{eq: suboptimality} of the policy $\hat\pi$ output by Algorithm~\ref{alg: main}.

\begin{theorem}[Suboptimality]\label{thm: suboptimality}
Under Assumptions \ref{assump: negative control cond independence}, \ref{assump: bridge functions exist}, 
\ref{assump: partial coverage},
\ref{assump: dual function class}, and
\ref{assump: completeness and realizability},
by setting the regularization parameter $\lambda$ and the confidence parameter $\xi$ as $\lambda=1$ and  
\begin{align*}
 \xi=C_1\cdot M_{\mathbb{B}}^2M_{\mathbb{G}}^2 \cdot \log(|\mathbb{B}||\Pi(\cH)|H/\zeta) / n,
\end{align*}
then probability at least $1-3\delta$, it holds that
\begin{align*}
    \textnormal{SubOpt}(\hat{\pi})\leq
   C_1^{\prime}\sqrt{C^{\pi^{\star}}}H\cdot M_{\mathbb{B}}M_{\mathbb{G}}\cdot \sqrt{ \log(|\mathbb{B}||\Pi(\cH)|H/\zeta) / n}+C_1^{\prime}\sqrt{C^{\pi^{\star}} M_{\mathbb{G}}}H \epsilon_{\mathbb{B}}^{1/4},
\end{align*}
where $\zeta = \min\{\delta,4c_1\exp(-c_2n\alpha_{\mathbb{G},n}^2)\}$.
Here $C^{\pi^{\star}}$, $\alpha_{\mathbb{G},n}$, and $\epsilon_{\mathbb{B}}$ are defined in Assumption \ref{assump: partial coverage}, \ref{assump: dual function class}, and \ref{assump: completeness and realizability}, respectively. 
And $C_1$, $C'_1$, $c_1$, and $c_2$ are some problem-independent universal constants.
\end{theorem}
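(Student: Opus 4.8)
The plan is to run the standard pessimism template, but in the bridge‑function coordinates. First I would split
\begin{align*}
 \textnormal{SubOpt}(\hat\pi)=\underbrace{\big(J(\pi^\star)-\hat{J}_{\textnormal{Pess}}(\pi^\star)\big)}_{\textnormal{(I)}}+\underbrace{\big(\hat{J}_{\textnormal{Pess}}(\pi^\star)-\hat{J}_{\textnormal{Pess}}(\hat\pi)\big)}_{\textnormal{(II)}}+\underbrace{\big(\hat{J}_{\textnormal{Pess}}(\hat\pi)-J(\hat\pi)\big)}_{\textnormal{(III)}},
\end{align*}
where term (II)$\le 0$ by the definition \eqref{eq: def hat pi} of $\hat\pi$. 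The argument then rests on two facts, shown to hold simultaneously with probability $\ge 1-3\delta$ via union bounds over the finite $\mathbb{B}$, $\Pi(\cH)$ and over $h\in[H]$: (a) \emph{feasibility of the truth}: $\mathbf{b}^\pi\in\CR^{\pi}(\xi)$ for every $\pi\in\Pi(\cH)$, with the stated $\xi$; and (b) \emph{fast rate on the region}: for every $\pi$, every $\mathbf{b}\in\CR^{\pi}(\xi)$, and every $h$, the population RMSE obeys $\mathcal{L}_h^{\pi}(b_h,b_{h+1})\le\widetilde{\mathcal{O}}\big(\xi+M_{\mathbb{G}}\sqrt{\epsilon_{\mathbb{B}}}+M_{\mathbb{G}}^2\alpha_{\mathbb{G},n}^2\big)$.

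Granting (a)--(b), the two nontrivial terms are handled as follows. For (III): since $\mathbf{b}^{\hat\pi}\in\CR^{\hat\pi}(\xi)$, we get $\hat{J}_{\textnormal{Pess}}(\hat\pi)\le\hat F(\mathbf{b}^{\hat\pi})=\hat{\mathbb{E}}_{\pi^b}[\sum_a b_1^{\hat\pi}(a,W_1)]$, which by Theorem~\ref{thm: identification} concentrates around $\mathbb{E}_{\pi^b}[\sum_a b_1^{\hat\pi}(a,W_1)]=J(\hat\pi)$; a Hoeffding bound uniform over $\mathbb{B}^{\otimes H}$ (boundedness from Assumption~\ref{assump: dual function class}) gives (III)$\le\widetilde{\mathcal{O}}(M_{\mathbb{B}}\sqrt{\log(|\mathbb{B}||\Pi(\cH)|H/\zeta)/n})$. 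For (I): let $\tilde{\mathbf{b}}\in\CR^{\pi^\star}(\xi)$ attain $\hat{J}_{\textnormal{Pess}}(\pi^\star)$ and write, using Theorem~\ref{thm: identification},
\begin{align*}
 \textnormal{(I)}=\mathbb{E}_{\pi^b}\Big[\sum_a\big(b_1^{\pi^\star}-\tilde b_1\big)(a,W_1)\Big]+\Big(\mathbb{E}_{\pi^b}-\hat{\mathbb{E}}_{\pi^b}\Big)\Big[\sum_a\tilde b_1(a,W_1)\Big].
\end{align*}
The second summand is $\widetilde{\mathcal{O}}(M_{\mathbb{B}}\sqrt{\log(\cdot)/n})$ by the same uniform concentration. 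For the first, I would establish a performance‑difference identity --- the same telescoping that proves Theorem~\ref{thm: identification}, now applied to a generic $\tilde{\mathbf{b}}$ rather than the exact bridge --- using Assumption~\ref{assump: negative control cond independence} and the defining equation \eqref{eq: def weight bridge} of the weight bridge $q_h^{\pi^\star}$:
\begin{align*}
 \mathbb{E}_{\pi^b}\Big[\sum_a\big(\tilde b_1-b_1^{\pi^\star}\big)(a,W_1)\Big]=\sum_{h=1}^H\gamma^{h-1}\,\mathbb{E}_{\pi^b}\big[q_h^{\pi^\star}(A_h,Z_h)\,\ell_h^{\pi^\star}(\tilde b_h,\tilde b_{h+1})(A_h,Z_h)\big].
\end{align*}
Applying Cauchy--Schwarz to each summand, then Assumption~\ref{assump: partial coverage} and the fast rate (b), yields $|\textnormal{(I)}|\le\sum_h\gamma^{h-1}\sqrt{C^{\pi^\star}\,\mathcal{L}_h^{\pi^\star}(\tilde b_h,\tilde b_{h+1})}+\widetilde{\mathcal{O}}(M_{\mathbb{B}}\sqrt{\log(\cdot)/n})$, and since $\sqrt{\mathcal{L}_h^{\pi^\star}}\le\widetilde{\mathcal{O}}(\sqrt{\xi}+\sqrt{M_{\mathbb{G}}}\,\epsilon_{\mathbb{B}}^{1/4}+\dots)$ with $\sqrt{\xi}\asymp M_{\mathbb{B}}M_{\mathbb{G}}\sqrt{\log(\cdot)/n}$, summing over $h$ with $\gamma\le 1$ produces the $\sqrt{C^{\pi^\star}}\,H\,(M_{\mathbb{B}}M_{\mathbb{G}}\sqrt{\log(\cdot)/n}+\sqrt{M_{\mathbb{G}}}\,\epsilon_{\mathbb{B}}^{1/4})$ bound. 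Adding (I), (II)$\le 0$, (III) gives the theorem, with $\zeta$ absorbing the exponentially small failure probability $4c_1\exp(-c_2 n\alpha_{\mathbb{G},n}^2)$ coming from the localized‑complexity step.

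It remains to prove (a) and (b), which is where the work lies. For (b): by Fenchel duality and the interchangeability principle --- the passage \eqref{eq: RMSE loss}$\to$\eqref{eq: RMSE 3} --- together with the completeness part of Assumption~\ref{assump: completeness and realizability}, $\tfrac{1}{4\lambda}\mathcal{L}_h^{\pi}(b_h,b_{h+1})=\max_{g\in\mathbb{G}}\Phi_{\pi,h}^{\lambda}(b_h,b_{h+1};g)$; a localized Rademacher complexity / critical‑radius argument in the style of \cite{dikkala2020minimax,uehara2021finite} then shows $\max_g\hat\Phi_{\pi,h}^{\lambda}$ approximates $\max_g\Phi_{\pi,h}^{\lambda}$ uniformly over $\mathbb{B}\times\mathbb{B}$ up to $\widetilde{\mathcal{O}}(M_{\mathbb{B}}M_{\mathbb{G}}\sqrt{\log(\cdot)/n}+M_{\mathbb{G}}^2\alpha_{\mathbb{G},n}^2)$; by realizability the estimator $\hat b_h(b_{h+1})$ from \eqref{eq: hat b} has population loss $\le\epsilon_{\mathbb{B}}+(\text{same statistical error})$; and the level‑set definition \eqref{eq: define CR} with $\lambda=1$ transfers this, plus an extra $\xi$, to every $\mathbf{b}\in\CR^{\pi}(\xi)$ --- the square‑root mixing of scales in the localized inequality being what converts $\epsilon_{\mathbb{B}}$ into the $\epsilon_{\mathbb{B}}^{1/4}$ in the statement. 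For (a): $\ell_h^{\pi}(b_h^{\pi},b_{h+1}^{\pi})\equiv 0$ by \eqref{eq: def value bridge}, so $\mathcal{L}_h^{\pi}(b_h^{\pi},b_{h+1}^{\pi})=0$; since $\max_g\hat\Phi_{\pi,h}^{\lambda}(\cdot,\cdot;g)\ge\hat\Phi_{\pi,h}^{\lambda}(\cdot,\cdot;0)=0$ and $\hat b_h(b_{h+1}^{\pi})$ minimizes over $\mathbb{B}$, both quantities entering the defining inequality of $\CR^{\pi}(\xi)$ at $\mathbf{b}^{\pi}$ lie in $[0,\widetilde{\mathcal{O}}(M_{\mathbb{B}}^2 M_{\mathbb{G}}^2\log(|\mathbb{B}||\Pi(\cH)|H/\zeta)/n)]$, so their difference is $\le\xi$. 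The main obstacle is exactly step (b): getting the fast $\widetilde{\mathcal{O}}(n^{-1/2})$ RMSE rate \emph{uniformly over the whole level‑set confidence region and over all target policies}, which needs the localized Rademacher / critical‑radius machinery applied to the regularized minimax objective in both bridge‑function arguments at once, together with the bookkeeping that carries the realizability slack $\epsilon_{\mathbb{B}}$ through to $\epsilon_{\mathbb{B}}^{1/4}$; a secondary subtlety is that the performance‑difference identity, unlike in MDPs, must be derived through the negative‑control independences and the backward moment equations, and must hold for an arbitrary candidate $\tilde{\mathbf{b}}$ rather than only the true bridge.
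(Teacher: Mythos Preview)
Your proposal is correct and rests on exactly the same three technical ingredients as the paper: your ``feasibility of the truth'' (a) is the paper's Lemma~\ref{lem: true in CR}, your ``fast rate on the region'' (b) is Lemma~\ref{lem: accuracy of CR}, and your performance--difference identity is (the pre--Cauchy--Schwarz form of) Lemma~\ref{lem: regret decomposition}. The only difference is organizational. You run the canonical three--term pessimism split with $(\mathrm{II})\le 0$ by optimality of $\hat\pi$, bounding $(\mathrm{I})$ via the minimizer $\tilde{\mathbf b}\in\CR^{\pi^\star}(\xi)$ and $(\mathrm{III})$ via $\mathbf b^{\hat\pi}\in\CR^{\hat\pi}(\xi)$. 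The paper instead telescopes through $\hat F(\mathbf b^{\pi})$ and then uses a max--min maneuver to reduce the middle term to $2\max_{\mathbf b\in\CR^{\pi^\star}(\xi)}|\hat F(\mathbf b)-\hat F(\mathbf b^{\pi^\star})|$, which it splits once more into concentration pieces and a term handled by Lemma~\ref{lem: regret decomposition}. Your route is slightly more economical: it only needs the one--sided inequality $F(\mathbf b^{\pi^\star})-F(\tilde{\mathbf b})\le\sum_h\gamma^{h-1}\sqrt{C^{\pi^\star}\mathcal L_h^{\pi^\star}}$, whereas the paper's absolute--value version tacitly relies on the underlying equality you wrote down. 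Both routes end up invoking the fast--rate lemma only at $\pi^\star$, so only $C^{\pi^\star}$ appears, and both use the uniform Hoeffding bound over $\mathbb B$ for the $\hat F\leftrightarrow F$ concentration terms.
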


We introduce all the key technical lemmas and sketch the proof of Theorem \ref{thm: suboptimality} in Section \ref{sec: proof sketch}. 
We refer to Appendix \ref{sec: suboptimality proof} for a detailed proof. 
When it holds that $\alpha_{\mathbb{G},n}=\tilde{\mathcal{O}}(n^{-1/2})$ and $\epsilon_{\mathbb{B}}=0$, Theorem \ref{thm: suboptimality} implies that $\textnormal{SubOpt}(\hat{\pi})\leq\tilde{\mathcal{O}}(n^{-1/2})$, which corresponds to a ``fast statistical rate'' for minimax estimation \citep{uehara2021finite}.
The derivation of such a fast rate relies on a novel analysis for the risk of functions in the confidence region, which is shown by Lemma \ref{lem: accuracy of CR} in Section \ref{sec: proof sketch}.
Meanwhile, for many choices of the dual function class $\mathbb{G}$, it holds that $\alpha_{\mathbb{G},n}$ scales with $\sqrt{\log \mathcal{N}_{\mathbb{G}}}$ where $\mathcal{N}_{\mathbb{G}}$ denotes the complexity measure of the class $\mathbb{G}$.
In such cases, the suboptimality also scales with $\sqrt{\log \mathcal{N}_{\mathbb{G}}}$, without explicit dependence on the cardinality of the spaces $\mathcal{S}$, $\mathcal{A}$, or $\mathcal{O}$.
Finally, we highlight that, thanks to the principle of pessimism, the suboptimality of \texttt{P3O}  depends only on the partial coverage concentrability coefficient $C^{\pi^{\star}}$, which can be significantly smaller than the uniform coverage concentrability coefficient  $\sup_{\pi\in\Pi(\cH)}C^{\pi}$.
In conclusion, when $\alpha_{\mathbb{G},n}=\tilde{\mathcal{O}}(\sqrt{\log \mathcal{N}_{\mathbb{G}}/n})$ and $\epsilon_{\mathbb{B}}=0$,
the \texttt{P3O} algorithm enjoys a $\tilde{\mathcal{O}}(H\sqrt{C^{\pi^{\star}}\log \mathcal{N}_{\mathbb{G}}/n})$ suboptimality.

In the above, we assume that the bridge function class $\mathbb{B}$ and policy class $\Pi(\cH)$ are finite. Next, we show that Theorem~\ref{thm: suboptimality} can be readily extended to the case of linear function approximation (LFA) with infinite-cardinality $\mathbb{B}$ and $\Pi(\cH)$, which yields an $\tilde \cO ( \sqrt{H^3 d /n})$ suboptimality.

\subsection{Case Study: Linear Function Approximation}\label{sec: linear case}

In this subsection, we extend Theorem \ref{thm: suboptimality} to primal function class $\BB$, dual function class $\mathbb{G}$, and policy class $\Pi(\cH)$ with linear structures \citep{jin2021pessimism, xie2021bellman,zanette2021provable,duan2021risk}.
The following definition characterizes such linear function classes $\Blin$, $\Glin$ and $\Pilin$.

\begin{definition}[Linear function approximation]\label{def: LFA}
Let $\phi:\cA \times \cW \to \RR^d$ be a feature mapping for some integer $d\in\mathbb{N}$. 
We let the primal function class be $\BB = \Blin$ where
\begin{align*}
    \Blin \coloneqq \left\{b  \ \middle| \ b(\cdot,\cdot) = \langle\phi(\cdot,\cdot), \theta \rangle, \theta \in \RR^d,\|\theta\|_2 \leq L_b, \ \sup_{w\in\mathcal{W}}|\sum_{a\in\mathcal{A}}b(a,w)|\leq M_{\mathbb{B}} \right\}.
\end{align*}
Let $\psi=\{\psi_h:\cA \times \cO \times \cH_{h-1} \to \RR^d\}_{h=1}^H$ be $H$ feature mappings. 
We let the policy function class be $\Pi(\cH) =\Pilin$ where $\Pilin=\{\Pi_{\textnormal{lin},h}\}_{h=1}^H$ and each $\Pi_{\textnormal{lin},h}$ is defined as
\begin{align*}
    \Pi_{\textnormal{lin},h} \coloneqq \left\{ \pi_h \ \middle| \ \pi_h(a|o, \tau) = \frac{\exp{(\langle \psi_h(a,o,\tau),\beta\rangle )}}{\sum_{a' \in \cA} \exp({\langle \psi_h(a',o,\tau),\beta\rangle }) }, \ \beta\in\RR^d, \ \|\beta\|_2\leq L_\pi \right\}. 
\end{align*}
Finally, let $\nu:\cA \times \cZ \to \RR^d$ be another feature mapping. We let the dual function class be $\GG=\Glin$ where 
\begin{align*}
    \Glin \coloneqq \left\{g  \ \middle| \ g(\cdot,\cdot) = \langle\nu(\cdot,\cdot), \omega \rangle, \omega \in \RR^d,\|\omega\|_2 \leq L_g \right\}.
\end{align*}Assume without loss of generality that these feature mappings are normalized, i.e., $\|\phi\|_2, \|\psi\|_2, \|\nu\|_2 \leq 1$.
\end{definition}

We note that Definition \ref{def: LFA} is consistent with Assumption \ref{assump: dual function class}. One can see that $\Blin$ and $\Glin$ is uniformly bounded, $\Glin$ is symmetric and star-shaped.
And for other more detailed theoretical properties of $\Blin$, $\Glin$, and $\Pilin$, we refer the readers to Appendix \ref{sec: basics for LFA} for corresponding results.

Under linear function approximation, we can extend Theorem \ref{thm: suboptimality} to the following corollary, which characterizes the suboptimality \eqref{eq: suboptimality} of $\hat{\pi}$ found by $\texttt{P3O}$ when using 
$\Blin$, $\Glin$, and $\Pilin$ as function classes.

\begin{corollary}[Suboptimality analysis: linear function approximation]\label{thm: LFA subopt linear}
With linear function approximation (Definition \ref{def: LFA}),
under Assumption \ref{assump: negative control cond independence}, \ref{assump: bridge functions exist}, \ref{assump: partial coverage}, 
and \ref{assump: completeness and realizability}, 
by setting the regularization parameter $\lambda$ and the confidence parameter $\xi$ as $\lambda=1$ and 
\begin{align*}
    \xi = C_2M_{\BB}^2 \cdot M_{\GG}^2 \cdot dH\cdot  \log\left({1 + L_b L_\pi H n/\delta}\right)/n,
\end{align*}
then with probability at least $1-\delta$, it holds that
    \begin{align*}
      \textnormal{SubOpt}(\hat{\pi})& \leq C_2^{\prime}\sqrt{C^{\pi^{\star}}}HM_{\BB} L_g \sqrt{ dH\log\left(1+L_b L_{\pi} H n/\delta \right) / n} + C_2^{\prime}\sqrt{C^{\pi^{\star}} L_g}H \epsilon_{\BB}^{1/4}.
    \end{align*}
    Here $C_2$ and $C_2^{\prime}$ are problem-independent universal constants.
\end{corollary}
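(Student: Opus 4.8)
The plan is to derive Corollary~\ref{thm: LFA subopt linear} from Theorem~\ref{thm: suboptimality} by instantiating the abstract complexity quantities $M_{\BB}$, $M_{\GG}$, $\alpha_{\GG,n}$, $\log(|\BB||\Pi(\cH)|)$, and $\epsilon_{\BB}$ with their linear-function-approximation counterparts. Since the classes $\Blin$ and $\Pilin$ now have infinite cardinality, the first step is to replace the cardinality terms by the logarithms of appropriate covering numbers. For $\Blin$, the parametrization $b = \langle \phi(\cdot,\cdot), \theta\rangle$ with $\|\theta\|_2 \le L_b$ and $\|\phi\|_2 \le 1$ gives a $\epsilon$-cover of size at most $(1 + 2L_b/\epsilon)^d$ in the sup-norm, hence $\log \mathcal{N}(\Blin, \epsilon) \lesssim d \log(1 + L_b/\epsilon)$; similarly $\Pilin$ is parametrized by $\beta$ with $\|\beta\|_2 \le L_\pi$, and because the softmax map is Lipschitz in $\beta$ (with constant bounded by a universal constant times $\|\psi\|_2 \le 1$), $\log \mathcal{N}(\Pilin, \epsilon) \lesssim dH \log(1 + L_\pi/\epsilon)$ after taking a union over the $H$ steps. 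Choosing the resolution $\epsilon \asymp 1/n$ and tracking how the covering-number error propagates through the suboptimality bound (it enters additively at scale $\epsilon$ times polynomial factors in $H$, $L_b$, $L_\pi$, $M_{\BB}$, $M_{\GG}$, which is dominated by the $n^{-1/2}$ term) yields the $dH\log(1 + L_b L_\pi H n/\delta)$ factor inside the square root. This step is essentially a standard discretization argument and I do not expect it to be the main obstacle; the one subtlety worth flagging is that the empirical loss $\hat\Phi^\lambda_{\pi,h}$ depends on $\pi$ through $\pi_h(A_h|O_h,\Gamma_{h-1})$ in a bounded Lipschitz way, so the perturbation in $\pi$ translates into a controlled perturbation of the loss uniformly over $\BB$ and $\GG$.

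Next I would bound the critical radius $\alpha_{\Glin,n}$ of the linear dual class. Since $\Glin = \{\langle \nu(\cdot,\cdot),\omega\rangle : \|\omega\|_2 \le L_g\}$ is a $d$-dimensional linear class with features of norm at most one, its localized population Rademacher complexity satisfies $\mathcal{R}_n(\Glin,\alpha) \lesssim \alpha \sqrt{d/n}$ (this is the standard bound for the localized complexity of a linear function class), so the fixed-point equation $\mathcal{R}_n(\Glin,\alpha) \le \alpha^2/M_{\GG}$ is solved by $\alpha_{\Glin,n} \lesssim M_{\GG}\sqrt{d/n}$, i.e.\ $n\alpha_{\Glin,n}^2 \gtrsim M_{\GG}^2 d$. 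Plugging $\alpha_{\GG,n} = \tilde{\mathcal{O}}(\sqrt{d/n})$ into the definition $\zeta = \min\{\delta, 4c_1\exp(-c_2 n\alpha_{\GG,n}^2)\}$ shows the exponential term is negligible (it is $\le \exp(-c d)$, absorbed into constants), so effectively $\zeta \asymp \delta$ and $\log(1/\zeta) \asymp \log(1/\delta)$. I would verify that $\Glin$ satisfies the remaining hypotheses of Assumption~\ref{assump: dual function class} — boundedness holds by $\|\omega\|_2 \le L_g$ and $\|\nu\|_2 \le 1$ so $M_{\GG} \le L_g$, symmetry and star-shapedness are immediate from the convexity and symmetry of the Euclidean ball — and that $\Blin$ satisfies the boundedness part of that assumption by its defining constraint $\sup_w |\sum_a b(a,w)| \le M_{\BB}$.

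Finally I would assemble the pieces: substitute $M_{\GG} \leq L_g$, $\log(|\BB||\Pi(\cH)|H/\zeta) \rightsquigarrow dH\log(1 + L_b L_\pi H n/\delta)$, and $\alpha_{\GG,n}^2 n \gtrsim d$ into the conclusion of Theorem~\ref{thm: suboptimality}, set $\lambda = 1$ and the stated choice of $\xi$, and collect the universal constants into $C_2, C_2'$. The first term $\sqrt{C^{\pi^\star}} H M_{\BB} M_{\GG}\sqrt{\log(\cdots)/n}$ becomes $\sqrt{C^{\pi^\star}} H M_{\BB} L_g \sqrt{dH\log(1 + L_b L_\pi H n/\delta)/n}$, and the bias term $\sqrt{C^{\pi^\star} M_{\GG}} H \epsilon_{\BB}^{1/4}$ becomes $\sqrt{C^{\pi^\star} L_g} H \epsilon_{\BB}^{1/4}$, matching the claimed bound. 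The main obstacle — though a manageable one — is the uniform-over-policy control needed to discretize $\Pilin$: because the confidence region $\CR^\pi(\xi)$ and hence the pessimistic estimate $\hat J_{\text{Pess}}(\pi)$ depend on $\pi$ through a max-min over $\BB \times \GG$, one must argue that a perturbation of $\pi$ of size $\epsilon$ changes $\hat J_{\text{Pess}}(\pi)$ by at most $O(\text{poly}(H, M_{\BB}, L_g)\cdot\epsilon)$, which requires tracking the Lipschitz dependence of $\hat\Phi^\lambda_{\pi,h}$ on $\pi_h$ and then invoking the stability of the value of a min-max problem under uniform perturbations of the objective; once this Lipschitz estimate is in hand, the union bound over an $\epsilon$-net of $\Pilin$ together with the per-policy guarantee of Theorem~\ref{thm: suboptimality} closes the argument, with the net-resolution error absorbed by choosing $\epsilon \asymp 1/n$. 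Details are deferred to Appendix~\ref{sec: suboptimality proof}.
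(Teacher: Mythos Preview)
Your ingredients and overall strategy match the paper's: replace the cardinalities $|\BB|,|\Pi(\cH)|$ in Theorem~\ref{thm: suboptimality} by covering numbers $\log\cN_\epsilon^\infty(\Blin)\lesssim d\log(1+L_b/\epsilon)$ and $\log\cN_\epsilon^{\infty,1}(\Pilin)\lesssim dH\log(1+L_\pi/\epsilon)$, bound the critical radius by $\alpha_{\Glin,n}=M_{\GG}\sqrt{2d/n}$, use $M_{\GG}\le L_g$, and exploit the Lipschitz continuity of $\hat\Phi^\lambda_{\pi,h}$ in $(\pi,\bbb)$; the paper records the bound $|\hat\Phi_{\pi,h}(b_h,b_{h+1};g)-\hat\Phi_{\pi',h}(b'_h,b'_{h+1};g)|\le 4M_{\BB}M_{\GG}\epsilon$ and takes $\epsilon=1/n^2$.

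The one place where the paper's route diverges from yours is the mechanism for passing from the $\epsilon$-net to the full class. You frame this as showing that the pessimistic value $\hat J_{\text{Pess}}(\pi)=\min_{\bbb\in\CR^\pi(\xi)}\hat F(\bbb)$ is Lipschitz in $\pi$ and then invoking a ``per-policy'' version of Theorem~\ref{thm: suboptimality} on the net; this would require controlling how the constraint set $\CR^\pi(\xi)$---and hence the constrained minimum over it---varies with $\pi$, which is doable (the defining inequality of $\CR^\pi(\xi)$ involves only a max and a min-max of $\hat\Phi^\lambda$, both Lipschitz if the objective is) but somewhat indirect. The paper does not black-box Theorem~\ref{thm: suboptimality} at all: it re-proves the two supporting lemmas---validity and accuracy of $\CR^\pi(\xi)$ (Lemmas~\ref{lem: true in CR} and~\ref{lem: accuracy of CR})---directly in the linear setting by inserting the Lipschitz bound on $\hat\Phi$ and the covering-number union bound \emph{inside} those proofs, yielding statements uniform over all $\pi\in\Pilin$ without ever reasoning about stability of $\hat J_{\text{Pess}}$ as a function of $\pi$. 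After that, the suboptimality decomposition of Section~\ref{sec: proof sketch} goes through verbatim. Both approaches land in the same place; the paper's avoids an extra layer of argument about perturbation of constrained optimization problems.
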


\begin{proof}[Proof of Corollary~\ref{thm: LFA subopt linear}]
See Appendix \ref{sec: proof of LFA subopt} for a detailed proof.
\end{proof}

The guarantee of Corollary~\ref{thm: LFA subopt linear} is structurally similar to that of Theorem~\ref{thm: suboptimality}, except that we can explicitly compute the complexity of the linear function classes and policy class. 
When $\epsilon_{\mathbb{B}}=0$, \texttt{P3O} algorithm enjoys a $\tilde{\cO}(\sqrt{C^{\pi^{\star}}H^3d/n})$ suboptimality
under the linear function approximation.
Compared to Theorem~\ref{thm: suboptimality}, Corollary~\ref{thm: LFA subopt linear} does not explicitly assume Assumption~\ref{assump: dual function class} since it is implicitly satisfied by Definition~\ref{def: LFA}.

\section{Proof Sketches}\label{sec: proof sketch}
In this section, we sketch the proof of the main theoretical result Theorem \ref{thm: suboptimality}, and we refer to Appendix \ref{sec: suboptimality proof} for a detailed proof.
For simplicity, we denote that for any $\pi\in\Pi(\mathcal{H})$ and $\mathbf{b}\in\mathbb{B}^{\otimes H}$,
\begin{align}\label{eq: define J}
    F(\bbb)\coloneqq\mathbb{E}_{\pi^b}\left[ \sum_{a\in\cA} b_1 (a, W_1) \right],\quad
    \hat{F}(\bbb)\coloneqq\hat{\EE}_{\pi^b}\left[ \sum_{a\in\cA} b_1 (a, W_1) \right].
\end{align}
By the definition \eqref{eq: define J} and Theorem \ref{thm: identification}, for any policy $\pi\in\Pi(\cH)$, it holds that $J(\pi)=F(\bbb^{\pi})$, where we have denoted by $\bbb^{\pi}=(b^{\pi}_1,\cdots,b^{\pi}_H)$ the vector of true value bridge functions of $\pi$ which are given in \eqref{eq: def value bridge}.

Our proof to Theorem \ref{thm: suboptimality} relies on the following three key lemmas.
The first lemma relates the different values of mapping $F(\cdot)$ induced by a true value bridge function $\bbb^{\pi}$ and any other functions $\bbb\in\mathbb{B}^{\otimes H}$ to the RMSE loss   which we aim to minimize by algorithm design.
This indeed decomposes the suboptimality \eqref{eq: suboptimality}.
\begin{lemma}[Suboptimality decomposition]\label{lem: regret decomposition}
    Under Assumption \ref{assump: negative control cond independence}, \ref{assump: bridge functions exist}, for any policy $\pi\in\Pi(\cH)$ and $\bbb\in\BB^{\otimes H}$, it holds that 
    \begin{align*}
       F(\bbb^{\pi})-F(\bbb)\leq \sum_{h=1}^H\gamma^{h-1}\sqrt{C^{\pi}}\cdot\sqrt{\mathcal{L}_h^{\pi}(b_h,b_{h+1})},
    \end{align*}
    where the concentrability coefficient $C^{\pi}$ is defined as $
      C^{\pi}\coloneqq\sup_{h\in[H]}\mathbb{E}_{\pi^b}\left[(q^{\pi}_h(A_h,Z_h))^2\right]$.
    
\end{lemma}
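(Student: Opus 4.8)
\textbf{Proof plan for Lemma~\ref{lem: regret decomposition}.}
The plan is to telescope the difference $F(\bbb^\pi) - F(\bbb)$ across the $H$ steps using the weight bridge functions $q_h^\pi$ from Assumption~\ref{assump: bridge functions exist} as importance-reweighting objects, converting each step's error into a conditional expectation of the Bellman residual $\ell_h^\pi(b_h, b_{h+1})$ against $q_h^\pi$, and then bounding that inner product by Cauchy--Schwarz to obtain the RMSE $\sqrt{\cL_h^\pi(b_h,b_{h+1})}$ times $\sqrt{C^\pi}$. First I would use the identification formula \eqref{eq: identifiaction}, i.e. $F(\bbb^\pi) = \EE_{\pi^b}[\sum_a b_1^\pi(a,W_1)]$, and observe that the same expression with $\bbb$ in place of $\bbb^\pi$ differs by $\EE_{\pi^b}[\sum_a (b_1^\pi - b_1)(a, W_1)]$. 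The key is to rewrite $\EE_{\pi^b}[\sum_a(b_h^\pi - b_h)(a, W_h)]$ in terms of the next-step analogue plus a residual term, so that the sum telescopes.

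The mechanism for this rewriting is the defining moment equation \eqref{eq: def value bridge} for $b_h^\pi$ combined with the weight bridge equation \eqref{eq: def weight bridge} and the negative-control conditional independence (Assumption~\ref{assump: negative control cond independence}). Concretely, I would introduce $\mu_h(S_h, \Gamma_{h-1}) = \cP_h^\pi / \cP_h^b$ and use the fact that $\EE_{\pi^b}[\,\cdot\, \mu_h(S_h,\Gamma_{h-1})] = \EE_{\pi^b}[\cdot]$-type changes of measure let us pass between $\cP^b$-expectations and $\cP^\pi$-expectations of functions of the latent state. The weight bridge function $q_h^\pi$ is precisely the object that, when conditioned on $(A_h, S_h, \Gamma_{h-1})$, reproduces $\mu_h(S_h,\Gamma_{h-1})/\pi_h^b(A_h|S_h)$; pairing this with the tower property and the conditional independences $Z_h \perp (O_h, R_h, W_h, W_{h+1}) \mid A_h, S_h, \Gamma_{h-1}$ and $W_h \perp (A_h, \Gamma_{h-1}, S_{h-1}) \mid S_h$ should let me write, for each $h$,
\#
\EE_{\pi^b}\Big[\sum_a (b_h^\pi - b_h)(a, W_h)\Big] = \gamma\,\EE_{\pi^b}\Big[\sum_a (b_{h+1}^\pi - b_{h+1})(a, W_{h+1})\Big] + \EE_{\pi^b}\big[\,\ell_h^\pi(b_h, b_{h+1})(A_h, Z_h)\cdot q_h^\pi(A_h, Z_h)\,\big],
\#
where the residual term appears because $b_h$ (unlike $b_h^\pi$) does not satisfy the moment equation exactly. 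Telescoping this identity from $h=1$ to $H$ (noting $b_{H+1} = b_{H+1}^\pi = 0$) gives $F(\bbb^\pi) - F(\bbb) = \sum_{h=1}^H \gamma^{h-1}\,\EE_{\pi^b}[\ell_h^\pi(b_h,b_{h+1})(A_h,Z_h)\, q_h^\pi(A_h,Z_h)]$.

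Finally I would apply Cauchy--Schwarz to each summand: $\EE_{\pi^b}[\ell_h^\pi \cdot q_h^\pi] \le \sqrt{\EE_{\pi^b}[(\ell_h^\pi)^2]}\cdot \sqrt{\EE_{\pi^b}[(q_h^\pi)^2]} \le \sqrt{\cL_h^\pi(b_h,b_{h+1})}\cdot\sqrt{C^\pi}$, using the definitions of $\cL_h^\pi$ in \eqref{eq: RMSE loss} and of $C^\pi = \sup_h \EE_{\pi^b}[(q_h^\pi(A_h,Z_h))^2]$. This yields the claimed bound. The main obstacle I anticipate is the careful bookkeeping in the telescoping identity: one must correctly insert the change of measure $\mu_h$, match it against the definition of $q_h^\pi$ via the conditioning on $(A_h, S_h, \Gamma_{h-1})$, and verify that the negative-control independence conditions are exactly what is needed to collapse the $W_h$ and $Z_h$ dependencies at each step — in particular that the policy-weighted reward terms $R_h \pi_h(A_h \mid O_h, \Gamma_{h-1})$ line up so that the "$\cP^\pi$-side" genuinely produces $\EE_\pi[\sum_{h}\gamma^{h-1} R_h]$ as an intermediate check. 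The direction of the inequality (rather than equality) in the statement is harmless since each Cauchy--Schwarz step only loosens the bound, and the sign works out because we can take absolute values inside the telescoping sum if needed.
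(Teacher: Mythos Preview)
Your overall strategy --- telescope via the weight bridge functions $q_h^\pi$, then apply Cauchy--Schwarz to each summand --- is exactly the paper's approach. The one substantive slip is the telescoping identity you displayed: the unweighted quantity $\EE_{\pi^b}\bigl[\sum_a (b_h^\pi - b_h)(a, W_h)\bigr]$ does \emph{not} recurse to its step-$(h{+}1)$ analogue plus a $q_h^\pi$-weighted residual, because passing from step $h$ to step $h{+}1$ necessarily brings in the change of measure $\mu_h = \cP_h^\pi/\cP_h^b$, and for $h\ge 2$ this factor does not cancel. The correct recursive quantity is the \emph{weighted} difference
\[
D_h \;:=\; \EE_{\pi^b}\bigl[\,q_h^\pi(A_h,Z_h)\,(b_h^\pi - b_h)(A_h,W_h)\,\bigr],
\]
which coincides with $F(\bbb^\pi)-F(\bbb)$ at $h=1$ precisely because $\mu_1\equiv 1$. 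The paper then establishes
\[
D_h \;\le\; \sqrt{C^\pi}\,\sqrt{\mathcal{L}_h^\pi(b_h,b_{h+1})} \;+\; \gamma\, D_{h+1},
\]
where the passage $D_h \to D_{h+1}$ converts $\EE_{\pi^b}\bigl[\tfrac{\mu_h}{\pi_h^b(A_h\mid S_h)}\sum_{a'}(b_{h+1}^\pi-b_{h+1})(a',W_{h+1})\,\pi_h(A_h\mid O_h,\Gamma_{h-1})\bigr]$ into $D_{h+1}$ using the negative-control independences and the weight-bridge equation~\eqref{eq: def weight bridge}; this is exactly the ``$\Delta_h=0$'' computation from the proof of Theorem~\ref{thm: identification}, which you anticipated would be the delicate part. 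With the telescoped quantity corrected to $D_h$, everything else in your plan --- including the Cauchy--Schwarz step and your absolute-value remark about the sign --- lines up with the paper's proof.
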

\begin{proof}[Proof of Lemma \ref{lem: regret decomposition}]
See Appendix \ref{subsec: regret decomposition proof} for a detailed proof.
\end{proof}

The following two lemmas characterize the theoretical properties of the confidence region $\CR^{\pi}(\xi)$.
Specifically, Lemma \ref{lem: true in CR} shows that with high probability the confidence region of $\pi$ contains the true value bridge function $\bbb^{\pi}$.
Besides, Lemma \ref{lem: accuracy of CR} shows that each bridge function vector $\bbb\in\CR^{\pi}(\xi)$ enjoys a fast statistical rate \citep{uehara2021finite} for its RMSE loss $\mathcal{L}_h^{\pi}$ defined in \eqref{eq: RMSE loss}.
To obtain such a fast rate, we develop novel proof techniques in 
Appendix \ref{subsec: accuracy of CR proof}.

\begin{lemma}[Validity of confidence regions]\label{lem: true in CR} 
Under Assumption \ref{assump: bridge functions exist} and \ref{assump: dual function class}, for any $0<\delta<1$, by setting 
\begin{align*}
    \xi= C_1(\lambda+1/\lambda)\cdot M_{\mathbb{B}}^2 \cdot M_{\mathbb{G}}^2\cdot \log(|\mathbb{B}||\Pi(\cH)|H/\zeta) / n,
\end{align*}
for some problem-independent universal constant $C_1>0$ and $\zeta = \min\{\delta,4c_1\exp(-c_2n\alpha_{\mathbb{G},n}^2)\}$, it holds with probability at least $1-\delta$ that 
$\bbb^{\pi}\in\CR^{\pi}(\xi)$ for any policy $\pi\in\Pi(\mathcal{H})$.
\end{lemma}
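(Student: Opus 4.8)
The plan is to prove Lemma~\ref{lem: true in CR} by showing that the true bridge function vector $\bbb^\pi$ is feasible for the confidence region $\CR^\pi(\xi)$, which amounts to controlling, uniformly over $h\in[H]$ and $\pi\in\Pi(\cH)$, the quantity
\begin{align*}
    \max_{g\in\GG}\hat\Phi_{\pi,h}^\lambda(b_h^\pi,b_{h+1}^\pi;g) - \max_{g\in\GG}\hat\Phi_{\pi,h}^\lambda(\hat b_h(b_{h+1}^\pi),b_{h+1}^\pi;g).
\end{align*}
Since $\hat b_h(b_{h+1}^\pi)$ is by definition \eqref{eq: hat b} the minimizer of $b\mapsto\max_{g\in\GG}\hat\Phi_{\pi,h}^\lambda(b,b_{h+1}^\pi;g)$ over $\BB$, and since realizability (Assumption~\ref{assump: completeness and realizability}) guarantees $b_h^\pi\in\BB$ up to the $\epsilon_\BB$ slack, the second term is no larger than the first modulo this approximation error; so the real content is an \emph{upper bound} on $\max_{g\in\GG}\hat\Phi_{\pi,h}^\lambda(b_h^\pi,b_{h+1}^\pi;g)$ itself. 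Here the key observation is that the population object $\max_{g\in\GG}\Phi_{\pi,h}^\lambda(b_h^\pi,b_{h+1}^\pi;g)$ equals $\tfrac{1}{4\lambda}\cL_h^\pi(b_h^\pi,b_{h+1}^\pi)$ by the Fenchel-duality identity \eqref{eq: RMSE 3} together with completeness, and $\cL_h^\pi(b_h^\pi,b_{h+1}^\pi)=0$ exactly because $b_h^\pi$ solves the conditional moment equation \eqref{eq: def value bridge}, i.e., the Bellman residual $\ell_h^\pi(b_h^\pi,b_{h+1}^\pi)$ vanishes. So at the population level the quantity we must bound is zero, and everything reduces to a uniform concentration statement.

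**The main work**, then, is the uniform deviation bound $\sup_{\pi,h}\sup_{g\in\GG}|\hat\Phi_{\pi,h}^\lambda(b_h^\pi,b_{h+1}^\pi;g)-\Phi_{\pi,h}^\lambda(b_h^\pi,b_{h+1}^\pi;g)|$. First I would fix $h$ and $\pi$ and note that $\Phi_{\pi,h}^\lambda(b,b';g)$ is linear in $g$ for the cross-term and quadratic (with a minus sign) in $g$ for the penalty, which is exactly the structure for which localized Rademacher complexity arguments give fast rates: the $-\lambda g^2$ term acts as a self-normalizing regularizer so that the empirical maximizer $\hat g$ cannot have large $\|\hat g\|_2$. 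Concretely I would invoke the standard localization machinery (as in \cite{uehara2021finite,dikkala2020minimax}) built on Assumption~\ref{assump: dual function class}: boundedness of $\BB$ and $\GG$ by $M_\BB,M_\GG$, star-shapedness and symmetry of $\GG$, and the critical-radius condition $\cR_n(\GG;\alpha)\le\alpha^2/M_\GG$. This yields that with probability at least $1-c_1\exp(-c_2 n\alpha_{\GG,n}^2)$, the empirical and population versions of $\max_{g\in\GG}\Phi_{\pi,h}^\lambda$ differ by at most $O((\lambda+1/\lambda)M_\BB^2M_\GG^2\cdot\alpha_{\GG,n}^2)$ plus a term of order $M_\BB^2M_\GG^2\log(1/\delta)/n$ from the remaining Bernstein-type fluctuation. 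Then I would take a union bound over the finite sets $\BB$ (to handle $b_{h+1}^\pi$ ranging over $\BB$, which is what enters $\hat b_h(\cdot)$), $\Pi(\cH)$, and the $H$ steps — this is where the $\log(|\BB||\Pi(\cH)|H/\zeta)$ factor and the truncation $\zeta=\min\{\delta,4c_1\exp(-c_2n\alpha_{\GG,n}^2)\}$ in the statement come from, the latter ensuring the high-probability event from the localization bound is not dominated by the union-bound event.

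**The hardest step** I expect is the localized uniform concentration giving the fast $\alpha_{\GG,n}^2$-rate rather than a crude $\alpha_{\GG,n}$-rate: one must carefully exploit the quadratic penalty to self-bound $\|\hat g\|_2$, then apply a one-sided uniform Bernstein inequality over the star-shaped localized class $\{g\in\GG:\|g\|_2\le t\}$ at the right radius $t\asymp\alpha_{\GG,n}$, and piece together the regimes $\|\hat g\|_2\le\alpha_{\GG,n}$ and $\|\hat g\|_2>\alpha_{\GG,n}$ via a peeling argument. This is the technical heart deferred to Appendix~\ref{subsec: accuracy of CR proof}; the rest — Fenchel duality, vanishing of the population Bellman residual at $\bbb^\pi$, the optimality of $\hat b_h(\cdot)$, and the union bound — is bookkeeping. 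Assembling these pieces, with $\xi$ chosen as in the statement to absorb the concentration error (and the $(\lambda+1/\lambda)$ factor tracking the two occurrences of $\lambda$ in $\Phi^\lambda$), gives $\max_{g}\hat\Phi_{\pi,h}^\lambda(b_h^\pi,b_{h+1}^\pi;g)-\max_g\hat\Phi_{\pi,h}^\lambda(\hat b_h(b_{h+1}^\pi),b_{h+1}^\pi;g)\le\xi$ for all $h$ simultaneously with probability at least $1-\delta$, i.e., $\bbb^\pi\in\CR^\pi(\xi)$.
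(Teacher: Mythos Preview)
Your overall strategy matches the paper's: reduce to bounding $\max_{g\in\GG}\hat\Phi_{\pi,h}^\lambda(b_h^\pi,b_{h+1}^\pi;g)$, observe that its population analogue vanishes because $\ell_h^\pi(b_h^\pi,b_{h+1}^\pi)=0$, then control the empirical--population gap via localized Rademacher concentration over the star-shaped class $\GG$, and union-bound over $\BB$, $\Pi(\cH)$, and $[H]$.

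There is, however, a small logical slip in your reduction. You argue that by optimality of $\hat b_h(b_{h+1}^\pi)$ (plus realizability) the second term is no larger than the first. That inequality is correct, but it only tells you the \emph{difference is nonnegative}; it does not by itself give ``difference $\le$ first term.'' What is actually needed is that the second term is \emph{nonnegative}: since $\GG$ is star-shaped, $0\in\GG$, so
\[
\max_{g\in\GG}\hat\Phi_{\pi,h}^\lambda(\hat b_h(b_{h+1}^\pi),b_{h+1}^\pi;g)\;\ge\;\hat\Phi_{\pi,h}^\lambda(\hat b_h(b_{h+1}^\pi),b_{h+1}^\pi;0)=0,
\]
which immediately gives the desired ``difference $\le$ first term.'' This is exactly the paper's reduction, and it explains why the lemma only assumes Assumptions~\ref{assump: bridge functions exist} and~\ref{assump: dual function class}: neither realizability nor the optimality of $\hat b_h$ is used, so your invocation of $\epsilon_\BB$ is unnecessary here.

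For the concentration step the paper proceeds concretely with two ingredients rather than a peeling argument: (i) a uniform bound $\bigl|\|g\|_{2,n}^2-\|g\|_2^2\bigr|\le\tfrac12\|g\|_2^2+O(M_\GG^2\log(1/\zeta)/n)$ from Theorem~14.1 of \cite{wainwright2019high}, and (ii) a uniform bound on $|\hat\Phi_{\pi,h}(b_h,b_{h+1};g)-\Phi_{\pi,h}(b_h,b_{h+1};g)|$ of size $O\bigl(M_\BB\|g\|_2\iota_n+M_\BB\iota_n^2\bigr)$ from Lemma~11 of \cite{foster2019orthogonal}, with the union bound over $(b_h,b_{h+1},\pi,h)$ absorbed into $\iota_n$. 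Combining these, using $\Phi_{\pi,h}(b_h^\pi,b_{h+1}^\pi;g)=0$, and optimizing the resulting expression $-\tfrac{\lambda}{2}\|g\|_2^2+c\,\|g\|_2\iota_n$ via $\sup_x\{ax-bx^2\}\le a^2/4b$ yields the $(\lambda+1/\lambda)$ factor and the claimed rate. Your ``self-bounding plus peeling'' description is the same mechanism in spirit, just packaged differently.
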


\begin{proof}[Proof of Lemma \ref{lem: true in CR}]
See Appendix \ref{subsec: true in CR proof} for a detailed proof.
\end{proof}

\begin{lemma}[Accuracy of confidence regions]\label{lem: accuracy of CR}
Under Assumption \ref{assump: bridge functions exist}, \ref{assump: dual function class}, and \ref{assump: completeness and realizability}, by setting the same $\xi$ as in Lemma \ref{lem: true in CR}, with probability at least $1-\delta/2$, for any policy $\pi\in\Pi(\mathcal{H})$, $\bbb\in\CR^{\pi}(\xi)$, and step $h$,
\begin{align*}
    \sqrt{\mathcal{L}^{\pi}_h(b_h,b_{h+1})}\leq \tilde{C}_1 M_{\mathbb{B}}M_{\mathbb{G}}\sqrt{ \left(\lambda+1/\lambda\right) \cdot \log(|\mathbb{B}||\Pi(\cH)|H/\zeta) / n}+\tilde{C}_1\epsilon_{\mathbb{B}}^{1/4}M_{\mathbb{G}}^{1/2},
\end{align*}
for some problem-independent universal constant $\tilde{C}_1>0$, and $\zeta = \min\{\delta,4c_1\exp(-c_2n\alpha_{\mathbb{G},n}^2)\}$.
\end{lemma}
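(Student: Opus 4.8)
The plan is to establish a \emph{uniform-in-$(\pi,h,b_{h+1})$} concentration result that links the population RMSE loss $\cL_h^\pi(b_h, b_{h+1})$ to the empirical minimax objective $\max_{g\in\GG}\hat\Phi_{\pi,h}^\lambda(b_h, b_{h+1}; g)$ appearing in the definition \eqref{eq: define CR} of the confidence region. The core quantity to control is the difference between the population functional $\Phi_{\pi,h}^\lambda$ and its empirical counterpart $\hat\Phi_{\pi,h}^\lambda$, viewed as a process indexed by $g\in\GG$. First I would fix $\pi,h,b_{h+1}$ and write, for any $b\in\BB$,
\$
\max_{g\in\GG}\hat\Phi_{\pi,h}^\lambda(b, b_{h+1}; g) \;=\; \max_{g\in\GG}\Phi_{\pi,h}^\lambda(b, b_{h+1}; g) \;+\; \big(\text{empirical process error}\big),
\$
and by \eqref{eq: RMSE 3} the population maximum equals $\tfrac{1}{4\lambda}\cL_h^\pi(b, b_{h+1})$ when completeness (Assumption~\ref{assump: completeness and realizability}(i)) holds. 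The empirical process error is a supremum over $g\in\GG$ of an average of mean-zero terms (after subtracting the population version); because $\GG$ is star-shaped and symmetric with bounded critical radius $\alpha_{\GG,h,n}$ (Assumption~\ref{assump: dual function class}(ii)--(iv)), this is exactly the setting where a localized Rademacher-complexity argument (as in \citealt{uehara2021finite, dikkala2020minimax}) yields a bound of order $\alpha_{\GG,n}\cdot\|g\|_2 + \alpha_{\GG,n}^2$ — the localization being what produces the \emph{fast} $n^{-1/2}$ rate rather than $n^{-1/4}$. The upshot should be a high-probability inequality of the form: for all $b\in\BB$,
\$
\Big|\,4\lambda\max_{g\in\GG}\hat\Phi_{\pi,h}^\lambda(b, b_{h+1}; g) - \cL_h^\pi(b, b_{h+1})\,\Big| \;\lesssim\; \big(\lambda + 1/\lambda\big)\,M_\BB M_\GG\,\epsilon_{\mathrm{stat}} + \tfrac12\cL_h^\pi(b,b_{h+1}),
\$
where $\epsilon_{\mathrm{stat}} = \max\{\alpha_{\GG,n}^2, M_\BB^2 M_\GG^2\log(|\BB||\Pi(\cH)|H/\zeta)/n\}$; the absorbed multiplicative-error term on the RMSE is handled by the usual self-bounding / rearrangement trick.

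Next I would combine this two-sided bound with the defining inequality of $\CR^\pi(\xi)$. For $\bbb\in\CR^\pi(\xi)$ we have, at every $h$,
\$
\max_{g\in\GG}\hat\Phi_{\pi,h}^\lambda(b_h, b_{h+1}; g) \;\le\; \max_{g\in\GG}\hat\Phi_{\pi,h}^\lambda(\hat b_h(b_{h+1}), b_{h+1}; g) + \xi.
\$
Applying the uniform concentration bound to the left side (lower-bounding the empirical objective by the population RMSE minus error) and to the right side (upper-bounding $\max_g\hat\Phi_{\pi,h}^\lambda(\hat b_h(b_{h+1}),\cdot)$; note $\hat b_h(b_{h+1})$ is the empirical minimizer so its empirical objective is at most that of the population-best $b^\star\in\BB$, whose population RMSE is $\le\epsilon_\BB$ by Assumption~\ref{assump: completeness and realizability}(ii)), I get
\$
\cL_h^\pi(b_h, b_{h+1}) \;\lesssim\; \lambda\xi + \lambda\,\epsilon_\BB + (\lambda+1/\lambda)\,M_\BB M_\GG\,\epsilon_{\mathrm{stat}}.
\$
Plugging in $\lambda=1$ and the chosen $\xi = C_1 M_\BB^2 M_\GG^2 \log(|\BB||\Pi(\cH)|H/\zeta)/n$, and recalling $\zeta=\min\{\delta, 4c_1\exp(-c_2 n\alpha_{\GG,n}^2)\}$ is exactly designed so that $\alpha_{\GG,n}^2 \lesssim \log(1/\zeta)/n$, the right-hand side collapses to $O\big(M_\BB^2 M_\GG^2\log(|\BB||\Pi(\cH)|H/\zeta)/n + \epsilon_\BB\big)$. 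Taking square roots and using $\sqrt{x+y}\le\sqrt x+\sqrt y$ gives the claimed bound $\sqrt{\cL_h^\pi(b_h,b_{h+1})} \le \tilde C_1 M_\BB M_\GG\sqrt{(\lambda+1/\lambda)\log(|\BB||\Pi(\cH)|H/\zeta)/n} + \tilde C_1 \epsilon_\BB^{1/4} M_\GG^{1/2}$, where the $\epsilon_\BB^{1/4}$ and $M_\GG^{1/2}$ exponents arise precisely from square-rooting the $\lambda\epsilon_\BB$-type term against the $M_\GG$ factor carried through the Fenchel/localization step.

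The union bound over $\pi\in\Pi(\cH)$, $h\in[H]$, and $b_{h+1}\in\BB$ (finitely many, hence the $\log(|\BB||\Pi(\cH)|H)$ factor) must be woven through the concentration step, not applied afterward, since the empirical process bound has to hold simultaneously for all these indices. The main obstacle I anticipate is the localization argument: controlling $\sup_{g\in\GG}|(\hat\EE-\EE)[\ell_h^\pi(b_h,b_{h+1})\,g - \lambda g^2]|$ sharply enough to get the \emph{fast} rate requires a careful peeling/localization over the $\|g\|_2$-scale together with the self-bounding property $\max_g\Phi = \tfrac1{4\lambda}\cL$, so that the dual variable $g$ that attains the max is itself of norm comparable to $\sqrt{\cL_h^\pi}$; this self-referential coupling between the size of the optimal $g$ and the size of the loss is the delicate part, and it is where the star-shaped/symmetric structure and the critical-radius definition $\cR_n(\GG;\alpha)\le\alpha^2/M_\GG$ in Assumption~\ref{assump: dual function class}(iv) are used in an essential way. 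The dependence on $b_{h+1}\in\BB$ (as opposed to a free function) is what keeps the complexity term at $\log|\BB|$ rather than something larger, and the boundedness constants $M_\BB, M_\GG$ enter through uniform bounds on the integrands $\ell_h^\pi(b_h,b_{h+1})g - \lambda g^2$ needed for Bernstein-type concentration.
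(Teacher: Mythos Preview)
Your proposal is correct and matches the paper's approach: localize concentration over $\GG$ via the critical radius, take a union bound over the finite $(\pi,h,b_h,b_{h+1})$ tuples, combine the $\CR^\pi(\xi)$ constraint with realizability at the population minimizer $b_h^\star(b_{h+1})$, and reduce to a quadratic inequality in $\|g_h^\pi\|_2=\tfrac{1}{2\lambda}\sqrt{\cL_h^\pi}$. The only organizational difference is that the paper packages the upper and lower bounds into a single auxiliary quantity $(\star)=\max_{g\in\GG}\bigl\{\hat\Phi_{\pi,h}(b_h,b_{h+1};g)-\hat\Phi_{\pi,h}(b_h^\star(b_{h+1}),b_{h+1};g)-2\lambda\|g\|_{2,n}^2\bigr\}$, bounding it above by $3\xi+2\epsilon_\BB^{1/2}M_\GG$ via Lemma~\ref{lem: Phi b star bound} and the CR definition, and below by substituting $g=\tfrac{1}{4\lambda}\ell_h^\pi(b_h,b_{h+1})$ and applying the two concentration inequalities you describe.
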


\begin{proof}[Proof of Lemma \ref{lem: accuracy of CR}]
See Appendix \ref{subsec: accuracy of CR proof} for a detailed proof.
\end{proof}

When $\alpha_{\mathbb{G},n}\in{\mathcal{O}}(n^{-1/2})$ and $\epsilon_{\mathbb{B}}=0$, Lemma \ref{lem: accuracy of CR} implies that $\mathcal{L}^{\pi}_h(b_h,b_{h+1})\leq\tilde{\mathcal{O}}(n^{-1})$.
Now with Lemma \ref{lem: regret decomposition}, Lemma \ref{lem: true in CR}, and Lemma \ref{lem: accuracy of CR}, by the choice of $\hat{\pi}$ in \texttt{P3O}, we can show that 
\begin{align}\label{eq: sketch}
    J(\pi^{\star})-J(\hat{\pi})&\leq \Tilde{\mathcal{O}}(n^{-1/2})+ \max_{\bbb\in\CR^{\pi^{\star}}(\xi)}F(\bbb)-\min_{\bbb\in\CR^{\hat{\pi}}(\xi)}F(\bbb)\notag\\
    &\leq \Tilde{\mathcal{O}}(n^{-1/2})+\max_{\bbb\in\CR^{\pi^{\star}}(\xi)}F(\bbb)-\min_{\bbb\in\CR^{\pi^{\star}}(\xi)}F(\bbb)\notag\\
    &\leq \Tilde{\mathcal{O}}(n^{-1/2})+
    2\max_{\bbb\in\CR^{\pi^{\star}}(\xi)}\left|F(\bbb)-F(\bbb^{\pi^{\star}})\right|\notag\\
    &\leq \Tilde{\mathcal{O}}(n^{-1/2})+2\max_{\bbb\in\CR^{\pi^{\star}}(\xi)}
    \sum_{h=1}^H\gamma^{h-1}\sqrt{C^{\pi^{\star}}}\cdot\sqrt{\mathcal{L}_h^{\pi^{\star}}(b_h,b_{h+1})},
\end{align}
where the first inequality holds  by Lemma \ref{lem: true in CR}, the second inequality holds from the optimality of $\hat{\pi}$ in Algorithm \ref{alg: main}, the third inequality holds directly, and the last inequality holds by Lemma \ref{lem: regret decomposition}.
Finally, by applying Lemma \ref{lem: accuracy of CR} to the right hand side of \eqref{eq: sketch}, we conclude the proof of Theorem \ref{thm: suboptimality}.
    
\section{Conclusion}

In this work, we propose the first provably efficient offline RL algorithm for POMDPs with confounded datasets.
Such a problem involves the coupled challenges of confounding bias, distributional shift, and function approximation. 
We propose a novel policy optimization algorithm which leverages proximal causal inference for handling the confounding bias, and adopts pessimism to tackle the distributional shift. 
The core of our algorithm is a coupled sequence of confidence regions constructed via proximal causal inference and minimax estimation, which is able to incorporate general function approximation and enables pessimistic policy optimization. 
We prove that the proposed algorithm achieves $n^{-1/2}$-suboptimality under a partial coverage assumption on the offline dataset. We believe the novel algorithm design and analysis that leverage techniques from causal inference will be promising for future research on offline reinforcement learning with partial observations.

\clearpage
\bibliographystyle{ims}
\bibliography{ref}

\begin{thebibliography}{85}
\expandafter\ifx\csname natexlab\endcsname\relax\def\natexlab#1{#1}\fi
\expandafter\ifx\csname url\endcsname\relax
  \def\url#1{\texttt{#1}}\fi
\expandafter\ifx\csname urlprefix\endcsname\relax\def\urlprefix{URL }\fi

\bibitem[{Antos et~al.(2007)Antos, Munos and Szepesv{\'a}ri}]{uniform3}
\textsc{Antos, A.}, \textsc{Munos, R.} and \textsc{Szepesv{\'a}ri, C.} (2007).
\newblock Fitted q-iteration in continuous action-space mdps .

\bibitem[{Bellman and Kalaba(1965)}]{bellman1965dynamic}
\textsc{Bellman, R.} and \textsc{Kalaba, R.~E.} (1965).
\newblock \textit{Dynamic programming and modern control theory}, vol.~81.
\newblock Citeseer.

\bibitem[{Bennett and Kallus(2021)}]{bennett2021proximal}
\textsc{Bennett, A.} and \textsc{Kallus, N.} (2021).
\newblock Proximal reinforcement learning: Efficient off-policy evaluation in
  partially observed markov decision processes.
\newblock \textit{arXiv preprint arXiv:2110.15332} .

\bibitem[{Bennett et~al.(2021)Bennett, Kallus, Li and Mousavi}]{bennett2021off}
\textsc{Bennett, A.}, \textsc{Kallus, N.}, \textsc{Li, L.} and \textsc{Mousavi,
  A.} (2021).
\newblock Off-policy evaluation in infinite-horizon reinforcement learning with
  latent confounders.
\newblock In \textit{International Conference on Artificial Intelligence and
  Statistics}. PMLR.

\bibitem[{Benson et~al.(2020)Benson, Venook, Al-Hawary, Arain, Chen, Ciombor,
  Cohen, Cooper, Deming, Garrido-Laguna et~al.}]{benson2020nccn}
\textsc{Benson, A.~B.}, \textsc{Venook, A.~P.}, \textsc{Al-Hawary, M.~M.},
  \textsc{Arain, M.~A.}, \textsc{Chen, Y.-J.}, \textsc{Ciombor, K.~K.},
  \textsc{Cohen, S.}, \textsc{Cooper, H.~S.}, \textsc{Deming, D.},
  \textsc{Garrido-Laguna, I.} \textsc{et~al.} (2020).
\newblock Nccn guidelines insights: rectal cancer, version 6.2020: featured
  updates to the nccn guidelines.
\newblock \textit{Journal of the National Comprehensive Cancer Network}
  \textbf{18} 806--815.

\bibitem[{Bi et~al.(2022)Bi, Miao, Shen, Wang, Kang, Du, Zhang and
  Fan}]{bi2022detecting}
\textsc{Bi, Q.}, \textsc{Miao, Z.}, \textsc{Shen, J.}, \textsc{Wang, H.},
  \textsc{Kang, K.}, \textsc{Du, J.}, \textsc{Zhang, F.} and \textsc{Fan, S.}
  (2022).
\newblock Detecting the research trends and hot spots in external irradiation
  therapy for rectal cancer.
\newblock \textit{Journal of Cancer} \textbf{13} 2179.

\bibitem[{Buckman et~al.(2020)Buckman, Gelada and
  Bellemare}]{buckman2020importance}
\textsc{Buckman, J.}, \textsc{Gelada, C.} and \textsc{Bellemare, M.~G.} (2020).
\newblock The importance of pessimism in fixed-dataset policy optimization.
\newblock \textit{arXiv preprint arXiv:2009.06799} .

\bibitem[{Burago et~al.(1996)Burago, De~Rougemont and
  Slissenko}]{burago1996complexity}
\textsc{Burago, D.}, \textsc{De~Rougemont, M.} and \textsc{Slissenko, A.}
  (1996).
\newblock On the complexity of partially observed {M}arkov decision processes.
\newblock \textit{Theoretical Computer Science} \textbf{157} 161--183.

\bibitem[{Chakraborty and Moodie(2013)}]{chakraborty2013statistical}
\textsc{Chakraborty, B.} and \textsc{Moodie, E.} (2013).
\newblock Statistical methods for dynamic treatment regimes.
\newblock \textit{Springer-Verlag. doi} \textbf{10} 978--1.

\bibitem[{Chakraborty and Murphy(2014)}]{chakraborty2014dynamic}
\textsc{Chakraborty, B.} and \textsc{Murphy, S.~A.} (2014).
\newblock Dynamic treatment regimes.
\newblock \textit{Annual review of statistics and its application} \textbf{1}
  447--464.

\bibitem[{Chen(2017)}]{chen2017science}
\textsc{Chen, C.} (2017).
\newblock Science mapping: a systematic review of the literature.
\newblock \textit{Journal of data and information science} \textbf{2} 1--40.

\bibitem[{Chen and Jiang(2019)}]{chen2019information}
\textsc{Chen, J.} and \textsc{Jiang, N.} (2019).
\newblock Information-theoretic considerations in batch reinforcement learning.
\newblock In \textit{International Conference on Machine Learning}. PMLR.

\bibitem[{Chernozhukov et~al.(2020)Chernozhukov, Newey, Singh and
  Syrgkanis}]{chernozhukov2020adversarial}
\textsc{Chernozhukov, V.}, \textsc{Newey, W.}, \textsc{Singh, R.} and
  \textsc{Syrgkanis, V.} (2020).
\newblock Adversarial estimation of riesz representers.
\newblock \textit{arXiv preprint arXiv:2101.00009} .

\bibitem[{Cui et~al.(2020)Cui, Pu, Shi, Miao and
  Tchetgen}]{cui2020semiparametric}
\textsc{Cui, Y.}, \textsc{Pu, H.}, \textsc{Shi, X.}, \textsc{Miao, W.} and
  \textsc{Tchetgen, E.~T.} (2020).
\newblock Semiparametric proximal causal inference.
\newblock \textit{arXiv preprint arXiv:2011.08411} .

\bibitem[{Cui et~al.(2021)Cui, Yan, Cao and Liu}]{cui2021tf}
\textsc{Cui, Y.}, \textsc{Yan, L.}, \textsc{Cao, Z.} and \textsc{Liu, D.}
  (2021).
\newblock Tf-blender: Temporal feature blender for video object detection.
\newblock In \textit{Proceedings of the IEEE/CVF International Conference on
  Computer Vision}.

\bibitem[{Dai et~al.(2017)Dai, He, Pan, Boots and Song}]{dai2017learning}
\textsc{Dai, B.}, \textsc{He, N.}, \textsc{Pan, Y.}, \textsc{Boots, B.} and
  \textsc{Song, L.} (2017).
\newblock Learning from conditional distributions via dual embeddings.
\newblock In \textit{Artificial Intelligence and Statistics}. PMLR.

\bibitem[{Dikkala et~al.(2020)Dikkala, Lewis, Mackey and
  Syrgkanis}]{dikkala2020minimax}
\textsc{Dikkala, N.}, \textsc{Lewis, G.}, \textsc{Mackey, L.} and
  \textsc{Syrgkanis, V.} (2020).
\newblock Minimax estimation of conditional moment models.
\newblock \textit{Advances in Neural Information Processing Systems}
  \textbf{33} 12248--12262.

\bibitem[{Duan et~al.(2021)Duan, Jin and Li}]{duan2021risk}
\textsc{Duan, Y.}, \textsc{Jin, C.} and \textsc{Li, Z.} (2021).
\newblock Risk bounds and rademacher complexity in batch reinforcement
  learning.
\newblock In \textit{International Conference on Machine Learning}. PMLR.

\bibitem[{Dulac-Arnold et~al.(2021)Dulac-Arnold, Levine, Mankowitz, Li,
  Paduraru, Gowal and Hester}]{dulac2021challenges}
\textsc{Dulac-Arnold, G.}, \textsc{Levine, N.}, \textsc{Mankowitz, D.~J.},
  \textsc{Li, J.}, \textsc{Paduraru, C.}, \textsc{Gowal, S.} and
  \textsc{Hester, T.} (2021).
\newblock Challenges of real-world reinforcement learning: definitions,
  benchmarks and analysis.
\newblock \textit{Machine Learning} \textbf{110} 2419--2468.

\bibitem[{Efroni et~al.(2022)Efroni, Jin, Krishnamurthy and
  Miryoosefi}]{efroni2022provable}
\textsc{Efroni, Y.}, \textsc{Jin, C.}, \textsc{Krishnamurthy, A.} and
  \textsc{Miryoosefi, S.} (2022).
\newblock Provable reinforcement learning with a short-term memory.
\newblock \textit{arXiv preprint arXiv:2202.03983} .

\bibitem[{Fan et~al.(2023)Fan, Dong, Wu, Ruan, Zeng, Cui and Liao}]{fan2023one}
\textsc{Fan, F.-L.}, \textsc{Dong, H.-C.}, \textsc{Wu, Z.}, \textsc{Ruan, L.},
  \textsc{Zeng, T.}, \textsc{Cui, Y.} and \textsc{Liao, J.-X.} (2023).
\newblock One neuron saved is one neuron earned: On parametric efficiency of
  quadratic networks.
\newblock \textit{arXiv preprint arXiv:2303.06316} .

\bibitem[{Farahmand et~al.(2016)Farahmand, Ghavamzadeh, Szepesv{\'a}ri and
  Mannor}]{farahmand2016regularized}
\textsc{Farahmand, A.-m.}, \textsc{Ghavamzadeh, M.}, \textsc{Szepesv{\'a}ri,
  C.} and \textsc{Mannor, S.} (2016).
\newblock Regularized policy iteration with nonparametric function spaces.
\newblock \textit{The Journal of Machine Learning Research} \textbf{17}
  4809--4874.

\bibitem[{Foster and Syrgkanis(2019)}]{foster2019orthogonal}
\textsc{Foster, D.~J.} and \textsc{Syrgkanis, V.} (2019).
\newblock Orthogonal statistical learning.
\newblock \textit{arXiv preprint arXiv:1901.09036} .

\bibitem[{Friston et~al.(2003)Friston, Harrison and Penny}]{friston2003dynamic}
\textsc{Friston, K.~J.}, \textsc{Harrison, L.} and \textsc{Penny, W.} (2003).
\newblock Dynamic causal modelling.
\newblock \textit{Neuroimage} \textbf{19} 1273--1302.

\bibitem[{Goldsmith and Mundhenk(1998)}]{goldsmith1998complexity}
\textsc{Goldsmith, J.} and \textsc{Mundhenk, M.} (1998).
\newblock Complexity issues in {M}arkov decision processes.
\newblock In \textit{Computational Complexity Conference}.

\bibitem[{Gottesman et~al.(2019)Gottesman, Johansson, Komorowski, Faisal,
  Sontag, Doshi-Velez and Celi}]{gottesman2019guidelines}
\textsc{Gottesman, O.}, \textsc{Johansson, F.}, \textsc{Komorowski, M.},
  \textsc{Faisal, A.}, \textsc{Sontag, D.}, \textsc{Doshi-Velez, F.} and
  \textsc{Celi, L.~A.} (2019).
\newblock Guidelines for reinforcement learning in healthcare.
\newblock \textit{Nature medicine} \textbf{25} 16--18.

\bibitem[{Guo et~al.(2016)Guo, Doroudi and Brunskill}]{guo2016pac}
\textsc{Guo, Z.~D.}, \textsc{Doroudi, S.} and \textsc{Brunskill, E.} (2016).
\newblock A pac rl algorithm for episodic pomdps.
\newblock In \textit{Artificial Intelligence and Statistics}. PMLR.

\bibitem[{He et~al.(2021)He, Lin, Tzeng et~al.}]{he2021interpretable}
\textsc{He, Y.}, \textsc{Lin, F.}, \textsc{Tzeng, N.-F.} \textsc{et~al.}
  (2021).
\newblock Interpretable minority synthesis for imbalanced classification.
\newblock In \textit{International Joint Conferences on Artificial
  Intelligence}.

\bibitem[{Jafarnia-Jahromi et~al.(2021)Jafarnia-Jahromi, Jain and
  Nayyar}]{jafarnia2021online}
\textsc{Jafarnia-Jahromi, M.}, \textsc{Jain, R.} and \textsc{Nayyar, A.}
  (2021).
\newblock Online learning for unknown partially observable mdps.
\newblock \textit{arXiv preprint arXiv:2102.12661} .

\bibitem[{Janner et~al.(2021)Janner, Li and Levine}]{janner2021offline}
\textsc{Janner, M.}, \textsc{Li, Q.} and \textsc{Levine, S.} (2021).
\newblock Offline reinforcement learning as one big sequence modeling problem.
\newblock \textit{Advances in neural information processing systems}
  \textbf{34}.

\bibitem[{Jin et~al.(2020)Jin, Kakade, Krishnamurthy and Liu}]{jin2020sample}
\textsc{Jin, C.}, \textsc{Kakade, S.}, \textsc{Krishnamurthy, A.} and
  \textsc{Liu, Q.} (2020).
\newblock Sample-efficient reinforcement learning of undercomplete pomdps.
\newblock \textit{Advances in Neural Information Processing Systems}
  \textbf{33} 18530--18539.

\bibitem[{Jin et~al.(2021)Jin, Yang and Wang}]{jin2021pessimism}
\textsc{Jin, Y.}, \textsc{Yang, Z.} and \textsc{Wang, Z.} (2021).
\newblock Is pessimism provably efficient for offline rl?
\newblock In \textit{International Conference on Machine Learning}. PMLR.

\bibitem[{Kallus et~al.(2021)Kallus, Mao and Uehara}]{kallus2021causal}
\textsc{Kallus, N.}, \textsc{Mao, X.} and \textsc{Uehara, M.} (2021).
\newblock Causal inference under unmeasured confounding with negative controls:
  A minimax learning approach.
\newblock \textit{arXiv preprint arXiv:2103.14029} .

\bibitem[{Kallus and Zhou(2020)}]{kallus2020confounding}
\textsc{Kallus, N.} and \textsc{Zhou, A.} (2020).
\newblock Confounding-robust policy evaluation in infinite-horizon
  reinforcement learning.
\newblock \textit{Advances in Neural Information Processing Systems}
  \textbf{33} 22293--22304.

\bibitem[{Kallus and Zhou(2021)}]{kallus2021minimax}
\textsc{Kallus, N.} and \textsc{Zhou, A.} (2021).
\newblock Minimax-optimal policy learning under unobserved confounding.
\newblock \textit{Management Science} \textbf{67} 2870--2890.

\bibitem[{Kidambi et~al.(2020)Kidambi, Rajeswaran, Netrapalli and
  Joachims}]{kidambi2020morel}
\textsc{Kidambi, R.}, \textsc{Rajeswaran, A.}, \textsc{Netrapalli, P.} and
  \textsc{Joachims, T.} (2020).
\newblock Morel: Model-based offline reinforcement learning.
\newblock \textit{arXiv preprint arXiv:2005.05951} .

\bibitem[{Krishnamurthy et~al.(2016)Krishnamurthy, Agarwal and
  Langford}]{krishnamurthy2016pac}
\textsc{Krishnamurthy, A.}, \textsc{Agarwal, A.} and \textsc{Langford, J.}
  (2016).
\newblock Pac reinforcement learning with rich observations.
\newblock \textit{Advances in Neural Information Processing Systems}
  \textbf{29}.

\bibitem[{Kumar et~al.(2020)Kumar, Zhou, Tucker and
  Levine}]{kumar2020conservative}
\textsc{Kumar, A.}, \textsc{Zhou, A.}, \textsc{Tucker, G.} and \textsc{Levine,
  S.} (2020).
\newblock Conservative q-learning for offline reinforcement learning.
\newblock \textit{Advances in Neural Information Processing Systems}
  \textbf{33} 1179--1191.

\bibitem[{Levine et~al.(2020)Levine, Kumar, Tucker and Fu}]{levine2020offline}
\textsc{Levine, S.}, \textsc{Kumar, A.}, \textsc{Tucker, G.} and \textsc{Fu,
  J.} (2020).
\newblock Offline reinforcement learning: Tutorial, review, and perspectives on
  open problems.
\newblock \textit{arXiv preprint arXiv:2005.01643} .

\bibitem[{Li et~al.(2023)Li, Chen, You, Huang, Deng and Luo}]{li2023nonconvex}
\textsc{Li, G.}, \textsc{Chen, X.}, \textsc{You, C.}, \textsc{Huang, X.},
  \textsc{Deng, Z.} and \textsc{Luo, S.} (2023).
\newblock A nonconvex model-based combined geometric calibration scheme for
  micro cone-beam ct with irregular trajectories.
\newblock \textit{Medical Physics} .

\bibitem[{Lipsitch et~al.(2010)Lipsitch, Tchetgen and
  Cohen}]{lipsitch2010negative}
\textsc{Lipsitch, M.}, \textsc{Tchetgen, E.~T.} and \textsc{Cohen, T.} (2010).
\newblock Negative controls: a tool for detecting confounding and bias in
  observational studies.
\newblock \textit{Epidemiology (Cambridge, Mass.)} \textbf{21} 383.

\bibitem[{Liu et~al.(2020{\natexlab{a}})Liu, Cui, Chen, Zhang and
  Fan}]{liu2020video}
\textsc{Liu, D.}, \textsc{Cui, Y.}, \textsc{Chen, Y.}, \textsc{Zhang, J.} and
  \textsc{Fan, B.} (2020{\natexlab{a}}).
\newblock Video object detection for autonomous driving: Motion-aid feature
  calibration.
\newblock \textit{Neurocomputing} \textbf{409} 1--11.

\bibitem[{Liu et~al.(2022)Liu, Chung, Szepesv{\'a}ri and
  Jin}]{liu2022partially}
\textsc{Liu, Q.}, \textsc{Chung, A.}, \textsc{Szepesv{\'a}ri, C.} and
  \textsc{Jin, C.} (2022).
\newblock When is partially observable reinforcement learning not scary?
\newblock \textit{arXiv preprint arXiv:2204.08967} .

\bibitem[{Liu et~al.(2020{\natexlab{b}})Liu, Swaminathan, Agarwal and
  Brunskill}]{PessFQI}
\textsc{Liu, Y.}, \textsc{Swaminathan, A.}, \textsc{Agarwal, A.} and
  \textsc{Brunskill, E.} (2020{\natexlab{b}}).
\newblock Provably good batch reinforcement learning without great exploration.
\newblock \textit{arXiv preprint arXiv:2007.08202} .

\bibitem[{Long et~al.(2015)Long, Shelhamer and Darrell}]{long2015fully}
\textsc{Long, J.}, \textsc{Shelhamer, E.} and \textsc{Darrell, T.} (2015).
\newblock Fully convolutional networks for semantic segmentation.
\newblock In \textit{Proceedings of the IEEE conference on computer vision and
  pattern recognition}.

\bibitem[{Lovejoy(1991)}]{lovejoy1991survey}
\textsc{Lovejoy, W.~S.} (1991).
\newblock A survey of algorithmic methods for partially observed markov
  decision processes.
\newblock \textit{Annals of Operations Research} \textbf{28} 47--65.

\bibitem[{Mastouri et~al.(2021)Mastouri, Zhu, Gultchin, Korba, Silva, Kusner,
  Gretton and Muandet}]{mastouri2021proximal}
\textsc{Mastouri, A.}, \textsc{Zhu, Y.}, \textsc{Gultchin, L.}, \textsc{Korba,
  A.}, \textsc{Silva, R.}, \textsc{Kusner, M.}, \textsc{Gretton, A.} and
  \textsc{Muandet, K.} (2021).
\newblock Proximal causal learning with kernels: Two-stage estimation and
  moment restriction.
\newblock In \textit{International Conference on Machine Learning}. PMLR.

\bibitem[{Miao et~al.(2018{\natexlab{a}})Miao, Geng and
  Tchetgen~Tchetgen}]{miao2018identifying}
\textsc{Miao, W.}, \textsc{Geng, Z.} and \textsc{Tchetgen~Tchetgen, E.~J.}
  (2018{\natexlab{a}}).
\newblock Identifying causal effects with proxy variables of an unmeasured
  confounder.
\newblock \textit{Biometrika} \textbf{105} 987--993.

\bibitem[{Miao et~al.(2018{\natexlab{b}})Miao, Shi and
  Tchetgen}]{miao2018confounding}
\textsc{Miao, W.}, \textsc{Shi, X.} and \textsc{Tchetgen, E.~T.}
  (2018{\natexlab{b}}).
\newblock A confounding bridge approach for double negative control inference
  on causal effects.
\newblock \textit{arXiv preprint arXiv:1808.04945} .

\bibitem[{Mundhenk et~al.(2000)Mundhenk, Goldsmith, Lusena and
  Allender}]{mundhenk2000complexity}
\textsc{Mundhenk, M.}, \textsc{Goldsmith, J.}, \textsc{Lusena, C.} and
  \textsc{Allender, E.} (2000).
\newblock Complexity of finite-horizon {M}arkov decision process problems.
\newblock \textit{Journal of the ACM (JACM)} \textbf{47} 681--720.

\bibitem[{Munos and Szepesv{\'a}ri(2008{\natexlab{a}})}]{uniform4}
\textsc{Munos, R.} and \textsc{Szepesv{\'a}ri, C.} (2008{\natexlab{a}}).
\newblock Finite-time bounds for fitted value iteration.
\newblock \textit{Journal of Machine Learning Research} \textbf{9}.

\bibitem[{Munos and Szepesv{\'a}ri(2008{\natexlab{b}})}]{munos2008finite}
\textsc{Munos, R.} and \textsc{Szepesv{\'a}ri, C.} (2008{\natexlab{b}}).
\newblock Finite-time bounds for fitted value iteration.
\newblock \textit{Journal of Machine Learning Research} \textbf{9}.

\bibitem[{Oberst and Sontag(2019)}]{oberst2019counterfactual}
\textsc{Oberst, M.} and \textsc{Sontag, D.} (2019).
\newblock Counterfactual off-policy evaluation with gumbel-max structural
  causal models.
\newblock In \textit{International Conference on Machine Learning}. PMLR.

\bibitem[{Papadimitriou and Tsitsiklis(1987)}]{papadimitriou1987complexity}
\textsc{Papadimitriou, C.~H.} and \textsc{Tsitsiklis, J.~N.} (1987).
\newblock The complexity of markov decision processes.
\newblock \textit{Mathematics of operations research} \textbf{12} 441--450.

\bibitem[{Pearl(2009)}]{pearl2009causality}
\textsc{Pearl, J.} (2009).
\newblock \textit{Causality}.
\newblock Cambridge university press.

\bibitem[{Rashidinejad et~al.(2021)Rashidinejad, Zhu, Ma, Jiao and
  Russell}]{MACONG-2021new-Offline+IL}
\textsc{Rashidinejad, P.}, \textsc{Zhu, B.}, \textsc{Ma, C.}, \textsc{Jiao, J.}
  and \textsc{Russell, S.} (2021).
\newblock Bridging offline reinforcement learning and imitation learning: A
  tale of pessimism.
\newblock \textit{arXiv preprint arXiv:2103.12021} .

\bibitem[{Rockafellar and Wets(2009)}]{rockafellar2009variational}
\textsc{Rockafellar, R.~T.} and \textsc{Wets, R. J.-B.} (2009).
\newblock \textit{Variational analysis}, vol. 317.
\newblock Springer Science \& Business Media.

\bibitem[{Shalev-Shwartz et~al.(2016)Shalev-Shwartz, Shammah and
  Shashua}]{shalev2016safe}
\textsc{Shalev-Shwartz, S.}, \textsc{Shammah, S.} and \textsc{Shashua, A.}
  (2016).
\newblock Safe, multi-agent, reinforcement learning for autonomous driving.
\newblock \textit{arXiv preprint arXiv:1610.03295} .

\bibitem[{Shapiro et~al.(2021)Shapiro, Dentcheva and
  Ruszczynski}]{shapiro2021lectures}
\textsc{Shapiro, A.}, \textsc{Dentcheva, D.} and \textsc{Ruszczynski, A.}
  (2021).
\newblock \textit{Lectures on stochastic programming: modeling and theory}.
\newblock SIAM.

\bibitem[{Shi et~al.(2021)Shi, Uehara and Jiang}]{shi2021minimax}
\textsc{Shi, C.}, \textsc{Uehara, M.} and \textsc{Jiang, N.} (2021).
\newblock A minimax learning approach to off-policy evaluation in partially
  observable markov decision processes.
\newblock \textit{arXiv preprint arXiv:2111.06784} .

\bibitem[{Shi et~al.(2022)Shi, Wang, Luo, Zhu, Ye and Song}]{shi2022dynamic}
\textsc{Shi, C.}, \textsc{Wang, X.}, \textsc{Luo, S.}, \textsc{Zhu, H.},
  \textsc{Ye, J.} and \textsc{Song, R.} (2022).
\newblock Dynamic causal effects evaluation in a/b testing with a reinforcement
  learning framework.
\newblock \textit{Journal of the American Statistical Association}  1--29.

\bibitem[{Singh(2020)}]{singh2020kernel}
\textsc{Singh, R.} (2020).
\newblock Kernel methods for unobserved confounding: Negative controls,
  proxies, and instruments.
\newblock \textit{arXiv preprint arXiv:2012.10315} .

\bibitem[{Sun et~al.(2020)Sun, Kretzschmar, Dotiwalla, Chouard, Patnaik, Tsui,
  Guo, Zhou, Chai, Caine et~al.}]{sun2020scalability}
\textsc{Sun, P.}, \textsc{Kretzschmar, H.}, \textsc{Dotiwalla, X.},
  \textsc{Chouard, A.}, \textsc{Patnaik, V.}, \textsc{Tsui, P.}, \textsc{Guo,
  J.}, \textsc{Zhou, Y.}, \textsc{Chai, Y.}, \textsc{Caine, B.} \textsc{et~al.}
  (2020).
\newblock Scalability in perception for autonomous driving: Waymo open dataset.
\newblock In \textit{Proceedings of the IEEE/CVF Conference on Computer Vision
  and Pattern Recognition}.

\bibitem[{Sutton and Barto(2018)}]{sutton2018reinforcement}
\textsc{Sutton, R.~S.} and \textsc{Barto, A.~G.} (2018).
\newblock \textit{Reinforcement learning: An introduction}.
\newblock MIT press.

\bibitem[{Tchetgen et~al.(2020)Tchetgen, Ying, Cui, Shi and
  Miao}]{tchetgen2020introduction}
\textsc{Tchetgen, E. J.~T.}, \textsc{Ying, A.}, \textsc{Cui, Y.}, \textsc{Shi,
  X.} and \textsc{Miao, W.} (2020).
\newblock An introduction to proximal causal learning.
\newblock \textit{arXiv preprint arXiv:2009.10982} .

\bibitem[{Tsoukalas et~al.(2015)Tsoukalas, Albertson, Tagkopoulos
  et~al.}]{tsoukalas2015data}
\textsc{Tsoukalas, A.}, \textsc{Albertson, T.}, \textsc{Tagkopoulos, I.}
  \textsc{et~al.} (2015).
\newblock From data to optimal decision making: a data-driven, probabilistic
  machine learning approach to decision support for patients with sepsis.
\newblock \textit{JMIR medical informatics} \textbf{3} e3445.

\bibitem[{Uehara et~al.(2021)Uehara, Imaizumi, Jiang, Kallus, Sun and
  Xie}]{uehara2021finite}
\textsc{Uehara, M.}, \textsc{Imaizumi, M.}, \textsc{Jiang, N.}, \textsc{Kallus,
  N.}, \textsc{Sun, W.} and \textsc{Xie, T.} (2021).
\newblock Finite sample analysis of minimax offline reinforcement learning:
  Completeness, fast rates and first-order efficiency.
\newblock \textit{arXiv preprint arXiv:2102.02981} .

\bibitem[{Uehara and Sun(2021)}]{uehara2021pessimistic}
\textsc{Uehara, M.} and \textsc{Sun, W.} (2021).
\newblock Pessimistic model-based offline rl: Pac bounds and posterior sampling
  under partial coverage.
\newblock \textit{arXiv e-prints}  arXiv--2107.

\bibitem[{Vlassis et~al.(2012)Vlassis, Littman and
  Barber}]{vlassis2012computational}
\textsc{Vlassis, N.}, \textsc{Littman, M.~L.} and \textsc{Barber, D.} (2012).
\newblock On the computational complexity of stochastic controller optimization
  in {POMDP}s.
\newblock \textit{ACM Transactions on Computation Theory (TOCT)} \textbf{4}
  1--8.

\bibitem[{Wainwright(2019)}]{wainwright2019high}
\textsc{Wainwright, M.~J.} (2019).
\newblock \textit{High-dimensional statistics: A non-asymptotic viewpoint},
  vol.~48.
\newblock Cambridge University Press.

\bibitem[{Xie et~al.(2021)Xie, Cheng, Jiang, Mineiro and
  Agarwal}]{xie2021bellman}
\textsc{Xie, T.}, \textsc{Cheng, C.-A.}, \textsc{Jiang, N.}, \textsc{Mineiro,
  P.} and \textsc{Agarwal, A.} (2021).
\newblock Bellman-consistent pessimism for offline reinforcement learning.
\newblock \textit{Advances in neural information processing systems}
  \textbf{34}.

\bibitem[{Xie and Jiang(2020)}]{xie2020q}
\textsc{Xie, T.} and \textsc{Jiang, N.} (2020).
\newblock Q* approximation schemes for batch reinforcement learning: A
  theoretical comparison.
\newblock In \textit{Conference on Uncertainty in Artificial Intelligence}.
  PMLR.

\bibitem[{Xiong et~al.(2021)Xiong, Chen, Gao and Zhou}]{xiong2021sublinear}
\textsc{Xiong, Y.}, \textsc{Chen, N.}, \textsc{Gao, X.} and \textsc{Zhou, X.}
  (2021).
\newblock Sublinear regret for learning pomdps.
\newblock \textit{arXiv preprint arXiv:2107.03635} .

\bibitem[{Yan et~al.(2022)Yan, Li, Chen and Fan}]{yan2022efficacy}
\textsc{Yan, Y.}, \textsc{Li, G.}, \textsc{Chen, Y.} and \textsc{Fan, J.}
  (2022).
\newblock The efficacy of pessimism in asynchronous q-learning.
\newblock \textit{arXiv preprint arXiv:2203.07368} .

\bibitem[{Yin et~al.(2022)Yin, Duan, Wang and Wang}]{yin2022near}
\textsc{Yin, M.}, \textsc{Duan, Y.}, \textsc{Wang, M.} and \textsc{Wang, Y.-X.}
  (2022).
\newblock Near-optimal offline reinforcement learning with linear
  representation: Leveraging variance information with pessimism.
\newblock \textit{arXiv preprint arXiv:2203.05804} .

\bibitem[{Yin and Wang(2021)}]{yin2021towards}
\textsc{Yin, M.} and \textsc{Wang, Y.-X.} (2021).
\newblock Towards instance-optimal offline reinforcement learning with
  pessimism.
\newblock \textit{Advances in neural information processing systems}
  \textbf{34}.

\bibitem[{You et~al.(2023{\natexlab{a}})You, Dai, Min, Liu, Zhang, Feng,
  Clifton, Zhou, Staib and Duncan}]{you2023rethinking}
\textsc{You, C.}, \textsc{Dai, W.}, \textsc{Min, Y.}, \textsc{Liu, F.},
  \textsc{Zhang, X.}, \textsc{Feng, C.}, \textsc{Clifton, D.~A.}, \textsc{Zhou,
  S.~K.}, \textsc{Staib, L.~H.} and \textsc{Duncan, J.~S.}
  (2023{\natexlab{a}}).
\newblock Rethinking semi-supervised medical image segmentation: A
  variance-reduction perspective.
\newblock \textit{arXiv preprint arXiv:2302.01735} .

\bibitem[{You et~al.(2023{\natexlab{b}})You, Dai, Min, Staib and
  Duncan}]{you2023implicit}
\textsc{You, C.}, \textsc{Dai, W.}, \textsc{Min, Y.}, \textsc{Staib, L.} and
  \textsc{Duncan, J.~S.} (2023{\natexlab{b}}).
\newblock Implicit anatomical rendering for medical image segmentation with
  stochastic experts.
\newblock \textit{arXiv preprint arXiv:2304.03209} .

\bibitem[{You et~al.(2023{\natexlab{c}})You, Dai, Min, Staib, Sekhon and
  Duncan}]{you2023action++}
\textsc{You, C.}, \textsc{Dai, W.}, \textsc{Min, Y.}, \textsc{Staib, L.},
  \textsc{Sekhon, J.} and \textsc{Duncan, J.~S.} (2023{\natexlab{c}}).
\newblock Action++: Improving semi-supervised medical image segmentation with
  adaptive anatomical contrast.
\newblock \textit{arXiv preprint arXiv:2304.02689} .

\bibitem[{Yu et~al.(2021)Yu, Kumar, Rafailov, Rajeswaran, Levine and
  Finn}]{yu2021combo}
\textsc{Yu, T.}, \textsc{Kumar, A.}, \textsc{Rafailov, R.}, \textsc{Rajeswaran,
  A.}, \textsc{Levine, S.} and \textsc{Finn, C.} (2021).
\newblock Combo: Conservative offline model-based policy optimization.
\newblock \textit{Advances in Neural Information Processing Systems}
  \textbf{34}.

\bibitem[{Zanette(2021)}]{zanette2021exponential}
\textsc{Zanette, A.} (2021).
\newblock Exponential lower bounds for batch reinforcement learning: Batch rl
  can be exponentially harder than online rl.
\newblock In \textit{International Conference on Machine Learning}. PMLR.

\bibitem[{Zanette et~al.(2021)Zanette, Wainwright and
  Brunskill}]{zanette2021provable}
\textsc{Zanette, A.}, \textsc{Wainwright, M.~J.} and \textsc{Brunskill, E.}
  (2021).
\newblock Provable benefits of actor-critic methods for offline reinforcement
  learning.
\newblock \textit{Advances in neural information processing systems}
  \textbf{34}.

\bibitem[{Zhan et~al.(2022)Zhan, Huang, Huang, Jiang and Lee}]{zhan2022offline}
\textsc{Zhan, W.}, \textsc{Huang, B.}, \textsc{Huang, A.}, \textsc{Jiang, N.}
  and \textsc{Lee, J.~D.} (2022).
\newblock Offline reinforcement learning with realizability and single-policy
  concentrability.
\newblock \textit{arXiv preprint arXiv:2202.04634} .

\bibitem[{Zhang and Bareinboim(2016)}]{zhang2016markov}
\textsc{Zhang, J.} and \textsc{Bareinboim, E.} (2016).
\newblock Markov decision processes with unobserved confounders: A causal
  approach.
\newblock Tech. rep., Technical report, Technical Report R-23, Purdue AI Lab.

\bibitem[{Zou et~al.(2023)Zou, Chen, Shi, Guo and Ye}]{zou2023object}
\textsc{Zou, Z.}, \textsc{Chen, K.}, \textsc{Shi, Z.}, \textsc{Guo, Y.} and
  \textsc{Ye, J.} (2023).
\newblock Object detection in 20 years: A survey.
\newblock \textit{Proceedings of the IEEE} .

\end{thebibliography}
\newpage
\appendix
\section{Table of Notations}\label{sec: table of notations} 
In this section, we provide a comprehensive clarification on the use of notation in this paper. 
We use lower case letters (i.e., $s$, $a$, $o$, and $\tau$) to represent dummy variables and upper case letters (i.e., $S$, $A$, $O$, and $\Gamma$) to represent random variables. We use the variables in the calligraphic font (i.e., $\cS$, $\cA$, $\cO$, and $\cH$) to represent the spaces of variables, and the blackboard bold font (i.e., $\mathbb{P}$ and $\mathbb{O}$) to represent probability kernels.  

We use $\mathcal{H}=\{\mathcal{H}_{h}\}_{h=0}^{H-1}$ to denote the space of observable history, where each element $\tau_h \in \mathcal{H}_h$ is a (partial) trajectory such that $\tau_h \subseteq  \{(o_1,a_1),\cdots,(o_h,a_h)\}$.
We use $\pi^b = \{\pi^b_h\}_{h=1}^{H}$ to denote the behavior policy, 
where $\pi_h^b:\mathcal{S}\mapsto\Delta(\mathcal{A})$.
We use $\pi = \{\pi_h\}_{h=1}^{H}\in\Pi(\cH)$ to denote a history-dependent policy
with $\pi_h:\mathcal{O}\times\mathcal{H}_{h-1}\mapsto\Delta(\mathcal{A})$. Also, we use $\pi^{\star} = \{\pi_h^{\star}\}_{h=1}^{H}$ to denote the optimal history-dependent policy.
Offline data $\mathbb{D}$ is collected by $\pi^b$, as described in Section \ref{subsec: data generation}.

We use $\cP^{b}=\{\cP_h^b\}_{h=1}^H$ and $\cP^{\pi}=\{\cP_h^\pi\}_{h=1}^H$ to denote the distribution of trajectories under the policy $\pi^b$ and $\pi$, respectively, where
$\cP^{b}_h$ and $\cP^{\pi}_h$ denote the density of corresponding variables at step $h$.
Also, we use $\EE_{\pi^b}$ and $\EE_{\pi}$ to denote the expectation w.r.t. the distribution $\cP^{b}$ and $\cP^{\pi}$.
We use $\hat{\mathbb{E}}_{\pi^b}$ to denote the empirical version of $\mathbb{E}_{\pi^b}$, which is calculated on data $\mathbb{D}$.

Through out the paper, we use $\mathcal{O}(\cdot)$
to hide problem-independent constants and use $\tilde{\cO}(\cdot)$ to hide problem-independent constants plus logarithm factors.
The following table summaries the notations we used in our proposed algorithm design and theory.

\begin{table}[!ht]
\centering
\renewcommand*{\arraystretch}{1.5}
\begin{tabular}{ >{\centering\arraybackslash}m{2cm} | >{\centering\arraybackslash}m{11.1cm} } 
\hline\hline
Notation & Meaning \\ 
\hline
 $J(\pi)$ & Policy value $\mathbb{E}_{\pi}[\sum_{h=1}^H\gamma^{h-1}R_h]$\\ 

 $b_h^{\pi}$, $q_h^{\pi}$ & value bridge function, weight bridge function of $\pi$ at step $h$\\ 

 $\bbb^{\pi}$, $\mathbf{q}^{\pi}$ & value bridge function vector, weight bridge function vector of $\pi$\\ 
 
 $\CR^{\pi}(\xi)$ & confidence region of $\bbb^{\pi}$, according to \eqref{eq: define CR}\\
 
 $\bbb$ & an element in the confidence region $\CR^{\pi}(\xi)$\\
 
 $F(\bbb)$, $\hat F(\bbb)$ & a mapping for identification with $J(\pi)=F(\bbb^{\pi})$, according to \eqref{eq: identifiaction}\\
 
 \hline
 
 $\ell_h^{\pi}$ & "Bellman residual" for bridge functions, according to \eqref{eq: bellman}
\\

$\mathcal{L}_h^{\pi}$ & residual mean square loss for $\ell_h^{\pi}$, according to \eqref{eq: RMSE loss}\\
 
 $\Phi_{\pi,h}^{\lambda}$,  $\hat{\Phi}_{\pi,h}^{\lambda}$
 & a mapping for minimax estimation, according to \eqref{eq: population phi lambda}\\
 
$\hat{b}_h(b_{h+1})$ & minimax estimator of $b_h^{\pi}$ given $b_{h+1}$, according to \eqref{eq: hat b}
 \\ 
\hline

$\hat{J}_{\text{pess}}(\pi)$ & pessimistic estimator of $J(\pi)$, according to \eqref{eq: pess J}\\
$\hat{\pi}$ & policy returned by \texttt{P3O} algorithm, according to \eqref{eq: def hat pi}\\
\hline \hline
\end{tabular}
\caption{Table of Notations}
\label{tab:notation}
\end{table}

\section{Further Discussions}

\subsection{Discussions about the Partial Coverage Assumption}

Our work assumes the partial coverage of $\mathbb{D}$ according to Assumption \ref{assump: partial coverage}, where we implicitly requires that $\mathcal{P}_h^\pi\left(S_h, \Gamma_{h-1}\right) / \mathcal{P}_h^b\left(S_h, \Gamma_{h-1}\right)<+\infty$ for all $\pi\in\Pi(\mathcal{H})$ (we call it the finite-ratio condition from here). 
We note that this finite-ratio condition can NOT be regarded as the full coverage assumption. Instead, this is a regularity condition that arises from causal inference. 

First of all, the finite-ratio condition is different from the full coverage assumption in standard MDPs. 
The Full coverage assumption in standard MDPs usually takes the form that 
$$\max _{\pi \in \Pi} \frac{\mathcal{P}_{h}^\pi(s, a) }{ \mathcal{P}_{h}^b(s, a)}<C,$$
for some fixed $C>0$. 
This condition means the density ratio of the marginal distributions of $(s,a)$ between any target policy $\pi$
and the behavior policy $\pi^b$ is uniformly bounded by a constant. 
This condition (or some similar form) is a common and widely accepted form of full coverage in the MDP literature, e.g. \citep{chen2019information, xie2020q}. 
Note that this constant $C$ is a uniform upper bound over the candidate policy class.
Very importantly, this constant $u^{\prime}$ appears in the final error bound. 
The partial coverage assmuption in MDP, on the other hand, is commonly formulated as 
$$ \frac{\mathcal{P}_{h}^{\pi^{\star}}(s, a) }{ \mathcal{P}_{h}^b(s, a)}<C,$$
This condition means the density ratio of the marginal distributions of $(s, a)$ between only the optimal policy $\pi^{\star}$ and the behavior policy $\pi^b$, is bounded by a constant. 
The form of this assumption is very close to Assumption \ref{assump: partial coverage} (Partial coverage) in our paper. 
In other words, our Assumption~\ref{assump: partial coverage} is a version of the partial coverage assumption that is tailored to the POMDP case. 
Notably, this constant $C$ in the partial coverage assumption also appears in the final error bound. 

As a sharp comparison to both the full coverage and partial coverage assumptions, the finite-ratio condition that the quantity $\mathcal{P}_h^\pi\left(S_h, \Gamma_{h-1}\right) / \mathcal{P}_h^b\left(S_h, \Gamma_{h-1}\right)<+\infty$ for all $\pi\in\Pi(\mathcal{H})$ does not result in any constant factor that appears in the final error bound. 
In the case of infinite policy class $\Pi(\mathcal{H})$, we can allow the ratio to be arbitrarily large and that won't hurt our final error bound. 
Therefore, this is not a coverage assumption. 
Our finite-ratio condition is a regularity condition that arises from causal inference. 
This condition is needed to deal with the extra challenge of the confounding issue in our POMP setting. 
In related works studying OPE under confounded POMDP \citep{shi2021minimax}, this finite-ratio condition is also needed. 
Overall, our paper is indeed under partial coverage and the finite ration condition is not a kind of coverage assumption.

\subsection{Discussions on Relations Between Minimax-typed Loss and Least-square-typed Loss} 
During the preparation of this paper, we find that \emph{in the MDP setting}, the least-square-typed loss considered by \citep{xie2021bellman} can be reformulated to the minimax-typed loss that we consider in this paper with a different dual function class. 
To see this, consider the MDP setting with a single transition tuple $(S_h ,A_h, S_{h+1})$. 
The goal is to estimate the Bellman target $(\mathcal{B}  V_{h+1} ) \colon \mathcal{S} \times \mathcal{A} \rightarrow \mathbb{R}$, 
where $\mathcal{B}$ is the Bellman operator and $V_{h+1} \colon \mathcal{S} \rightarrow \mathbb{R}$ is a fixed state-value function.
For each $(s,a)\in\mathcal{S} \times \mathcal{A} $, $(\mathcal{B}^{\pi} f_{h+1} ) (s,a)$ is given by 
\begin{align*}
(\mathcal{B} f_{h+1} )  (s,a) = R_h (s,a) +\int_{\mathcal{S}} P_h (\mathrm{d}s' | s,a) V_{h+1} (s') . 
\end{align*}
Here $R_h$ is the reward function and we can assume it is known for now, and $P_h:\mathcal{S}\times\mathcal{A}\mapsto\Delta(\mathcal{S})$ is the unknown transition kernel. 
We use function class $\mathcal{F}$ to approximate the bellman target.
Then based on the offline transition data $\mathbb{D}=\{(s_h^{\tau} , a_h^{\tau}, s_{h+1}^{\tau} )\}_{\tau=1}^N$, the least-square-typed loss function given in Equation (3.1) of \citep{xie2021bellman} becomes 
\begin{align}\label{eq: ls loss}
    \widehat{\mathcal{L}}_{h}^{\text{ls}}(f_h)=\widehat{\mathbb{E}}_{\mathbb{D}}&\left[\big(f_h(S_h,A_h)-R_h-V_{h+1}(S_{h+1}\big)^2\right]\notag\\&- \min_{f^{\prime}_h\in\mathcal{F}}
    \widehat{\mathbb{E}}_{\mathbb{D}}\left[\big(f_h^{\prime}(S_h,A_h)-R_h-V_{h+1}(S_{h+1}\big)^2\right],
\end{align}
where $R_h$ is an abbreviation for $R_h(S_h,A_h)$. Using the equality $x^2-y^2=(x+y)(x-y)$, we can rewrite the least-square-typed loss \eqref{eq: ls loss} as
\begin{align*}
    \widehat{\mathcal{L}}_{h}^{\text{ls}}(f_h)=\sup_{f_h^{\prime}\in\mathcal{F}}\widehat{\mathbb{E}}_{\mathbb{D}}\Big[\Big((f_h+f_h^{\prime})(S_h,A_h)-2R_h-2V_{h+1}(S_{h+1})\Big)\Big((f_h-f_h^{\prime})(S_h,A_h)\Big)\Big].
\end{align*}
For derivation, we further rewrite first term as 
\begin{align*}
    &(f_h+f_h^{\prime})(S_h,A_h)-2R_h-2V_{h+1}(S_{h+1}) \\
    &\qquad = \Big(2f_h(S_h,A_h)-2R_h-2V_{h+1}(S_{h+1})\Big)-\Big((f_h-f_h^{\prime})(S_h,A_h)\Big).
\end{align*}
With this, we can then rewrite the least-square-typed loss \eqref{eq: ls loss} as
\begin{align*}
    \widehat{\mathcal{L}}_{h}^{\text{ls}}(f_h)&=\sup_{f_h^{\prime}\in\mathcal{F}}\widehat{\mathbb{E}}_{\mathbb{D}}\Big[\Big(2f_h(S_h,A_h)-2R_h-2V_{h+1}(S_{h+1})\Big)\Big((f_h-f_h^{\prime})(S_h,A_h)\Big)\\
    &\qquad-\Big((f_h-f_h^{\prime})(S_h,A_h)\Big)^2\Big].
\end{align*}
Now by defining a new function class $\mathcal{G}_f$ \emph{depending on $f$} as $\mathcal{G}_f=\{f-f^{\prime}:f^{\prime}\in\mathcal{F}\}$, we arrive that 
\begin{align}\label{eq: ls final}
    \frac{1}{2}\widehat{\mathcal{L}}_{h}^{\text{ls}}(f_h)=\sup_{g_h\in\mathcal{G}_{f_h}}\widehat{\mathbb{E}}_{\mathbb{D}}\Big[\Big(f_h(S_h,A_h)-R_h-V_{h+1}(S_{h+1})\Big)g_h(S_h,A_h)-\frac{1}{2}g_h(S_h,A_h)^2\Big].
\end{align}
This shares the same form as the minimax-typed loss $\sup_{g_h\in\mathbb{G}}\hat{\Phi}_{\pi,h}^{1/2}(b_h,b_{h+1};g_h)$ we consider in our work, see \eqref{eq: empirical phi lambda} in the main text. 
But still there are differences. In \eqref{eq: ls final}, the dual function $g_h$ lies in a dual function class $\mathcal{G}_{f_h}$ which depends on the primal function $f_h$.
While in our minimax-typed loss, the dual function class does not depends on the primal function.

Finally, we need to point out that even the two losses share the same form, the form of the confidence region considered by our work is different from that considered by \cite{xie2021bellman}.
To see this, still using the previous notations, the confidence region in \cite{xie2021bellman} (Equation (3.2)) becomes 
\begin{align*}
    \CR_h(\xi)=\left\{f_h\in\mathcal{F}: \widehat{\mathcal{L}}_{h}^{\text{ls}}(f_h)\leq \xi\right\}.
\end{align*}
Meanwhile, if we reduce our confidence region to the above MDP setting, our confidence region should be in the form of 
\begin{align*}
    \CR_h(\xi)=\left\{f_h\in\mathcal{F}: \widehat{\mathcal{L}}_{h}^{\text{mm}}(f_h)-\min_{f_h\in\mathcal{F}}\widehat{\mathcal{L}}_{h}^{\text{mm}}(f_h)\leq \xi\right\},
\end{align*}
where $\mathcal{L}_{h}^{\text{mm}}(f_h)$ denotes the minimax-typed-loss.
Our algorithm and theoretical analysis are based on the second form of confidence region, which is key to the derivation of fast statistical rates for elements in the  confidence region based on minimax estimation.

\subsection{Discussions of Applications: Real-world Examples of Proximal Causal Inference in RL}

POMDP models have widespread practical applications in fields such as autonomous driving, communication systems, and medical treatment planning.
For instance, in the case of autonomous driving, the state of the environment is generally unknown, and the intention to accelerate or decelerate of surrounding vehicles' drivers can be considered a latent state. 
Therefore, observations collected and processed by devices on the autonomous vehicle, such as images and videos, are used to estimate the latent state~\citep{liu2020video, cui2021tf, he2021interpretable, fan2023one}.
Similarly, in the case of medical treatment, the state of the patient is also latent, and doctors rely on devices such as MRI to generate images that provide information about the patient's underlying condition~\citep{chen2017science, benson2020nccn, bi2022detecting, li2023nonconvex,you2023rethinking}. 
These visual data can be processed using machine learning to produce meaningful signals for diagnosis and treatment of cancer~\citep{long2015fully, you2023implicit, you2023action++,zou2023object}.

In the following, we consider a concrete real-world example of applying POMDP models to sepsis treatment, which was first studied by \citet{tsoukalas2015data}. 
In such an example, the state, action, observation, and reward of the POMDP are given by the following:
\begin{itemize}
    \item State variable $S_h$ refers to the clinical state of the patient, e.g., sepsis, SIRS, Bacteremia, etc. 
    \item Observable variable $O_h$ refers to all the information one can read from a medical device, such as the heart rate, the respiratory rate, blood pressure, blood test result of infection, etc. 
    \item Action $A_h$ refers to certain treatment given to the patient. For example, each antibiotic combination can be considered as an action. As mentioned in \citet{tsoukalas2015data}, a total of 48 antibiotics have been included in the patient’s remedy. 
    \item Reward/cost values need to be provided empirically by physicians, based on the severity of each state of the patient. In the example of \citet{tsoukalas2015data}, the states and their corresponding rewards/costs include: Healthy (100,000), No SIRS (50,000), Probable Sepsis (PS, 5000), SIRS (-50), Bacteremia~(-10,000), etc. 
    \item Finally, a history trajectory is the record of antibiotic treatment received by the patient. The behavior policy is some treatment plans that have been applied to some patients to generate the dataset. 
\end{itemize}

When using reactive policies (Example \ref{example: reactive}), the negative control action variable ($Z_h$) is just the observation variable $O_{h-1}$ which reflects the patient’s clinical state at the last treatment time step, and the negative control outcome variable ($W_h$) is just the observation variable $O_h$ at the current time step. Furthermore, when the observation $O$ contains enough information to reflect the underlying state $S$, which basically implies a certain full rank assumption, we can then use Example \ref{exampe: bridge exist} to guarantee the existence of the bridge functions (See Appendix \ref{sec: proximal causal inference}).

\section{Proximal Causal Inference}
\label{sec: proximal causal inference}

In this Section, we complement the discussion of proximal causal inference in Section \ref{subsec: identification}.

\subsection{Illustration of Examples}\label{subsec: illustration plot}

In this subsection, we give detailed discussions for the three examples of history-dependent policies mentioned in Section \ref{subsec: goal}.
In particular, we give causal graphs of the POMDP when adopting these policies. Also, we explain the choice of negative control variables for these policies in Section \ref{subsec: identification}.


\subsubsection{Reactive Policy (Example \ref{example: reactive} and Example \ref{example: reactive negative} revisited)}
When the target policy is reactive policy, it only depends on the current observation $O_h$. That is, $\mathcal{H}_{h-1}=\{\emptyset\}$ and $\Gamma_{h-1}=\emptyset$ for each $h\in[H]$.
The causal graph for such  a target policy  is shown in Figure \ref{fig: reactive}. 
In this case, we choose the negative control action as  $Z_h=O_{h-1}$ (node in \textcolor{green1}{\bf green}) and the negative control outcome as $W_h=O_h$ (node in \textcolor{yellow1}{\bf yellow}).
By this choice, we can check the independence condition in Assumption \ref{assump: negative control cond independence} via Figure \ref{fig: reactive}, i.e., under $\cP^{\pi}$,
\begin{align*}
    O_{h-1} \perp O_h, R_h, O_{h+1} \mid  S_h, A_h \quad O_h \perp A_h,S_{h-1} \mid S_h.
\end{align*}

\tikzset{global scale/.style={
    scale=#1,
    every node/.append style={scale=#1}
  }
}

\begin{figure}[h]
    \centering
    \begin{tikzpicture}[->,>=stealth', very thick, main node/.style={circle,draw}, global scale = 0.60]
    
       \node[main node, text=black, minimum width =35pt, 
minimum height =35pt] (-3) at  (-1.5,-1.5) {$R_{h-1}$};
    \node[main node, text=black, minimum width =35pt ,
minimum height =35pt] (-2) at  (1.5,-1.5) {$R_{h}$};

    \node[main node, text=black, minimum width =35pt, 
minimum height =35pt, style=dotted] (1) at  (-3,0) {$S_{h-1}$};
    \node[main node, text=black, minimum width =35pt ,
minimum height =35pt, style=dotted] (2) at  (0,0) {$S_{h}$};
\node[main node, text=black, minimum width =35pt ,
minimum height =35pt, style=dotted] (3) at  (3,0) {$S_{h+1}$};

\node[main node, text=black, minimum width =35pt ,
minimum height =35pt, fill=green1] (4) at  (-3,2) {$O_{h-1}$};
\node[main node, text=black, minimum width =35pt ,
minimum height =35pt, fill=yellow1] (5) at  (0,2) {$O_{h}$};
\node[main node, text=black, minimum width =35pt ,
minimum height =35pt] (6) at  (3,2) {$O_{h+1}$};

\node[main node, text=black, minimum width =35pt ,
minimum height =35pt] (7) at  (-1.5,3.5) {$A_{h-1}$};
\node[main node, text=black, minimum width =35pt ,
minimum height =35pt ] (8) at  (1.5,3.5) {$A_{h}$};

    \draw[->, style] (1) --(-3);
    \draw[->, style] (2) --(-2);

    \draw[->, style] (-4.5,0) --(1);
    \draw[->, style] (1) --(2);
    \draw[->, style] (2) --(3);
    \draw[->, style] (3) --(4.5,0);
    
    \draw[->, style] (1) --(4);
    \draw[->, style] (2) --(5);
    \draw[->, style] (3) --(6);
    
    \draw[->] (4) --(7);
    \draw[->, red1] (5) --(8);
    
    \draw[->] (7) --(-3);
    \draw[->] (8) --(-2);
    
    \draw[->, style] (7) --(2);
    \draw[->, style] (8) --(3);
    
     \draw[->, style, draw = blue1] (1) --(7);
    \draw[->, style, draw = blue1] (2) --(8);

    \end{tikzpicture}
    \caption{Causal graph for reactive policy. 
    The dotted nodes indicate that the variables are not stored in the offline dataset. 
    Solid arrows indicate the dependency among  the variables. Specifically,
    The \textcolor{red1}{\bf red} arrows depict the dependence of the target policy on the observable variables. The \textcolor{blue1}{\bf blue} arrows depict the dependence of the behavior policy on the latent state. The negative control action and outcome variables at the $h$-th step are filled in \textcolor{green1}{\bf green} and \textcolor{yellow1}{\bf yellow}, respectively.}
    \label{fig: reactive}
\end{figure}
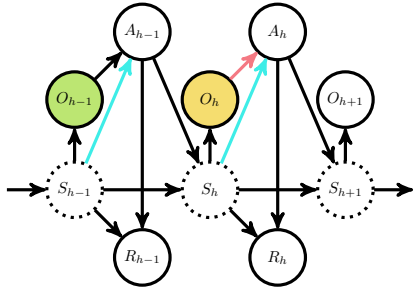

\subsubsection{Finite-history Policy (Example \ref{example: finite history} and Example \ref{example: finite history negative} revisited)}

When the target policy is finite-history policy, it depends on the current observation and history of length at most $k$. That is, $\mathcal{H}_{h-1}=(\mathcal{O}\times\mathcal{A})^{\otimes\min\{k, h-1\}}$ for some $k\in\mathbb{N}$,
$\Gamma_{h-1}=((O_l,A_l),\cdots,(O_{h-1},A_{h-1}))$ where the index $l = \max\{1, h-k\}$.
The causal graph for such a target policy  is shown in Figure \ref{fig: finite history}. 
In this case, we choose the negative control action as  $Z_h=O_{l-1}$ (node in \textcolor{green1}{\bf green}) and the negative control outcome as $W_h=O_h$ (node in \textcolor{yellow1}{\bf yellow}).
By this choice, we can check the independence condition in Assumption \ref{assump: negative control cond independence} via Figure \ref{fig: finite history}, i.e., under $\cP^{\pi}$,
\begin{align*}
    O_{l-1} &\perp O_h, R_h, O_{h+1} \mid S_h,A_h,O_{h-1},A_{h-1},\cdots,O_l,A_l, \\
    O_h &\perp A_h,S_{h-1},O_{h-1},A_{h-1},\cdots,O_l,A_l \mid S_h.
\end{align*}

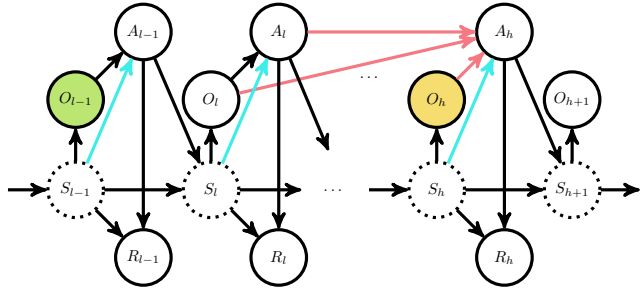
\begin{figure}[h]
    \centering
    \begin{tikzpicture}[->,>=stealth', very thick, main node/.style={circle,draw}, global scale = 0.6]
    
    \node[main node, text=black, minimum width =35pt, 
minimum height =35pt] (-4) at  (-6.5,-1.5) {$R_{l-1}$};
    \node[main node, text=black, minimum width =35pt, 
minimum height =35pt] (-3) at  (-3.5,-1.5) {$R_{l}$};
    \node[main node, text=black, minimum width =35pt ,
minimum height =35pt] (-2) at  (1.5,-1.5) {$R_{h}$};

    \node[main node, text=black, minimum width =35pt, 
minimum height =35pt, style=dotted] (1) at  (-8,0) {$S_{l-1}$};
    \node[main node, text=black, minimum width =35pt, 
minimum height =35pt, style=dotted] (2) at  (-5,0) {$S_{l}$};
    \node[main node, text=black, minimum width =35pt ,
minimum height =35pt, style=dotted] (3) at  (0,0) {$S_{h}$};
\node[main node, text=black, minimum width =35pt ,
minimum height =35pt, style=dotted] (4) at  (3,0) {$S_{h+1}$};

\node[main node, text=black, minimum width =35pt ,
minimum height =35pt, fill=green1] (5) at  (-8,2) {$O_{l-1}$};
\node[main node, text=black, minimum width =35pt ,
minimum height =35pt ] (6) at  (-5,2) {$O_{l}$};
\node[main node, text=black, minimum width =35pt ,
minimum height =35pt,  fill=yellow1] (7) at  (0,2) {$O_{h}$};
\node[main node, text=black, minimum width =35pt ,
minimum height =35pt] (8) at  (3,2) {$O_{h+1}$};

\node[main node, text=black, minimum width =35pt ,
minimum height =35pt] (9) at  (-6.5,3.5) {$A_{l-1}$};
\node[main node, text=black, minimum width =35pt ,
minimum height =35pt ] (10) at  (-3.5,3.5) {$A_{l}$};
\node[main node, text=black, minimum width =35pt ,
minimum height =35pt ] (11) at  (1.5,3.5) {$A_{h}$};

     \draw[->, style] (1) --(-4);
    \draw[->, style] (2) --(-3);
    \draw[->, style] (3) --(-2);

    \draw[->, style] (-9.5,0) --(1);
    \draw[->, style] (1) --(2);
    \draw[->](2) --(-3.0,0);
    \draw[style] (-1.5,0) --(3);
    \draw[->, style] (3) --(4);
    \draw[->, style] (4) --(4.5, 0);
    \node[text=black] at  (-2.3,0) {$\ldots$};
    \node[text=black] at  (-1.5,2.5) {$\ldots$};
    
    \draw[->, style] (1) --(5);
    \draw[->, style] (2) --(6);
    \draw[->, style] (3) --(7);
    \draw[->, style] (4) --(8);
    
    \draw[->] (5) --(9);
    \draw[->] (6) --(10);
    \draw[->,draw=red1 ] (7) --(11);
    \draw[->,draw=red1 ] (6) --(11);
    \draw[->,draw=red1 ] (10) --(11);
    
    \draw[->, style] (9) --(2);
    \draw[->, style] (10) --(-2.4,0.875);
    \draw[->, style] (11) --(4);
    
    \draw[->, style, draw = blue1] (1) --(9);
    \draw[->, style, draw = blue1] (2) --(10);
    \draw[->, style, draw = blue1] (3) --(11);
    
    \draw[->, style] (9) --(-4);
    \draw[->, style] (10) --(-3);
    \draw[->, style] (11) --(-2);

    \end{tikzpicture}
    \caption{Causal graph for finite-length history policy. Index $l=\max\{1,h-k\}$.
    The dotted nodes indicate that the variables are not stored in the offline dataset. 
    Solid arrows indicate the dependency among the variables.
    Specifically,
    The \textcolor{red1}{\bf red} arrows depict the dependence of the target policy on the observable variables. The \textcolor{blue1}{\bf blue} arrows depict the dependence of the behavior policy on the latent state. The negative control action and outcome variables at step $h$ are filled in \textcolor{green1}{\bf green} and \textcolor{yellow1}{\bf yellow}, respectively.
    }
    \label{fig: finite history}
\end{figure}

\subsubsection{Full-history Policy (Example \ref{example: full history} and Example \ref{example: full history negative} revisited)}

When the target policy is full-history policy, it depends on the current observation and full history information. That is, $\mathcal{H}_{h-1}=(\mathcal{O}\times\mathcal{A})^{\otimes(h-1)}$ and 
$\Gamma_{h-1}=((O_1,A_1),\cdots,(O_{h-1},A_{h-1}))$.
The causal graph for such a target policy  is shown in Figure \ref{fig: full history}. 
In this case, we choose the negative control action  as $Z_h=O_{0}$ (node in \textcolor{green1}{\bf green}) and the negative control outcome as $W_h=O_h$ (node in \textcolor{yellow1}{\bf yellow}).
By this choice, we can check the independence condition in Assumption \ref{assump: negative control cond independence} via Figure \ref{fig: full history}, i.e., under $\cP^{\pi}$,
\begin{align*}
    O_{0} &\perp O_h, R_h, O_{h+1} \mid S_h,A_h,O_{h-1},A_{h-1},\cdots,O_1,A_1, \\
    O_h &\perp A_h,S_{h-1},O_{h-1},A_{h-1},\cdots,O_1,A_1 \mid S_h.
\end{align*}
\begin{figure}[h]
    \centering
    \begin{tikzpicture}[->,>=stealth', very thick, main node/.style={circle,draw}, global scale=0.60]
    
    \node[main node, text=black, minimum width =35pt, 
minimum height =35pt] (-3) at  (-3.5,-1.5) {$R_1$};
    \node[main node, text=black, minimum width =35pt, 
minimum height =35pt] (-2) at  (1.5,-1.5) {$R_{h}$};

    \node[main node, text=black, minimum width =35pt, 
minimum height =35pt,  fill=green1] (1) at  (-7,0) {$O_0$};
    \node[main node, text=black, minimum width =35pt, 
minimum height =35pt, style=dotted] (2) at  (-5,0) {$S_{1}$};
    \node[main node, text=black, minimum width =35pt ,
minimum height =35pt, style=dotted] (3) at  (0,0) {$S_{h}$};
\node[main node, text=black, minimum width =35pt ,
minimum height =35pt, style=dotted] (4) at  (3,0) {$S_{h+1}$};

\node[main node, text=black, minimum width =35pt ,
minimum height =35pt ] (6) at  (-5,2) {$O_{1}$};
\node[main node, text=black, minimum width =35pt ,
minimum height =35pt,  fill=yellow1] (7) at  (0,2) {$O_{h}$};
\node[main node, text=black, minimum width =35pt ,
minimum height =35pt] (8) at  (3,2) {$O_{h+1}$};

\node[main node, text=black, minimum width =35pt ,
minimum height =35pt ] (10) at  (-3.5,3.5) {$A_{1}$};
\node[main node, text=black, minimum width =35pt ,
minimum height =35pt ] (11) at  (1.5,3.5) {$A_{h}$};

    \draw[->, style] (1) --(2);
    \draw[->](2) --(-3.0,0);
    \draw[style] (-1.5,0) --(3);
    \draw[->, style] (3) --(4);
    \draw[->, style] (4) --(4.5,0);
    \node[text=black] at  (-2.3,0) {$\ldots$};
    \node[text=black] at  (-1.5,2.5) {$\ldots$};
    
    \draw[->, style] (2) --(6);
    \draw[->, style] (3) --(7);
    \draw[->, style] (4) --(8);
    
    \draw[->, style] (2) --(-3);
    \draw[->, style] (3) --(-2);
    
     \draw[->, style] (10) --(-3);
    \draw[->, style] (11) --(-2);

    \draw[->] (6) --(10);
    \draw[->,draw=red1 ] (7) --(11);
    \draw[->,draw=red1 ] (6) --(11);
    \draw[->,draw=red1 ] (10) --(11);

    \draw[->, style] (10) --(-2.4,0.875);
    \draw[->, style] (11) --(4);

    \draw[->, style, draw = blue1] (2) --(10);
    \draw[->, style, draw = blue1] (3) --(11);
    

    \end{tikzpicture}
    \caption{Causal graph for full-length history policy.
    The dotted nodes indicate that the variables are not stored in the offline dataset. 
    Solid arrows indicate the dependency among  the variables.
    Specifically,
    The \textcolor{red1}{\bf red} arrows depict the dependence of the target policy on the observable variables. The \textcolor{blue1}{\bf blue} arrows depict the dependence of the behavior policy on the latent state. The negative control action and outcome variables at step $h$ are filled in \textcolor{green1}{\bf green} and \textcolor{yellow1}{\bf yellow}, respectively.}
    \label{fig: full history}
\end{figure}
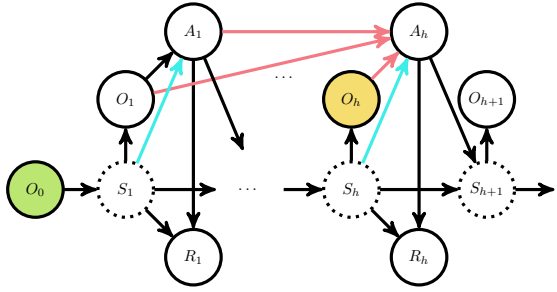

\subsection{Proofs for Examples of Assumption~\ref{assump: bridge functions exist}}\label{subsec: proximal causal inference proof}

\begin{proof}[Proof of Example \ref{exampe: bridge exist}]
Recall that for reactive policies, the history information $\Gamma_{h-1}=\emptyset$. We first show that under condition \eqref{eq: example rank condition}, there exist functions $\{b^{\pi}_h\}_{h=1}^H$ and $\{q^{\pi}_h\}_{h=1}^H$ which solve the following equations 
\begin{align}
    &\EE_{\pi^b}\left[b^{\pi}_{h}(A_h, O_h)\middle| A_h, S_h\right] \notag\\
    &\qquad= \EE_{\pi^b}\Big[ R_h \pi_h(A_h|O_h) + \gamma \sum_{a'}b^{\pi}_{h+1}(a', O_{h+1}) \pi_h(A_h|O_h )\Big| A_h, S_h \Big],\label{eq: def unlearnable value bridge}
\\
    & \EE_{\pi^b}\left[ q^{\pi}_h(A_h, O_{h-1}) \middle| A_h, S_h  \right] = \frac{\mu_h(S_h)}{\pi^b_h(A_h|S_h)}, \label{eq: def unlearnable weight bridge}
\end{align}
Then we can show that the solutions to \eqref{eq: def unlearnable value bridge} and \eqref{eq: def unlearnable weight bridge} also solve \eqref{eq: def value bridge} and \eqref{eq: def weight bridge}.
The difference between \eqref{eq: def unlearnable value bridge} and \eqref{eq: def value bridge} is that in \eqref{eq: def unlearnable value bridge} we condition on the latent state $S_h$ rather than the observable negative control variable $Z_h$. 
In related literature \citep{bennett2021proximal, shi2021minimax}, the solutions to \eqref{eq: def unlearnable value bridge} and \eqref{eq: def unlearnable weight bridge} are referred to as \emph{unlearnable bridge functions}.

We first show the existence of $\{b_h^{\pi}\}_{h=1}^H$ in a backward manner.
Denote by $b_{h+1}^{\pi}$ a zero function.
Suppose that $b_{h+1}^{\pi}$ exists, we show that $b_{h}^{\pi}$ also exists.
Since now spaces $\mathcal{S}$, $\mathcal{A}$, and $\mathcal{O}$ are discrete, we adopt the notation of matrix.
In particular, we denote by 
\begin{align*}
    \mathbf{B}\in\mathbb{R}^{|\mathcal{A}|\times|\mathcal{O}|},&\qquad \mathbf{B}(a,o)=b_h(a,o),\\
    \mathbf{O}\in\mathbb{R}^{|\mathcal{S}|\times|\mathcal{O}|},&\qquad \mathbf{O}(s,o)=\mathcal{P}_h^b(O_h=o|S_h=s),
\end{align*}
\begin{align*}
    \mathbf{R}\in\mathbb{R}^{|\mathcal{A}|\times|\mathcal{S}|},\ \mathbf{R}(s,a)=\EE_{\pi^b}\Big[ R_h \pi_h(A_h|O_h) + \gamma \sum_{a'}b^{\pi}_{h+1}(a', O_{h+1}) \pi_h(A_h|O_h )\Big| A_h=a, S_h=s \Big].
\end{align*}
The existence of $b_h^{\pi}$ satisfying \eqref{eq: def unlearnable value bridge} is equivalent to the existence of $\mathbf{B}$ solving the matrix equation
\begin{align}\label{eq: matrix equation 1}
    \mathbf{B}\,\mathbf{O}^\top=\mathbf{R}.
\end{align}
By condition \eqref{eq: example rank condition}, we known that the matrix $\mathbf{O}^\top$ is of full column rank, which indicates that \eqref{eq: matrix equation 1} admits a solution $\mathbf{B}$. This proves the existence of $b_h^{\pi}$.
For $\{q_h^{\pi}\}_{h=1}^H$, we use a similar method by considering 
\begin{align*}
    \mathbf{Q}\in\mathbb{R}^{|\mathcal{A}|\times|\mathcal{O}|}, &\qquad \mathbf{Q}(a,o)=q_h(a,o),\\
    \mathbf{O}_{-}\in\mathbb{R}^{|\mathcal{S}|\times|\mathcal{O}|},& \qquad \mathbf{O}_-(s,o)=\mathcal{P}_h^b(O_{h-1}=o|S_h=s),\\
    \mathbf{C}\in\mathbb{R}^{|\mathcal{A}|\times|\mathcal{S}|},&\qquad \mathbf{C}(s,a)=\frac{\mu_h(S_h=s)}{\pi^b_h(A_h=a|S_h=s)}.
\end{align*}
The existence of $q_h^{\pi}$ satisfying \eqref{eq: def unlearnable weight bridge} is equivalent to the existence of $\mathbf{Q}$ solving the matrix equation
\begin{align}\label{eq: matrix equation 2}
    \mathbf{Q}\,\mathbf{O}_-^\top=\mathbf{C}
\end{align}
By condition \eqref{eq: example rank condition}, we known that the matrix $\mathbf{O}_-^\top$ is of full column rank, which indicates that \eqref{eq: matrix equation 2} admits a solution $\mathbf{Q}$. This proves the existence of $q_h^{\pi}$. Thus we have shown that there exists $\{b_h^{\pi}\}_{h=1}^H$ and $\{q_h^{\pi}\}_{h=1}^H$ which solve equation \eqref{eq: def unlearnable value bridge} and \eqref{eq: def unlearnable weight bridge}.
Finally, it holds that any solution to \eqref{eq: def unlearnable value bridge} and \eqref{eq: def unlearnable weight bridge} also forms a solution to \eqref{eq: def value bridge} and \eqref{eq: def weight bridge}, which has been shown in Theorem 11 in \cite{shi2021minimax}.
This finishes the proof.
\end{proof}

\begin{proof}[Proof of Example \ref{exampe: bridge exist 2}]
    To see this, we first prove the existence of $\left\{b_n^\pi\right\}$.
    For simplicity, we denote by
    $$
    \mathbf{P}_a=\left(\mathcal{P}_h^b\left(O_h \mid A_h=a, O_{h-k-1}\right)\right) \in \mathbb{R}^{|\mathcal{O}| \times|\mathcal{O}|}
    $$
    for each $a \in \mathcal{A}$. 
    Also, we denote that
    \begin{align*}
        \mathbf{B}_a&=\left(b_h\left(a, O_h\right)\right) \in \mathbb{R}^{| \mathcal{O}| \times 1},\\ \mathbf{R}_a&=\bigg(\mathbb{E}_{\pi^b}\bigg[R_h \pi_h\left(A_h \mid O_h\right)+\gamma \sum_{a^{\prime}} b_{h+1}^\pi\left(a^{\prime}, O_{h+1}\right) \pi_h\left(A_h \mid O_h\right) \mid A_h=a, O_{h-k-1}\bigg]\bigg) \in \mathbb{R}^{|\mathcal{O}| \times 1}.
    \end{align*}    
    Then for each $a \in \mathcal{A}$, the existence of $b_n^\pi(a, \cdot)$ is equivalent to the existence of the solution to 
    \begin{align*}
        \mathbf{P}^a\mathbf{B}^a = \mathbf{R}^a.
    \end{align*}
    Such a linear equation admits a solution due to our assumption on the matrix $\mathbf{P}_a$. This shows the existence of $\left\{b_h^\pi\right\}$.
    For $\left\{q_h^\pi\right\}$, the deduction is similar by considering for each $a \in \mathcal{A}$,
    \begin{align*}
        \mathbf{T}_a&=\left(\mathcal{P}^b_h\left(O_{h-k-1} \mid A_h=a, S_h, \Gamma_{h-1}\right) \right)\in \mathbb{R}^{|\mathcal{S}|\left|\mathcal{H}_{h-1}\right| \times|\mathcal{O}|},\\
        \mathbf{Q}_a&=\left(q_h\left(a, O_{h-k-1}\right)\right) \in \mathbb{R}^{|\mathcal{O}| \times 1}, \\
        \mathbf{C}_a&=\left(\frac{\mu_h\left(S_h, \Gamma_{h-1}\right)}{\pi^b\left(a \mid S_h\right)}\right) \in \mathbb{R}^{|\mathcal{S}| |\mathcal{H}_{h-1} | \times 1}.
    \end{align*}
    By considering the equation that
    \begin{align*}
        \mathbf{T}_a \mathbf{Q}_a=\mathbf{C}_a
    \end{align*}    
    and using the full rank assumption on matrix $\mathbf{T}_a$, we can obtain the existence of $\left\{q_h^\pi\right\}$.
    This finishes the proof of Example \ref{exampe: bridge exist 2}.
\end{proof}

\section{Proof of Theorem \ref{thm: identification}}\label{sec: proof identification}

\begin{proof}[Proof of Theorem \ref{thm: identification}]
For any step $h$, we denote $J_h(\pi)\coloneqq\mathbb{E}_{\pi}[R_h(S_h,A_h)]$. We have that
\begin{align*}
    J_h(\pi)&=\mathbb{E}_{\pi}[R_h(S_h,A_h)]\notag\\
    &=\mathbb{E}_{\pi}\bigl [\mathbb{E}_{\pi}[R_h(S_h,A_h)|O_h,S_h,\Gamma_{h-1}] \bigr ]\notag\\
    &=\mathbb{E}_{\pi}\left[\sum_{a\in\mathcal{A}}R_h(S_h,a)\pi_h(a|O_h,\Gamma_{h-1})\right]\notag\\
    &=\mathbb{E}_{\pi}\left[\mathbb{E}_{\pi}\left[\sum_{a\in\mathcal{A}}R_h(S_h,a)\pi_h(a|O_h,\Gamma_{h-1}) \middle| S_h,\Gamma_{h-1}\right]\right],\notag
\end{align*}
where the second and the last equality follows from the tower property of conditional expectation.
Using the definition of density ratio $\mu_h(S_h,\Gamma_{h-1})$ in Assumption \ref{assump: bridge functions exist}, we can change the outer expectation to $\mathbb{E}_{\pi^b}$ by 
\begin{align*}
    J_h(\pi)&=\mathbb{E}_{\pi^b}\left[\mu_h(S_h,\Gamma_{h-1})\cdot \mathbb{E}_{\pi}\left[\sum_{a\in\mathcal{A}}R_h(S_h,a)\pi_h(a|O_h,\Gamma_{h-1}) \middle| S_h,\Gamma_{h-1}\right]\right],\notag\\
    &=\mathbb{E}_{\pi^b}\left[\sum_{a\in\mathcal{A}}R_h(S_h,a)\cdot \pi_h(a|O_h,\Gamma_{h-1})\cdot \mu_h(S_h,\Gamma_{h-1})\right]\notag\\
    &=\mathbb{E}_{\pi^b}\left[\sum_{a\in\mathcal{A}}\pi_h^b(a|S_h) \cdot R_h(S_h,a)\cdot \frac{\pi_h(a|O_h,\Gamma_{h-1})}{\pi_h^b(a|S_h)}\cdot \mu_h(S_h,\Gamma_{h-1})\right]\notag\\
    &\eqw{(a)}\mathbb{E}_{\pi^b}  \left[\mathbb{E}_{\pi^b}\left[R_h(S_h,A_h) \cdot \frac{\pi_h(A_h|O_h,\Gamma_{h-1})}{\pi_h^b(A_h|S_h)}\cdot\mu_h(S_h,\Gamma_{h-1})\middle| S_h,O_h,\Gamma_{h-1}\right]\right]\notag\\
    &=\mathbb{E}_{\pi^b}\left[R_h(S_h,A_h) \cdot \pi_h(A_h|O_h,\Gamma_{h-1}) \cdot\frac{\mu_h(S_h,\Gamma_{h-1})}{\pi_h^b(A_h|S_h)}\right],\notag
\end{align*}
where step (a) follows from the fact that $A_h\sim\pi_h^b(\cdot|S_h)$ and satisfies $A_h\perp O_h,\Gamma_{h-1}|S_h$ under $\pi^b$.
Now using the definition \eqref{eq: def weight bridge} of weight bridge function $q_h^{\pi}$ in Assumption \ref{assump: bridge functions exist}, we have that 
\begin{align*}
    J_h(\pi)&=\mathbb{E}_{\pi^b}\bigl[R_h(S_h,A_h) \cdot \pi_h(A_h|O_h,\Gamma_{h-1})\cdot \mathbb{E}_{\pi^b}\left[q_h^{\pi}(A_h,Z_h)\middle| S_h,A_h,\Gamma_{h-1}\right]\bigr ]\notag\\
    &\eqw{(a)}\mathbb{E}_{\pi^b}\left[R_h(S_h,A_h)\cdot  \pi_h(A_h|O_h,\Gamma_{h-1}) \cdot  q_h^{\pi}(A_h,Z_h)\right]\notag\\
    &=\mathbb{E}_{\pi^b}\left[\mathbb{E}_{\pi^b}\left[R_h(S_h,A_h) \cdot  \pi_h(A_h|O_h,\Gamma_{h-1})\cdot  q_h^{\pi}(A_h,Z_h)\middle|  A_h,Z_h\right]\right]\notag\\
    &=\mathbb{E}_{\pi^b}\left[\mathbb{E}_{\pi^b}\left[R_h(S_h,A_h) \cdot  \pi_h(A_h|O_h,\Gamma_{h-1}) \cdot \middle|  A_h,Z_h\right]q_h^{\pi}(A_h,Z_h)\right],\notag
\end{align*}
where step (a) follows from the assumption that $Z_h\perp O_h,R_h|S_h,A_h,\Gamma_{h-1}$ by Assumption \ref{assump: negative control cond independence}. 
Now using the definition \eqref{eq: def value bridge} of value bridge function $b_h^{\pi}$ in Assumption \ref{assump: bridge functions exist}, we have that 
\begin{align*}
    J_h(\pi)&=\mathbb{E}_{\pi^b}\left[\mathbb{E}_{\pi^b}\left[
    b^{\pi}_h(A_h,W_h)-\gamma\sum_{a^{\prime}\in\mathcal{A}}b_{h+1}^{\pi}(a^{\prime},W_{h+1})\pi_h(A_h|O_h,\Gamma_{h-1})
    \middle|  A_h,Z_h\right]q_h^{\pi}(A_h,Z_h)\right]\notag\\
    &=\mathbb{E}_{\pi^b}\left[
    f(S_h,A_h,O_h,W_h,W_{h+1},\Gamma_{h-1})\cdot  
   q_h^{\pi}(A_h,Z_h)\right]\notag\\
   &=\mathbb{E}_{\pi^b}\left[
    \mathbb{E}_{\pi^b}\left[f(S_h,A_h,O_h,W_h,W_{h+1},\Gamma_{h-1})\cdot  
   q_h^{\pi}(A_h,Z_h)\middle| S_h,A_h,O_h,W_h,W_{h+1},\Gamma_{h-1}\right]\right],\notag\\
   &=\mathbb{E}_{\pi^b}\left[
    f(S_h,A_h,O_h,W_h,W_{h+1},\Gamma_{h-1})\cdot\mathbb{E}_{\pi^b}\left[
   q_h^{\pi}(A_h,Z_h)\middle| S_h,A_h,O_h,W_h,W_{h+1},\Gamma_{h-1}\right]\right],\notag\\
   &\eqw{(a)}\mathbb{E}_{\pi^b}\left[
    f(S_h,A_h,O_h,W_h,W_{h+1},\Gamma_{h-1})\cdot\mathbb{E}_{\pi^b}\left[
   q_h^{\pi}(A_h,Z_h)\middle| S_h,A_h,\Gamma_{h-1}\right]\right],\notag
\end{align*}
where for simplicity we have denoted that
\begin{align*}
    f(S_h,A_h,O_h,W_h,W_{h+1},\Gamma_{h-1})=b^{\pi}_h(A_h,W_h)-\gamma\sum_{a^{\prime}\in\mathcal{A}}b_{h+1}^{\pi}(a^{\prime},W_{h+1})\pi_h(A_h|O_h,\Gamma_{h-1}),\notag
\end{align*}
and step (a) follows from
the assumption that $Z_h
\perp O_h,W_h,W_{h+1}|S_h,A_h,\Gamma_{h-1}$ by Assumption \ref{assump: negative control cond independence}.
By the definition \eqref{eq: def weight bridge} of weight bridge function $q^{\pi}_h$ in Assumption \ref{assump: bridge functions exist} again, we have that 
\begin{align*}
    J_h(\pi)&=\mathbb{E}_{\pi^b}\left[
    f(S_h,A_h,O_h,W_h,W_{h+1},\Gamma_{h-1})\cdot\frac{\mu_h(S_h,\Gamma_{h-1})}{\pi_h^b(A_h|S_h)}\right]\notag\\
    &\eqw{(a)} \mathbb{E}_{\pi^b}\left[\left(b^{\pi}_h(A_h,W_h)-\gamma\sum_{a^{\prime}\in\mathcal{A}}b_{h+1}^{\pi}(a^{\prime},W_{h+1})\pi_h(A_h|O_h,\Gamma_{h-1})\right)\cdot\frac{\mu_h(S_h,\Gamma_{h-1})}{\pi_h^b(A_h|S_h)}\right],\notag
\end{align*}
where step (a) just applies the definition of $f$.
Now sum $J_h(\pi)$ over $h\in[H]$, we have that \begin{align}\label{eq: thm 2.6 proof 1}
    J(\pi)&=\sum_{h=1}^H\gamma^{h-1}J_h(\pi)=\underbrace{\mathbb{E}_{\pi^b}\left[\frac{\mu_1(S_1,\Gamma_{0})}{\pi_1^b(A_1|S_1)}b_1^{\pi}(A_1,W_1)\right]}_{\displaystyle{(\mathrm{A})}} +\underbrace{\sum_{h=2}^H\gamma^{h-1}\Delta_h}_{\displaystyle{(\mathrm{B})}},
\end{align}
where for simplicity we define $\Delta_h$ for $h=2,\cdots,H$ as
\begin{align*}
    \Delta_h=\mathbb{E}_{\pi^b}\left[
    \frac{\mu_h(S_h,\Gamma_{h-1})}{\pi_h^b(A_h|S_h)}b^{\pi}_h(A_h,W_h)-\frac{\mu_{h-1}(S_{h-1},\Gamma_{h-2})}{\pi_{h-1}^b(A_{h-1}|S_{h-1})}\cdot \sum_{a^{\prime}\in\mathcal{A}}b_h^{\pi}(a^{\prime},W_h)\pi_{h-1}(A_{h-1}|O_{h-1},\Gamma_{h-1})
    \right].\notag
\end{align*}
In the sequel, we deal with term (A) and (B) respectively.
On the one hand, we have that 
\begin{align*}
    \textnormal{(A)}&\eqw{(a)}\mathbb{E}_{\pi^b}\left[\frac{\mathcal{P}_1^{\pi}(S_1,\Gamma_{0})}{\mathcal{P}^b_1(S_1,\Gamma_{0})\pi^b_1(A_1|S_1)}b_1^{\pi}(A_1,W_1)\right]\notag\\
    &\eqw{(b)}\mathbb{E}_{\pi^b}\left[\frac{1}{\pi^b_1(A_1|S_1)}b_1^{\pi}(A_1,W_1)\right]\notag\\
    &=\mathbb{E}_{\pi^b}\left[\mathbb{E}_{\pi^b}\left[\frac{1}{\pi^b_1(A_1|S_1)}b_1^{\pi}(A_1,W_1)\middle| S_1,W_1\right]\right]\notag\\
    &\eqw{(c)}\mathbb{E}_{\pi^b}\left[\sum_{a\in\mathcal{A}}\frac{\pi^b_1(a|S_1)}{\pi^b_1(a|S_1)}b_1^{\pi}(a,W_1)\right]\notag\\
    &=\mathbb{E}_{\pi^b}\left[\sum_{a\in\mathcal{A}}b_1^{\pi}(a,W_1)\right],\notag
\end{align*}
where step (a) follows from the definition of $\mu_1(S_1,\Gamma_{0})$ in Assumption \ref{assump: bridge functions exist}, step (b) follows from the fact that at $h=1$, $\mathcal{P}^b_1(S_1,\Gamma_{0})=\mathcal{P}_1^{\pi}(S_1,\Gamma_{0})$, and step (c) follows from the assumption that $A_1\perp W_1|S_1$ by Assumption~\ref{assump: negative control cond independence}.
On the other hand, term (b) in \eqref{eq: thm 2.6 proof 1} is actually $0$, which we show by proving that $\Delta_h=0$ for any $h\geq 2$.
We denote by $\Delta_h=\Delta_h^1-\Delta_h^2$ and we consider $\Delta_h^1$ and $\Delta_h^2$ respectively, where 
\begin{align*}
    \Delta_h^1 &= \mathbb{E}_{\pi^b}\left[
    \frac{\mu_h(S_h,\Gamma_{h-1})}{\pi_h^b(A_h|S_h)} \cdot b^{\pi}_h(A_h,W_h)\right],\\
    \Delta_h^2 & = \mathbb{E}_{\pi^b}\left[\frac{\mu_{h-1}(S_{h-1},\Gamma_{h-2})}{\pi_{h-1}^b(A_{h-1}|S_{h-1})}\cdot \sum_{a^{\prime}\in\mathcal{A}}b_h^{\pi}(a^{\prime},W_h)\pi_{h-1}(A_{h-1}|O_{h-1},\Gamma_{h-1})
    \right].
\end{align*}
In the sequel, we prove that $\Delta_h^1=\Delta_h^2$ for the three cases of $T_h$ in Example
\ref{example: reactive}, \ref{example: finite history}, and \ref{example: full history}, respectively.

\vspace{2mm}
\noindent
\textbf{Case 1: Reactive policy (Example \ref{example: reactive}).}
We first focus on the simple case when policy $\pi$ is reactive.
 Since for reactive policies $T_h=\emptyset$, we can equivalently write $\mu_h(S_h,\Gamma_{h-1})$ as $\mu_h(S_h)=\mathcal{P}^{\pi}_h(S_h)/\mathcal{P}_h^{b}(S_h)$. 
 Now for $\Delta_h^1$, we can rewrite it as 
\begin{align*}
    \Delta_h^1&=\mathbb{E}_{\pi^b}\left[
    \frac{\mathcal{P}^{\pi}_h(S_h)}{\mathcal{P}_h^{b}(S_h)   \pi_h^b(A_h|S_h)}\cdot  b^{\pi}_h(A_h,W_h)\right]\notag\\
    &\eqw{(a)}\int_{\mathcal{S}}\cancel{\mathcal{P}_h^b(s_h)}\mathrm{d}s_h\sum_{a_h\in\mathcal{A}}\cancel{\pi_h^b(a_h|s_h)}\int_{\mathcal{W}}\textcolor{red}{\mathcal{P}_h^b(w_h|s_h,a_h)}\mathrm{d}w_h\cdot \frac{\mathcal{P}^{\pi}_h(s_h)}{\cancel{\mathcal{P}_h^{b}(s_h)}\cancel{\pi_h^b(a_h|s_h)}}b^{\pi}_h(a_h,w_h)\notag\\
    &\eqw{(b)}\sum_{a_h\in\mathcal{A}}\int_{\mathcal{S}}\mathcal{P}_h^{\pi}(s_h)\mathrm{d}s_h\int_{\mathcal{W}}\textcolor{red}{\mathcal{P}_h^b(w_h|s_h)}\mathrm{d}w_h\cdot b^{\pi}_h(a_h,w_h).\notag
\end{align*}
Here step (a) expands the expectation by using integral against corresponding density functions, and step (b) follows from cancelling the same terms and the fact that $W_h\perp A_h|S_h$ under Assumption \ref{assump: negative control cond independence}.
For $\Delta_h^2$, we can also rewrite it as
\begin{align*}
    \Delta_h^2 &= \mathbb{E}_{\pi^b}\left[\frac{\mathcal{P}^{\pi}_{h-1}(S_{h-1})\pi_{h-1}(A_{h-1}|O_{h-1})}{\mathcal{P}_{h-1}^b(S_{h-1})\pi_{h-1}^b(A_{h-1}|S_{h-1})}\cdot  \sum_{a^{\prime}\in\mathcal{A}}b_h^{\pi}(a^{\prime},W_h)
    \right]\notag\\
    &\eqw{(a)}\int_{\mathcal{S}}\cancel{\mathcal{P}^b_{h-1}(s_{h-1})}\mathrm{d}s_{h-1}\int_{\mathcal{O}}\mathbb{O}_{h-1}(o_{h-1}|s_{h-1})\mathrm{d}o_{h-1}\sum_{a_{h-1}\in\mathcal{A}}\cancel{\pi_{h-1}^b(a_{h-1}|s_{h-1})}\int_{\mathcal{S}}\mathbb{P}_h(s_h|s_{h-1},a_{h-1})\mathrm{d}s_h\notag\\
    &\qquad\int_{\mathcal{W}}\textcolor{red}{\mathcal{P}_h^b(w_h|s_h,s_{h-1},a_{h-1},o_{h-1})} \cdot\frac{\mathcal{P}^{\pi}_{h-1}(s_{h-1})\pi_{h-1}(a_{h-1}|o_{h-1})}{\cancel{\mathcal{P}_{h-1}^b(s_{h-1})}\cancel{\pi_{h-1}^b(a_{h-1}|s_{h-1})}}\sum_{a_h\in\mathcal{A}}b_h^{\pi}(a_h,w_h)\mathrm{d}w_h.\notag
\end{align*}
Here step (a) follows from expanding the expectation. It follows that
\begin{align*}
    \Delta_h^2 &\eqw{(b)}\sum_{a_h\in\mathcal{A}}\int_{\mathcal{S}}\mathcal{P}^{\pi}_{h-1}(s_{h-1})\mathrm{d}s_{h-1}\int_{\mathcal{O}}\mathbb{O}_{h-1}(o_{h-1}|s_{h-1})\mathrm{d}o_{h-1}\sum_{a\in\mathcal{A}}\pi_{h-1}(a_{h-1}|o_{h-1})\notag
    \\ &\qquad \int_{\mathcal{S}}\mathbb{P}_h(s_h|s_{h-1},a_{h-1})\mathrm{d}s^{\prime}\  \int_{\mathcal{W}}\textcolor{red}{\mathcal{P}_h^b(w_h|s_h)}\cdot b_h^{\pi}(a_h,w_h)\notag \\
    &\eqw{(c)}\sum_{a_h\in\mathcal{A}}\int_{\mathcal{S}}\mathcal{P}^{\pi}_{h}(s_h)\mathrm{d}s_h\int_{\mathcal{W}}\textcolor{red}{\mathcal{P}_h^b(w_h|s_h)}\cdot b_h^{\pi}(a_h,w_h)\mathrm{d}w_h.\notag
\end{align*}
Here step (b) follows from cancelling the same terms and using the fact that $W_h\perp S_{h-1},A_{h-1},O_{h-1}|S_h$ by Assumption \ref{assump: negative control cond independence}, and step (d) follows by marginalizing over $S_{h-1},A_{h-1}, O_{j-1}$.
Thus we have proved that $\Delta_h^1=\Delta_h^2$ for reactive policies and consequently $\Delta_h=\Delta_h^1-\Delta_h^2=0$.

\vspace{2mm}
\noindent
\textbf{Case 2: Finite-history policy (Example \ref{example: finite history}).}
Now we have that 
$\Gamma_{h-1}\cup\{A_h,O_h\}=\{A_{l-1},O_{l-1}\}\cup T_h$, where the index $l = \max\{0,h-k\}$.
Similarly, we can first rewrite $\Delta_h^1$ as 
\begin{align*}
    \Delta_h^1&=\mathbb{E}_{\pi^b}\left[
    \frac{\mathcal{P}^{\pi}_h(S_h,\Gamma_{h-1})}{\mathcal{P}_h^{b}(S_h,\Gamma_{h-1})\pi_h^b(A_h|S_h)}b^{\pi}_h(A_h,W_h)\right]\notag\\
    &\eqw{(a)}\int_{\mathcal{S}\times\mathcal{H}_{h-1}}\mathcal{P}_h^b(s_h,\tau_{h-1})\mathrm{d}s_h\mathrm{d}\tau_{h-1}\sum_{a_h\in\mathcal{A}}\pi_h^b(a_h|s_h)\int_{\mathcal{W}}\textcolor{red}{\mathcal{P}_h^b(w_h|s_h,a_h,\tau_{h-1})}\mathrm{d}w_h\notag\\
    &\qquad \cdot \frac{\mathcal{P}^{\pi}_h(s_h,\tau_{h-1})}{\mathcal{P}_h^{b}(s_h,\tau_{h-1})\pi_h^b(a_h|s_h,\tau_{h-1})}b^{\pi}_h(a_h,w_h)\notag\\
    &\eqw{(b)}\sum_{a_h\in\mathcal{A}}\int_{\mathcal{S}\times\mathcal{H}_{h-1}}\mathcal{P}_h^{\pi}(s_h,\tau_{h-1})\mathrm{d}s_h\mathrm{d}\tau_{h-1}\int_{\mathcal{W}}\textcolor{red}{\mathcal{P}_h^b(w_h|s_h)}\mathrm{d}w_h\cdot b^{\pi}_h(a_h,w_h).\notag
\end{align*}
Here step (a) follows from expanding the expectation, and step (b) follows from cancelling the same terms and using the fact that 
$W_h\perp A_h,\Gamma_{h-1}|S_h$ under Assumption \ref{assump: negative control cond independence}.
For $\Delta_h^2$, we can also rewrite it as
\begin{align}
    \Delta_h^2 &= \mathbb{E}_{\pi^b}\left[\frac{\mathcal{P}^{\pi}_{h-1}(S_{h-1},\Gamma_{h-2})\pi_{h-1}(A_{h-1}|O_{h-1})}{\mathcal{P}_{h-1}^b(S_{h-1},\Gamma_{h-2})\pi_{h-1}^b(A_{h-1}|S_{h-1},\Gamma_{h-2})}\sum_{a^{\prime}\in\mathcal{A}}b_h^{\pi}(a^{\prime},W_h)
    \right]\notag\\
    &\eqw{(a)}\int_{\mathcal{S}\times\mathcal{H}_{h-2}}\cancel{\mathcal{P}^b_{h-1}(s_{h-1},\tau_{h-2})\mathrm{d}s_{h-1}}\mathrm{d}\tau_{h-2}\int_{\mathcal{O}}\mathbb{O}_{h-1}(o_{h-1}|s_{h-1})\mathrm{d}o_{h-1}\sum_{a_{h-1}\in\mathcal{A}}\cancel{\pi_{h-1}^b(a_{h-1}|s_{h-1})}\notag\\
    &\qquad\int_{\mathcal{S}}\mathbb{P}_h(s_h|s_{h-1},a_{h-1})\mathrm{d}s_h\int_{\mathcal{W}}\textcolor{red}{\mathcal{P}_h^b(w_h|s_h,s_{h-1},a_{h-1},o_{h-1},\tau_{h-2})}\notag\\
    &\qquad\qquad \cdot\frac{\mathcal{P}^{\pi}_{h-1}(s_{h-1},\tau_{h-2})\pi_{h-1}(a_{h-1}|o_{h-1},\tau_{h-2})}{\cancel{\mathcal{P}_{h-1}^b(s_{h-1},\tau_{h-2})}\cancel{\pi_{h-1}^b(a_{h-1}|s_{h-1})}}\sum_{a_h\in\mathcal{A}}b_h^{\pi}(a_h,w_h)\notag\\
    &\eqw{(b)}\sum_{a_h\in\mathcal{A}}\int_{\mathcal{S}\times\mathcal{H}_{h-2}}\mathcal{P}^{\pi}_{h-1}(s_{h-1},\textcolor{blue}{\tilde{\tau}_{h-2}},a_l,o_l)\mathrm{d}s_{h-1}\mathrm{d}\tilde{\tau}_{h-2}\mathrm{d}a_l\mathrm{d}o_l\int_{\mathcal{O}}\mathbb{O}_{h-1}(\textcolor{blue}{o_{h-1}}|s_{h-1})\mathrm{d}o_{h-1}\notag\\
    &\qquad\sum_{a_{h-1}\in\mathcal{A}}\pi_{h-1}(\textcolor{blue}{a_{h-1}}|o_{h-1},\tau_{h-2})\int_{\mathcal{S}}\mathbb{P}_h(\textcolor{blue}{s_h}|s_{h-1},a_{h-1})\mathrm{d}s_h\int_{\mathcal{W}}\textcolor{red}{\mathcal{P}_h^b(w_h|s_h)} \cdot b_h^{\pi}(a_h,w_h)\label{eq: thm 2.6 proof 2}\\
    &\eqw{(c)}\sum_{a_h\in\mathcal{A}}\int_{\mathcal{S}\times\mathcal{H}_{h-1}}\mathcal{P}^{\pi}_{h}(\textcolor{blue}{s_{h},\tau_{h-1}})\mathrm{d}s_{h}\mathrm{d}\tau_{h-1}\int_{\mathcal{W}}\textcolor{red}{\mathcal{P}_h^b(w_h|s_h)} \cdot b_h^{\pi}(a_h,w_h),\notag
\end{align}
where the index $l=\max\{1, h-1-k\}$. In step (b), we have denoted by $\tilde{\tau}_{h-2}=\tau_{h-2}\setminus\{a_l,o_l\}$ and it holds that $\tau_{h-1}=\tilde{\tau}_{h-2}\cup\{o_{h-1},a_{h-1}\}$.
Here step (a) follows from expanding the expectation, step (b) follows from cancelling the same terms and using the fact that $W_h\perp S_{h-1},A_{h-1},\Gamma_{h-1}|S_h$ under Assumption \ref{assump: negative control cond independence}, and step (c) follows by marginalizing $S_{h-1}, A_l,O_l$.
Thus we have proved that $\Delta_h^1=\Delta_h^2$ for finite-length history policies and consequently $\Delta_h=\Delta_h^1-\Delta_h^2=0$.

\vspace{3mm}
\noindent
\textbf{Case 3: Full-history policy (Example \ref{example: full history}).}
For full history information $T_h$, we have that 
$\Gamma_{h-1}\cup\{A_h,O_h\}=T_h$.
Following the same argument as in Case 2 (Example \ref{example: finite history}), we can first show that 
\begin{align*}
    \Delta_h^1 = \sum_{a_h\in\mathcal{A}}\int_{\mathcal{S}\times\mathcal{H}_{h-1}}\mathcal{P}_h^{\pi}(s_h,\tau_{h-1})\mathrm{d}s_h\mathrm{d}\tau_{h-1}\int_{\mathcal{W}}\textcolor{red}{\mathcal{P}_h^b(w_h|s_h)}\mathrm{d}w_h\cdot b^{\pi}_h(a_h,w_h).\notag
\end{align*}
Besides, for $\Delta_2$, by a similar argument as in Case 2 except that we don't need marginalize over $A_l,O_l$ in \eqref{eq: thm 2.6 proof 2}, we can show that 
\begin{align*}
    \Delta_h^2 &=
    \sum_{a_h\in\mathcal{A}}\int_{\mathcal{S}\times\mathcal{H}_{h-2}}\mathcal{P}^{\pi}_{h-1}(s_{h-1},\textcolor{blue}{\tau_{h-2}})\mathrm{d}s_{h-1}\mathrm{d}\tau_{h-2}\int_{\mathcal{O}}\mathbb{O}_{h-1}(\textcolor{blue}{o_{h-1}}|s_{h-1})\mathrm{d}o_{h-1}\sum_{a_{h-1}\in\mathcal{A}}\pi_{h-1}(\textcolor{blue}{a_{h-1}}|o_{h-1},\tau_{h-2})\notag\\
    &\qquad\int_{\mathcal{S}}\mathbb{P}_h(\textcolor{blue}{s_h}|s_{h-1},a_{h-1})\mathrm{d}s_h\int_{\mathcal{W}}\textcolor{red}{\mathcal{P}_h^b(w_h|s_h,s_{h-1},a_{h-1},o_{h-1},\tau_{h-2})} \cdot b_h^{\pi}(a_h,w_h)\notag\\
    &=\sum_{a_h\in\mathcal{A}}\int_{\mathcal{S}\times\mathcal{H}_{h-1}}\mathcal{P}_h^{\pi}(\textcolor{blue}{s_h,\tau_{h-1}})\mathrm{d}s_h\mathrm{d}\tau_{h-1}\int_{\mathcal{W}}\textcolor{red}{\mathcal{P}_h^b(w_h|s_h)}\mathrm{d}w_h\cdot b^{\pi}_h(a_h,w_h).\notag
\end{align*}
Therefore, we show that $\Delta_h^1=\Delta_h^2$ for full history policies and consequently $\Delta_h=\Delta_h^1-\Delta_h^2=0$.

\vspace{3mm}
Now we have shown that term (B) in \eqref{eq: thm 2.6 proof 1} is actually $0$ for Example \ref{example: reactive}, Example \ref{example: finite history}, and Example \ref{example: full history}, respectively, which allows us to conclude that 
\begin{align*}
    J(\pi)&=\textnormal{(A)}=\mathbb{E}_{\pi^b}\left[\sum_{a\in\mathcal{A}}b_1^{\pi}(a,W_1)\right].
\end{align*}
This finishes the proof of Theorem \ref{thm: identification}.
\end{proof}

\section{Proof of Lemmas in Section \ref{sec: proof sketch}}\label{sec: proof of Lemmas in proof sketch}

We first review and define several notations and quantities that are useful in the proof of the lemmas in Section \ref{sec: proof sketch}.
Firstly, we define mapping $\ell_h^{\pi}:\mathbb{B}\times\mathbb{B}\mapsto\{\mathcal{A}\times\mathcal{Z}\mapsto\mathbb{R}\}$ as
\begin{align}\label{eq: ell}
    \ell^{\pi}_h(b_h,b_{h+1})(A_h,Z_h)&\coloneqq \mathbb{E}_{\pi^{b}}\Big[b_{h}(A_h,W_h)-R_h\pi_h(A_h|O_h,\Gamma_{h-1})\notag\\ 
    &\qquad -\gamma\sum_{a^{\prime}\in\cA}b_{h+1}(a^{\prime},W_{h+1})\pi_h(A_h|O_h,\Gamma_{h-1})\Big|A_h,Z_h\Big].
\end{align}
Furthermore, for each step $h\in[H]$, we define a joint space $\mathcal{I}_h=\mathcal{A}\times\mathcal{W}\times\mathcal{O}\times\mathcal{H}_{h-1}\times\mathcal{W}$ and define mapping $\varsigma_h^{\pi}:\mathbb{B}\times\mathbb{B}\mapsto\{\mathcal{I}_{h}\mapsto\mathbb{R}\}$ as 
\begin{align}\label{eq: varsigma}
    \varsigma^{\pi}_h(b_h,b_{h+1})(A_h,W_h,O_h,\Gamma_{h-1},W_{h+1})&\coloneqq b_{h}(A_h,W_h)-R_h\pi_h(A_h|O_h,\Gamma_{h-1})\notag\\ 
    &\qquad -\gamma\sum_{a^{\prime}\in\cA}b_{h+1}(a^{\prime},W_{h+1})\pi_h(A_h|O_h,\Gamma_{h-1}).
\end{align}
When appropriate, we abbreviate $I_h=(A_h,W_h,O_h,\Gamma_{h-1},W_{h+1})\in\mathcal{I}_h$ in the sequel.
Using definition \eqref{eq: ell} and \eqref{eq: varsigma}, we further introduce two mappings $\Phi_{\pi,h}^{\lambda},\Phi_{\pi,h}:\BB\times\BB\times\GG\mapsto\RR$ as defined by \eqref{eq: population phi lambda},
\begin{align*}
    \Phi_{\pi,h}^{\lambda}(b_h,b_{h+1};g)\coloneqq\mathbb{E}_{\pi^b}\big[\ell_h^{\pi}(b_h,b_{h+1})(A_h,Z_h)\cdot g(A_h,Z_h)-\lambda g(A_h,Z_h)^2\big],
\end{align*}
\begin{align*}
    \Phi_{\pi,h}(b_h,b_{h+1};g)\coloneqq\Phi_{\pi,h}^{0}(b_h,b_{h+1};g)=\mathbb{E}_{\pi^b}\big[\ell_h^{\pi}(b_h,b_{h+1})(A_h,Z_h)\cdot g(A_h,Z_h)\big],
\end{align*}
where we define that $\Phi_{\pi,h}=\Phi_{\pi,h}^0$. 
Also, recall from \eqref{eq: empirical phi lambda} that the empirical version of $\Phi_{\pi,h}^{\lambda},\Phi_{\pi,h}$ are defined by $\hat{\Phi}_{\pi,h}^{\lambda},\hat{\Phi}_{\pi,h}$ as
\begin{align*}
    \hat{\Phi}_{\pi,h}^{\lambda}(b_h,b_{h+1};g)\coloneqq\hat{\mathbb{E}}_{\pi^b}\big[\varsigma_h^{\pi}(b_h,b_{h+1})(I_h)\cdot g(A_h,Z_h)-\lambda g(A_h,Z_h)^2\big],
\end{align*}
\begin{align*}
    \hat{\Phi}_{\pi,h}(b_h,b_{h+1};g)\coloneqq\hat{\Phi}_{\pi,h}^{0}(b_h,b_{h+1};g)=\hat{\mathbb{E}}_{\pi^b}\big[\varsigma_h^{\pi}(b_h,b_{h+1})(I_h)\cdot g(A_h,Z_h)\big].
\end{align*}
Recall from \eqref{eq: hat b} that given function $b_{h+1}\in\mathbb{B}$, the minimax estimator $\hat{b}_{h}(b_{h+1})$ is defined as 
\begin{align*}
    \hat{b}_h(b_{h+1})\coloneqq\argmin_{b\in\mathbb{B}}\max_{g\in\mathbb{G}}\hat{\Phi}^{\lambda}_{\pi,h}(b,b_{h+1};g).
\end{align*}
Meanwhile, we define the following quantity for ease of theoretical analysis as
\begin{align}\label{eq: def b star}
    b_h^{\star}(b_{h+1})\coloneqq\argmin_{b\in\mathbb{B}}\max_{g\in\mathbb{G}}\Phi^{\lambda}_{\pi,h}(b,b_{h+1};g).
\end{align}
By the boundedness assumption on $\mathbb{B}$ in Assumption \ref{assump: dual function class}, we have that
$|\ell_h^{\pi}|,|\varsigma_h^{\pi}|\leq 2M_{\mathbb{B}}$.
By the completeness assumption on $\mathbb{G}$ in Assumption \ref{assump: completeness and realizability}, we also know that $\ell_h^{\pi}(b_h,b_{h+1})/2\lambda\in\mathbb{G}$ for any $b_h,b_{h+1}\in\mathbb{B}$.
Finally, for notational simplicity, we define for each $g\in\mathbb{G}$ that, 
\begin{align*}
    \|g\|_2^2\coloneqq\mathbb{E}_{\pi^b}[g(A_h,Z_h)^2],
\end{align*}
and we denote by $\|g\|_{2,n}^2$ its empirical version, i.e., 
\begin{align*}
    \|g\|_{2,n}^2\coloneqq\hat{ \mathbb{E}}_{\pi^b}[g(A_h,Z_h)^2].
\end{align*}
We remark that we have dropped the dependence of $\|g\|_2^2$ on step $h$ since it is clear from the context when used in the proofs and does not make any confusion.

\subsection{Proof of Lemma \ref{lem: regret decomposition}}\label{subsec: regret decomposition proof}

\begin{proof}[Proof of Lemma \ref{lem: regret decomposition}]
By definition \eqref{eq: define J} of $F(\bbb)$, for any policy $\pi\in\Pi(\cH)$ and vector of functions $\bbb\in\mathbb{B}^{\otimes H}$, it holds that
\begin{align*}
    F(\bbb^{\pi})-F(\bbb)&\eqw{(a)}\mathbb{E}_{\pi^b}\left[\sum_{a\in\mathcal{A}}b^{\pi}_1(a,W_1)-b_1(a,W_1)\right] \notag\\
    &=\mathbb{E}_{\pi^b}\left[\sum_{a\in\mathcal{A}}\frac{\pi^b_1(a|S_1)}{\pi_1^b(a|S_1)}\left(b^{\pi}_1(a,W_1)-b_1(a,W_1)\right)\right]\notag\\
    &\eqw{(b)}\mathbb{E}_{\pi^b}\left[\mathbb{E}_{\pi^b}\left[\frac{1}{\pi_1^b(A_1|S_1)}\left(b^{\pi}_1(a,W_1)-b_1(a,W_1)\right)\middle| S_1,W_1\right]\right]\notag\\
      &=\mathbb{E}_{\pi^b}\left[\frac{1}{\pi_1^b(A_1|S_1)}\left(b^{\pi}_1(A_1,W_1)-b_1(A_1,W_1)\right)\right]\notag
\end{align*}
where step (a) follows from Theorem \ref{thm: identification} and \eqref{eq: define J}, and step (b) holds since $A_1\perp W_1\mid S_1$ by Assumption \ref{assump: negative control cond independence}.
Notice that by definition \eqref{eq: def weight bridge}, at step $h=1$, the weight bridge function $q_h^{\pi}$ satisfies equation 
\begin{align*}
    \mathbb{E}_{\pi^b}[q^{\pi}_1(A_1,Z_1)|A_1,S_1,\Gamma_0]=\frac{\mathcal{P}_h^{\pi}(S_1,\Gamma_0)}{\mathcal{P}_h^{\pi}(S_1,\Gamma_0)\pi^b_1(A_1|S_1)}=\frac{1}{\pi^b_1(A_1|S_1)},
\end{align*}
which further gives that 
\begin{align*}
    F(\bbb^{\pi})-F(\bbb)&=\mathbb{E}_{\pi^b}\big[\mathbb{E}_{\pi^b}\left[q^{\pi}_1(A_1,Z_1)\middle| A_1,S_1,\Gamma_{0}\right]\left(b^{\pi}_1(A_1,W_1)-b_1(A_1,W_1)\right)\big ]\notag\\
    &\eqw{(a)}\mathbb{E}_{\pi^b}\bigl [\mathbb{E}_{\pi^b}\left[q^{\pi}_1(A_1,Z_1)\middle| A_1,S_1,W_1,\Gamma_{0}\right] \cdot \bigl(b^{\pi}_1(A_1,W_1)-b_1(A_1,W_1)\bigr )\bigr]\notag\\
    &=\mathbb{E}_{\pi^b}\bigl [q^{\pi}_1(A_1,Z_1)\bigl(b^{\pi}_1(A_1,W_1)-b_1(A_1,W_1)\bigr )\bigr ]\notag,
\end{align*}
where step (a)holds since $Z_1\perp W_1\mid A_1,S_1,\mathcal{H}_0$ by Assumption \ref{assump: negative control cond independence}.
Now we can further obtain that,
\begin{align*}
    F(\bbb^{\pi})-F(\bbb) &=\mathbb{E}_{\pi^b}\big [q^{\pi}_1(A_1,Z_1)\mathbb{E}_{\pi^b}\left[b^{\pi}_1(A_1,W_1)-b_1(A_1,W_1)\middle| A_1,Z_1\right]\bigr ]\notag\\
    &\eqw{(a)}\mathbb{E}_{\pi^b}\bigg[q^{\pi}_1(A_1,Z_1)\bigg\{\EE_{\pi^b}\bigg[ R_1 \pi_1(A_1|O_1, \Gamma_0) + \gamma \sum_{a'}b^{\pi}_{2}(a', W_{2}) \pi_1(A_1|O_1, \Gamma_0)\bigg| A_1, Z_1 \bigg] \notag\\
    &\qquad -\mathbb{E}_{\pi^b}\left[b_1(A_1,W_1)| A_1,Z_1\right]\bigg\}\bigg],\notag
\end{align*}
where step (a) follows from the definition in \eqref{eq: def value bridge} of value bridge function $b_1^{\pi}$ in Assumption \ref{assump: bridge functions exist}. Now to relate the difference between $F(\bbb^\pi)$ and $F(\bbb)$ with the RMSE loss $\mathcal{L}_1^{\pi}$ defined in \eqref{eq: RMSE loss}, we rewrite the above equation as the following,
\begin{align*}
 &F(\bbb^{\pi})-F(\bbb)\notag\\
    &\qquad =\mathbb{E}_{\pi^b}\bigg[q^{\pi}_1(A_1,Z_1)\bigg\{\EE_{\pi^b}\bigg[ R_1 \pi_h(A_1|O_1, \Gamma_0) + \gamma \sum_{a'}b^{\pi}_{2}(a', W_{2}) \pi_1(A_1|O_1, \Gamma_0)\bigg| A_1, Z_1 \bigg] \notag\\
    &\qquad\qquad -\EE_{\pi^b}\bigg[ R_h \pi_1(A_1|O_1, \Gamma_0) + \gamma \sum_{a'}b_{2}(a', W_{h+1}) \pi_1(A_1|O_1, \Gamma_{0})\bigg| A_1, Z_1 \bigg]\notag\\
    &\qquad\qquad+\EE_{\pi^b}\bigg[ R_1 \pi_1(A_1|O_1, \Gamma_0) + \gamma \sum_{a'}b_{2}(a', W_{2}) \pi_1(A_1|O_1, \Gamma_0)\bigg| A_1, Z_1 \bigg]\notag\\
    &\qquad\qquad -\mathbb{E}_{\pi^b}\bigg[b_1(A_1,W_1)\bigg| A_1,Z_1\bigg]\bigg\}\bigg],
\end{align*} where we add one term and subtract the same term. It immediately follows that 
\begin{align}\label{eq: proof lem 4.2 eq 1}
 &F(\bbb^{\pi})-F(\bbb)\notag\\
    &\qquad =\mathbb{E}_{\pi^b}\bigg[q^{\pi}_1(A_1,Z_1)\bigg\{\gamma\EE_{\pi^b}\bigg[ \sum_{a'}\Big(b^{\pi}_{2}(a', W_{2})-b_2(a,W_2)\Big) \pi_1(A_1|O_1, \Gamma_0)\bigg| A_1, Z_1 \bigg] \notag\\
    &\qquad\qquad+\EE_{\pi^b}\bigg[ R_1 \pi_h(A_1|O_1, \Gamma_0) + \gamma \sum_{a'}b_{2}(a', W_{2}) \pi_h(A_1|O_1, \Gamma_0)-b_1(A_1,W_1)\bigg| A_1, Z_1 \bigg]\bigg\}\bigg].
\end{align}
We deal with the two terms in the right-hand side of \eqref{eq: proof lem 4.2 eq 1} respectively.
On the one hand, the first term equals to
\begin{align*}
    &\gamma\mathbb{E}_{\pi^b}\bigg[q^{\pi}_1(A_1,Z_1)\EE_{\pi^b}\bigg[ \sum_{a'}\Big(b^{\pi}_{2}(a', W_{2})-b_2(a,W_2)\Big) \pi_1(A_1|O_1, \Gamma_0)\bigg| A_1, Z_1 \bigg]\bigg]\notag\\
    &\qquad=\gamma\mathbb{E}_{\pi^b}\bigg[q^{\pi}_1(A_1,Z_1)\sum_{a'}\Big(b^{\pi}_{2}(a', W_{2})-b_2(a,W_2)\Big) \pi_1(A_1|O_1, \Gamma_0)\bigg]\notag\\
    &\qquad=\gamma\mathbb{E}_{\pi^b}\bigg[\mathbb{E}_{\pi^b}\bigg[q^{\pi}_1(A_1,Z_1)\bigg|S_1,A_1,\Gamma_0,O_1,W_2\bigg]\sum_{a'}\Big(b^{\pi}_{2}(a', W_{2})-b_2(a,W_2)\Big) \pi_1(A_1|O_1, \Gamma_0)\bigg]\notag\\
    &\qquad\eqw{(a)}\gamma\mathbb{E}_{\pi^b}\bigg[\mathbb{E}_{\pi^b}\bigg[q^{\pi}_1(A_1,Z_1)\bigg|S_1,A_1,\Gamma_0\bigg]\sum_{a'}\Big(b^{\pi}_{2}(a', W_{2})-b_2(a,W_2)\Big) \pi_1(A_1|O_1, \Gamma_0)\bigg]\notag\\
    &\qquad\eqw{(b)}\gamma\mathbb{E}_{\pi^b}\Bigg[\frac{\mu_1(S_1,\Gamma_0)}{\pi_1^b(A_1|S_1)}\sum_{a'}\Big(b^{\pi}_{2}(a', W_{2})-b_2(a,W_2)\Big) \pi_1(A_1|O_1, \Gamma_0)\Bigg],\notag
\end{align*}
where step (a) follows from the fact that $Z_1\perp O_1,W_2|S_1,A_1,\Gamma_{0}$ according to Assumption \ref{assump: negative control cond independence}, and step (b) follows from the definition \eqref{eq: def weight bridge} of weight bridge function $q_1^{\pi}$ in Assumption \ref{assump: bridge functions exist}. 
Now following the same argument as in showing $\Delta_h=0$ in the proof of Theorem \ref{thm: identification}, we can show that 
\begin{align}\label{eq: proof lem 4.2 eq 2}
    &\mathbb{E}_{\pi^b}\Bigg[\frac{\mu_1(S_1,\Gamma_0)}{\pi_1^b(A_1|S_1)}\sum_{a'}\Big(b^{\pi}_{2}(a', W_{2})-b_2(a,W_2)\Big) \pi_1(A_1|O_1, \Gamma_0)\Bigg]\notag\\
    &\qquad=\mathbb{E}_{\pi^b}\Bigg[q^{\pi}_2(A_2,Z_2)\Big(b^{\pi}_{2}(A_2, W_{2})-b_2(A_2,W_2)\Big) \Bigg].
\end{align}
On the other hand, the second term in the R.H.S. of \eqref{eq: proof lem 4.2 eq 1} can be rewritten and bounded by 
\begin{align}\label{eq: proof lem 4.2 eq 3}
    &\EE_{\pi^b}\bigg[q^{\pi}_1(A_1,Z_1)\EE_{\pi^b}\bigg[ R_1 \pi_1(A_1|O_1, \Gamma_0) + \gamma \sum_{a'}b_{2}(a', W_{2}) \pi_1(A_1|O_1, \Gamma_0)-b_1(A_1,W_1)\bigg| A_1, Z_1 \bigg]\bigg]\notag\\
    &\qquad \leq  \sqrt{C^{\pi}}\cdot \left(\EE_{\pi^b}\left[\bigg(\EE_{\pi^b}\bigg[ R_1 \pi_1(A_1|O_1, \Gamma_0) + \gamma \sum_{a'}b_{2}(a', W_{2}) \pi_1(A_1|O_1, \Gamma_0)-b_1(A_1,W_1)\bigg| A_1, Z_1 \bigg]\bigg)^{2}\right]\right)^{1/2}\notag\\
    &\qquad =  \sqrt{C^{\pi}}\cdot\sqrt{\mathcal{L}^{\pi}_1(b_1,b_2)},
\end{align}
where $C^{\pi}$ is defined as $C^{\pi}\coloneqq\sup_{h\in[H]}\mathbb{E}_{\pi^b}\left[(q^{\pi}_h(A_h,Z_h))^2\right]$, the inequality follows from Cauchy-Schwarz inequality, and the equality follows from the definition of $\mathcal{L}^{\pi}_1$ in \eqref{eq: RMSE loss}.
Combining \eqref{eq: proof lem 4.2 eq 1}, \eqref{eq: proof lem 4.2 eq 2} with \eqref{eq: proof lem 4.2 eq 3}, we can obtain that 
\begin{align}\label{eq: B13}
     &F(\bbb^{\pi})-F(\bbb) \leq  \sqrt{C^{\pi}}\cdot\sqrt{\mathcal{L}^{\pi}_1(b_1,b_2)}+\gamma\mathbb{E}_{\pi^b}\Bigg[q^{\pi}_2(A_2,Z_2)\Big(b^{\pi}_{2}(A_2, W_{2})-b_2(A_2,W_2)\Big) \Bigg].
\end{align}
Now applying the above argument on the second term in the R.H.S. of \eqref{eq: B13} recursively, we can obtain that 
\begin{align*}
   F(\bbb^{\pi})-F(\bbb)\leq \sum_{h=1}^H\gamma^{h-1}\sqrt{C^{\pi}}\cdot\sqrt{\mathcal{L}_h^{\pi}(b_h,b_{h+1})}.
\end{align*}
This finishes the proof of Lemma \ref{lem: regret decomposition}.
\end{proof}

\subsection{Proof of Lemma \ref{lem: true in CR}}\label{subsec: true in CR proof}
\begin{proof}[Proof of Lemma \ref{lem: true in CR}]
By the definition of the confidence region $\CR^{\pi}(\alpha)$ in \eqref{eq: define CR}, we need to show for any policy $\pi\in\Pi(\cH)$ and step $h\in[H]$, it holds that,
\begin{align}\label{eq: proof lem 3.3 eq 1}
    \max_{g \in \mathbb{G}} \hat{\Phi}_{\pi, h}^\lambda ( b_h^{\pi}, b_{h+1}^{\pi};g) -  \max_{g \in \mathbb{G}}  \hat{\Phi}_{\pi, h}^\lambda(\hat{b}_{h}(b_{h+1}^{\pi}), b_{h+1}^{\pi};g) \leq \xi.
\end{align}
Notice that by Assumption \ref{assump: dual function class}, the function class $\mathbb{G}$ is symmetric and star-shaped, which indicates that 
\begin{align*}
    \max_{g \in \mathbb{G}}  \hat{\Phi}_{\pi, h}^\lambda(\hat{b}_{h}(b_{h+1}^{\pi}), b_{h+1}^{\pi};g)\geq\hat{\Phi}_{\pi, h}^\lambda(\hat{b}_{h}(b_{h+1}^{\pi}), b_{h+1}^{\pi};0)= 0.
\end{align*}
Therefore, in order to prove \eqref{eq: proof lem 3.3 eq 1}, it suffices to show that 
\begin{align}\label{eq: proof lem 3.3 eq 2}
    \max_{g \in \mathbb{G}} \hat{\Phi}_{\pi, h}^\lambda ( b_h^{\pi}, b_{h+1}^{\pi};g)\leq \xi.
\end{align}
To relate the empirical expectation $\hat{\Phi}_{\pi, h}^\lambda ( b_h^{\pi}, b_{h+1}^{\pi};g)=\hat \Phi_{\pi,h}(b_h^{\pi},b_{h+1}^{\pi};g)-\lambda\|g\|_{2,n}^2$ to its population version, we need two localized uniform concentration inequalities. On the one hand, to relate $\|g\|_2^2$ and $\|g\|_{2,n}^2$, by Lemma \ref{lem: localized uniform concentration} (Theorem 14.1 of \cite{wainwright2019high}), for some absolute constants $c_1,c_2>0$, it holds with probability at least $1-\delta/2$ that,
\begin{align}\label{eq: proof lem 3.3 concentraion 1}
    \left|\|g\|_{2,n}^2-\|g\|_2^2\right|\leq \frac{1}{2}\|g\|_2^2+\frac{M_{\mathbb{G}}^2\log(2c_1/\zeta)}{2c_2n},\quad \forall g\in\mathbb{G},
\end{align}
where $\zeta = \min\{\delta,2c_1\exp(-c_2n\alpha_{\mathbb{G},n}^2/M_{\mathbb{G}}^2)\}$ and $\alpha_{\mathbb{G},n}$ is the critical radius of function class $\mathbb{G}$ defined in Assumption \ref{assump: dual function class}.
On the other hand, to relate $\hat \Phi_{\pi,h}(b_h,b_{h+1};g)$ and $\Phi_{\pi,h}(b_h,b_{h+1};g)$
we invoke Lemma \ref{lem: localized uniform concentration 2} (Lemma 11 of \citep{foster2019orthogonal}).
Specifically, for any given $b_h,b_{h+1}\in\mathbb{B}$, $\pi\in\Pi(\cH)$, and $h\in[H]$, in Lemma \ref{lem: localized uniform concentration 2} we choose $\mathcal{F}=\mathbb{G}$, $\mathcal{X}=\mathcal{A}\times\mathcal{Z}$, $\mathcal{Y}=\mathcal{I}_h$, and loss function $\ell(g(A_h,Z_h),I_h)\coloneqq\varsigma_h^{\pi}(b_h,b_{h+1})(I_h)\cdot g(A_h,Z_h)$ where $\varsigma^{\pi}_h$ is defined in \eqref{eq: ell}, $I_h\in\mathcal{I}_h$ is defined in the beginning of Appendix \ref{sec: proof of Lemmas in proof sketch}.
It holds that $\ell$ is $L$-Lipschitz continuous in the first argument since for any $g,g^{\prime}\in\mathbb{G}$,
$(A_h,Z_h)\in\mathcal{A}\times\mathcal{Z}$, it holds that
\begin{align*}
    \big|\ell(g(A_h,Z_h),I_h)-\ell(g^{\prime}(A_h,Z_h),I_h)\big|&=|\varsigma_h^{\pi}(b_h,b_{h+1})(I_h)|\cdot|g(A_h,Z_h)-g^{\prime}(A_h,Z_h)|\notag\\
    &\leq2M_{\mathbb{B}}\cdot |g(A_h,Z_h)-g^{\prime}(A_h,Z_h)|,\notag
\end{align*}
which indicates that $L=2M_{\mathbb{B}}$.
Now setting $f^{\star}=0$ in Lemma \ref{lem: localized uniform concentration 2}, we have that $\delta_n$ in Lemma \ref{lem: localized uniform concentration 2} coincides with $\alpha_{\mathbb{G},n}$ in Assumption \ref{assump: dual function class}. 
Then we can conclude that for some absolute constants $c_1,c_2>0$, it holds with probability at least $1-\delta/(2|\mathbb{B}|^2|\Pi(\cH)|H)$ that 
\begin{align}\label{eq: proof lem 3.3 concentraion 2}
    &\left|\hat \Phi_{\pi,h}(b_h,b_{h+1};g)-\Phi_{\pi,h}(b_h,b_{h+1};g)\right|\notag\\&\qquad =\left|\hat{\mathbb{E}}_{\pi^b}[\ell(g(A_h,Z_h),I_h)]-\mathbb{E}_{\pi^b}[\ell(g(A_h,Z_h),I_h)]\right|\notag\\
    &\qquad\leq 18L\|g\|_2\sqrt{\frac{M_{\mathbb{G}}^2\log\big(2c_1|\mathbb{B}|^2|\Pi(\cH)|H/\zeta'\big)}{c_2n}}+\frac{18LM_{\mathbb{G}}^2\log\big(2c_1|\mathbb{B}|^2|\Pi(\cH)|H/\zeta'\big)}{c_2n},\quad \forall g\in\mathbb{G},
\end{align}
where $\zeta' = \min\{\delta,2c_1|\mathbb{B}|^2|\Pi(\cH)|H\exp(-c_2n\alpha_{\mathbb{G},n}^2/M_{\mathbb{G}}^2)\}$.
 Applying a union bound argument over $b_h,b_{h+1}\in\mathbb{B}$, $\pi\in\Pi(\cH)$, and $h\in[H]$, we then have that \eqref{eq: proof lem 3.3 concentraion 2} holds for any $b_h,b_{h+1}\in\mathbb{B}$, $g\in\mathbb{G}$, $\pi\in\Pi(\cH)$, and $h\in[H]$ with probability at least $1-\delta/2$.
Now using these two concentration inequalities \eqref{eq: proof lem 3.3 concentraion 1} and \eqref{eq: proof lem 3.3 concentraion 2}, we can further deduce that, for some absolute constants $c_1,c_2>0$, with probability at least $1-\delta$,
\begin{align*}
    &\max_{g \in \mathbb{G}} \hat{\Phi}_{\pi, h}^\lambda ( b_h^{\pi}, b_{h+1}^{\pi};g)\notag\\
    &\qquad =\max_{g \in \mathbb{G}}\left\{\hat{\Phi}_{\pi, h} ( b_h^{\pi}, b_{h+1}^{\pi};g)-\lambda\|g\|_{2,n}^2\right\}\notag\\
    &\qquad \leq \max_{g \in \mathbb{G}}\Bigg\{\Phi_{\pi, h} ( b_h^{\pi}, b_{h+1}^{\pi};g)-\lambda\|g\|_{2}^2
    +\frac{\lambda}{2}\|g\|_2^2+\frac{\lambda M_{\mathbb{G}}^2\log(2c_1/\zeta)}{2c_2n},\notag\\
    &\qquad\qquad +18L\|g\|_2\sqrt{\frac{M_{\mathbb{G}}^2\log(2c_1|\mathbb{B}|^2|\Pi(\cH)|H/\zeta^{\prime})}{c_2n}}+\frac{18LM_{\mathbb{G}}^2\log(2c_1|\mathbb{B}|^2|\Pi(\cH)|H/\zeta^{\prime})}{c_2n}\Bigg\},\notag\\
\end{align*}
where $\zeta = \min\{\delta, 2c_1\exp(-c_2n\alpha_{\mathbb{G},n}^2/M_{\mathbb{G}}^2)\}$ and $\zeta^{\prime} = \min\{\delta,2c_1|\mathbb{B}|^2|\Pi(\cH)|H\exp(-c_2n\alpha_{\mathbb{G},n}^2/M_{\mathbb{G}}^2)\}$
for any policy $\pi\in\Pi(\cH)$ and step $h$.
Then we can further bound the right-hand side of the above inequality as 
\begin{align*}
    &\max_{g \in \mathbb{G}} \hat{\Phi}_{\pi, h}^\lambda ( b_h^{\pi}, b_{h+1}^{\pi};g)\notag\\
    &\qquad \leq \max_{g\in\mathbb{G}}\Phi_{\pi,h}( b_h^{\pi}, b_{h+1}^{\pi};g)+\max_{g\in\mathbb{G}}\Bigg\{
    -\frac{\lambda}{2}\|g\|_2^2+18L\|g\|_2\sqrt{\frac{M_{\mathbb{G}}^2 \cdot \log(2c_1|\mathbb{B}|^2|\Pi(\cH)|H/\zeta^{\prime})}{c_2n}}\Bigg\}\notag\\
    &\qquad\qquad +\frac{\lambda M_{\mathbb{G}}^2\cdot \log(2c_1/\zeta)}{2c_2n}+ \frac{18LM_{\mathbb{G}}^2\cdot \log(2c_1|\mathbb{B}|^2|\Pi(\cH)|H/\zeta^{\prime})}{c_2n}
    \notag\\
    &\qquad
    \leq \frac{728L^2\cdot M_{\mathbb{G}}^2\cdot \log(2c_1|\mathbb{B}|^2|\Pi(\cH)|H/\zeta^{\prime})}{\lambda n} + \frac{\lambda M_{\mathbb{G}}^2\cdot \log(2c_1/\zeta)}{2c_2n} \notag
    \\ & \qquad\qquad + \frac{18LM_{\mathbb{G}}^2\cdot \log(2c_1|\mathbb{B}|^2|\Pi(\cH)|H/\zeta^{\prime})}{c_2n}.
\end{align*}
Here the last inequality holds from the fact that $\Phi_{\pi, h} ( b_h^{\pi}, b_{h+1}^{\pi};g)=0$ since $b_h^{\pi}$ and $b_{h+1}^{\pi}$ are true bridge functions, and the fact that
$\sup_{\|g\|_2}\{a\|g\|_2-b\|g\|_2^2\}\leq a^2/4b$ for any $b>0$.
Now according to the choice of $\xi$ in Lemma \ref{lem: true in CR}, using the fact that $\zeta<\zeta^{\prime}$ and $L=2M_{\mathbb{B}}$, we can conclude that, with probability at least $1-\delta$,
\begin{align*}
    &\max_{g \in \mathbb{G}} \hat{\Phi}_{\pi, h}^\lambda ( b_h^{\pi}, b_{h+1}^{\pi};g)\notag\\ 
    &\qquad \leq \frac{728L^2M_{\mathbb{G}}^2\cdot \log(2c_1|\mathbb{B}|^2|\Pi(\cH)|H/\zeta^{\prime})}{\lambda n} + \frac{\lambda M_{\mathbb{G}}^2\cdot \log(2c_1/\zeta)}{2c_2n}+ \frac{18LM_{\mathbb{G}}^2\cdot \log(2c_1|\mathbb{B}|^2|\Pi(\cH)|H/\zeta^{\prime})}{c_2n}\notag\\
    &\qquad\lesssim\mathcal{O}\left(\frac{(\lambda+1/\lambda)\cdot M_{\mathbb{B}}^2M_{\mathbb{G}}^2 \cdot \log(|\mathbb{B}||\Pi(\cH)|H/\zeta)}{n}\right)\lesssim \xi.
\end{align*}
This proves \eqref{eq: proof lem 3.3 eq 2}, and thus further indicates \eqref{eq: proof lem 3.3 eq 1}. Therefore,  we finish the proof of Lemma \ref{lem: true in CR}.
\end{proof}

\subsection{Proof of Lemma \ref{lem: accuracy of CR}}\label{subsec: accuracy of CR proof}

In order to prove Lemma \ref{lem: accuracy of CR}, we first introduce the following lemma, which claims that for any $b_{h+1}\in\mathbb{B}$, the $b^{\star}(b_{h+1})$ defined in \eqref{eq: def b star} satisfies that $\max_{g\in\mathbb{G}}\hat{\Phi}^{\lambda}_{\pi,h}(b^{\star}(b_{h+1}),b_{h+1};g)$ is well-bounded.
The proof of lemma follows the same argument as in the proof of Lemma \ref{lem: true in CR}, which we defer to Appendix \ref{subsec: proof Phi b star bound}.
\begin{lemma}\label{lem: Phi b star bound}
    For any function $b_{h+1}\in\mathbb{B}$, policy $\pi\in\Pi(\cH)$, and step $h\in[H]$, it holds with probability at least $1-\delta/2$
    that 
    \begin{align*}
        \max_{g\in\mathbb{G}}\hat{\Phi}_{\pi,h}^{\lambda}(b^{\star}_h(b_{h+1}),b_{h+1};g)\leq \xi+\epsilon_{\mathbb{B}}^{1/2}M_{\mathbb{G}},
    \end{align*}
    where $b^{\star}(b_{h+1})$ is defined in \eqref{eq: def b star} and $\xi$ is defined in Lemma \ref{lem: accuracy of CR}.
\end{lemma}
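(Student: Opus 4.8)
The plan is to follow the proof of Lemma~\ref{lem: true in CR} essentially verbatim, with the true bridge function $b_h^\pi$ replaced by the population minimax minimizer $b_h^\star(b_{h+1})$ from \eqref{eq: def b star}; the only new ingredient is a population-level estimate quantifying how far $b_h^\star(b_{h+1})$ is from annihilating the Bellman residual $\ell_h^\pi$, which contributes the extra $\epsilon_{\mathbb{B}}^{1/2}M_{\mathbb{G}}$ term in place of the exact identity $\Phi_{\pi,h}(b_h^\pi,b_{h+1}^\pi;g)=0$ used in Lemma~\ref{lem: true in CR}. First I would bound the relevant population quantity. By the completeness part of Assumption~\ref{assump: completeness and realizability}, $\ell_h^\pi(b,b_{h+1})/(2\lambda)\in\mathbb{G}$ for every $b\in\mathbb{B}$, so the Fenchel-duality identity \eqref{eq: RMSE 3} gives $4\lambda\max_{g\in\mathbb{G}}\Phi_{\pi,h}^\lambda(b,b_{h+1};g)=\mathcal{L}_h^\pi(b,b_{h+1})$ for all $b\in\mathbb{B}$. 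Hence the minimizer $b_h^\star(b_{h+1})$ of $b\mapsto\max_g\Phi_{\pi,h}^\lambda(b,b_{h+1};g)$ over $\mathbb{B}$ is simultaneously the minimizer of $\mathcal{L}_h^\pi(\cdot,b_{h+1})$ over $\mathbb{B}$, and the realizability part of Assumption~\ref{assump: completeness and realizability} then yields $\mathcal{L}_h^\pi(b_h^\star(b_{h+1}),b_{h+1})=\min_{b\in\mathbb{B}}\mathcal{L}_h^\pi(b,b_{h+1})\le\epsilon_{\mathbb{B}}$, i.e., $\|\ell_h^\pi(b_h^\star(b_{h+1}),b_{h+1})\|_2\le\epsilon_{\mathbb{B}}^{1/2}$. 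Combining this with $\|g\|_\infty\le M_{\mathbb{G}}$ from Assumption~\ref{assump: dual function class} and Jensen's inequality gives, for every $g\in\mathbb{G}$,
\[
\Phi_{\pi,h}(b_h^\star(b_{h+1}),b_{h+1};g)\;\le\; M_{\mathbb{G}}\,\mathbb{E}_{\pi^b}\bigl[|\ell_h^\pi(b_h^\star(b_{h+1}),b_{h+1})(A_h,Z_h)|\bigr]\;\le\;\epsilon_{\mathbb{B}}^{1/2}M_{\mathbb{G}}.
\]

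Next I would pass to the empirical object exactly as in Lemma~\ref{lem: true in CR}. Writing $\hat{\Phi}_{\pi,h}^\lambda(b_h^\star(b_{h+1}),b_{h+1};g)=\hat{\Phi}_{\pi,h}(b_h^\star(b_{h+1}),b_{h+1};g)-\lambda\|g\|_{2,n}^2$, I apply the same two localized uniform concentration inequalities used there: inequality \eqref{eq: proof lem 3.3 concentraion 2} (Lemma~11 of \cite{foster2019orthogonal}, with loss $g\mapsto\varsigma_h^\pi(b_h^\star(b_{h+1}),b_{h+1})(I_h)\,g(A_h,Z_h)$, Lipschitz constant $L=2M_{\mathbb{B}}$, and $f^\star=0$, so its radius coincides with $\alpha_{\mathbb{G},n}$) to control $|\hat{\Phi}_{\pi,h}-\Phi_{\pi,h}|$ uniformly in $g$, and inequality \eqref{eq: proof lem 3.3 concentraion 1} (Theorem~14.1 of \cite{wainwright2019high}) to get $-\lambda\|g\|_{2,n}^2\le-\tfrac{\lambda}{2}\|g\|_2^2+\lambda M_{\mathbb{G}}^2\log(2c_1/\zeta)/(2c_2 n)$. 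Since $b_h^\star(b_{h+1})\in\mathbb{B}$, the version of \eqref{eq: proof lem 3.3 concentraion 2} that is union-bounded over $\mathbb{B}\times\mathbb{B}$ already covers this choice, so apportioning the failure budget as $\delta/4$ to each, both hold on an event of probability at least $1-\delta/2$. On that event, combining with the population bound above gives, for each $g\in\mathbb{G}$,
\[
\hat{\Phi}_{\pi,h}^\lambda(b_h^\star(b_{h+1}),b_{h+1};g)\;\le\;\epsilon_{\mathbb{B}}^{1/2}M_{\mathbb{G}}\;-\;\tfrac{\lambda}{2}\|g\|_2^2\;+\;18L\|g\|_2\sqrt{\tfrac{M_{\mathbb{G}}^2\eta}{c_2 n}}\;+\;\tfrac{\lambda M_{\mathbb{G}}^2\log(2c_1/\zeta)}{2c_2 n}\;+\;\tfrac{18LM_{\mathbb{G}}^2\eta}{c_2 n},
\]
with $\eta=\log(2c_1|\mathbb{B}|^2|\Pi(\cH)|H/\zeta')$. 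Taking $\max_{g\in\mathbb{G}}$ and using $\sup_{t\ge0}\{at-\tfrac{\lambda}{2}t^2\}=a^2/(2\lambda)$ to absorb the $\|g\|_2$-linear term collapses everything after $\epsilon_{\mathbb{B}}^{1/2}M_{\mathbb{G}}$ into $\mathcal{O}\bigl((\lambda+1/\lambda)M_{\mathbb{B}}^2 M_{\mathbb{G}}^2\log(|\mathbb{B}||\Pi(\cH)|H/\zeta)/n\bigr)$, which by the choice of $\xi$ in Lemma~\ref{lem: true in CR} is $\lesssim\xi$; this yields $\max_{g\in\mathbb{G}}\hat{\Phi}_{\pi,h}^\lambda(b_h^\star(b_{h+1}),b_{h+1};g)\le\xi+\epsilon_{\mathbb{B}}^{1/2}M_{\mathbb{G}}$.

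The concentration half is a verbatim rerun of Lemma~\ref{lem: true in CR}, so the only real content is the population step: one must observe that, under completeness, minimizing $\max_g\Phi_{\pi,h}^\lambda(\cdot,b_{h+1};g)$ over $\mathbb{B}$ coincides with minimizing the RMSE $\mathcal{L}_h^\pi(\cdot,b_{h+1})$, which lets realizability transfer directly to $b_h^\star(b_{h+1})$. A secondary subtlety is that realizability controls only the $L^2$ norm of the Bellman residual, so to feed it into $\Phi_{\pi,h}(\cdot;g)$ one bounds $g$ in $L^\infty$ and the residual in $L^2$ — and this $L^\infty/L^2$ split is precisely what produces the $\epsilon_{\mathbb{B}}^{1/2}M_{\mathbb{G}}$ (rather than $\epsilon_{\mathbb{B}}$) scaling.
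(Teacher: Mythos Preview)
Your proposal is correct and matches the paper's own proof essentially step for step: the paper likewise reruns the concentration argument of Lemma~\ref{lem: true in CR} and isolates the same population ingredient, namely that completeness makes $b_h^\star(b_{h+1})$ the minimizer of $\mathcal{L}_h^\pi(\cdot,b_{h+1})$ so that realizability yields $\max_{g\in\mathbb{G}}\Phi_{\pi,h}(b_h^\star(b_{h+1}),b_{h+1};g)\le \epsilon_{\mathbb{B}}^{1/2}M_{\mathbb{G}}$. The only cosmetic difference is that the paper obtains this last bound via Cauchy--Schwarz ($\mathbb{E}[\ell_h^\pi\cdot g]\le \|\ell_h^\pi\|_2\|g\|_2$) rather than your $L^\infty/L^1$-plus-Jensen split, which gives the identical estimate.
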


\begin{proof}[Proof of Lemma \ref{lem: Phi b star bound}]
See Appendix \ref{subsec: proof Phi b star bound} for a detailed proof.
\end{proof} 

With Lemma \ref{lem: Phi b star bound}, we are now ready to give the proof of Lemma \ref{lem: accuracy of CR}.

\begin{proof}[Proof of Lemma \ref{lem: accuracy of CR}]
Let's consider that for any $b_h,b_{h+1}\in\CR^{\pi}(\xi)$, we have that
\begin{align*}
    \max_{g \in \mathbb{G}} \hat{\Phi}_{\pi, h}^\lambda (b_h, b_{h+1};g)
    & =\max_{g \in \mathbb{G}} \Big\{
    \hat{\Phi}_{\pi, h} (b_h, b_{h+1};g)-
    \hat{\Phi}_{\pi, h} (b^{\star}_h(b_{h+1}), b_{h+1};g)-2\lambda\|g\|_{2,n}^2 \notag
    \\ & \qquad+\hat{\Phi}_{\pi, h} (b^{\star}_h(b_{h+1}), b_{h+1};g)
    +\lambda\|g\|_{2,n}^2\Big\}.
    \end{align*}
    We further write the above as
    \begin{align}\label{eq: proof lem 3.3 eq 3}
    \max_{g \in \mathbb{G}} \hat{\Phi}_{\pi, h}^\lambda (b_h, b_{h+1};g)
    & \geq \max_{g \in \mathbb{G}} \Big\{
    \hat{\Phi}_{\pi, h} (b_h, b_{h+1};g)-
    \hat{\Phi}_{\pi, h} (b^{\star}_h(b_{h+1}), b_{h+1};g)-2\lambda\|g\|_{2,n}^2\Big\}\notag \\&\qquad+\min_{g\in\mathbb{G}}\Big\{
    \hat{\Phi}_{\pi, h} (b^{\star}_h(b_{h+1}), b_{h+1};g)
    +\lambda\|g\|_{2,n}^2\Big\}\notag\\
    &\eqw{(a)}\underbrace{\max_{g \in \mathbb{G}} \Big\{
    \hat{\Phi}_{\pi, h} (b_h, b_{h+1};g)-
    \hat{\Phi}_{\pi, h} (b^{\star}_h(b_{h+1}), b_{h+1};g)-2\lambda\|g\|_{2,n}^2\Big\}}_{\displaystyle{(\star)}}\notag
    \\&\qquad -\max_{g\in\mathbb{G}}
    \hat{\Phi}_{\pi, h}^{\lambda} (b^{\star}_h(b_{h+1}), b_{h+1};g).
\end{align}
Here step (a) follows from that $\mathbb{G}$ is symmetric, $\hat{\Phi}_{\pi,h}(b_h,h_{h+1};-g)=-\hat{\Phi}_{\pi,h}(b_h,h_{h+1};g)$, and that 
\begin{align*}
    \min_{g\in\mathbb{G}}\Big\{
    \hat{\Phi}_{\pi, h} (b^{\star}_h(b_{h+1}), b_{h+1};g)
    +\lambda\|g\|_{2,n}^2\Big\}&=\min_{g\in\mathbb{G}}\Big\{
    -\hat{\Phi}_{\pi, h} (b^{\star}_h(b_{h+1}), b_{h+1};-g)
    +\lambda\|g\|_{2,n}^2\Big\}\\
    &=\min_{g\in\mathbb{G}}\Big\{
    -\hat{\Phi}_{\pi, h} (b^{\star}_h(b_{h+1}), b_{h+1};g)
    +\lambda\|g\|_{2,n}^2\Big\}\\
    &=-\max_{g\in\mathbb{G}}\Big\{
    \hat{\Phi}_{\pi, h} (b^{\star}_h(b_{h+1}), b_{h+1};g)
    -\lambda\|g\|_{2,n}^2\Big\}\\
    &=-\max_{g\in\mathbb{G}}\hat{\Phi}^{\lambda}_{\pi, h}(b^{\star}_h(b_{h+1}),b_{h+1};g).
\end{align*}
In the sequel, we upper and lower bound term $(\star)$ respectively.

\vspace{3mm}
\noindent
\textbf{Upper bound of term ($\star$).}
By inequality \eqref{eq: proof lem 3.3 eq 3}, after rearranging terms, we can arrive that 
\begin{align*}
    (\star)&\leq 
    \max_{g\in\mathbb{G}}\hat{\Phi}^{\lambda}_{\pi, h}(b^{\star}_h(b_{h+1}),b_{h+1};g)+\max_{g\in\mathbb{G}}\hat{\Phi}^{\lambda}_{\pi, h}(b_h,b_{h+1};g)\notag\\
    &\leq \max_{g\in\mathbb{G}}\hat{\Phi}^{\lambda}_{\pi, h}(b^{\star}_h(b_{h+1}),b_{h+1};g)\notag\\
    &\qquad +\max_{g\in\mathbb{G}}\hat{\Phi}^{\lambda}_{\pi, h}(b_h,b_{h+1};g)-\max_{g\in\mathbb{G}}\hat{\Phi}^{\lambda}_{\pi, h}(\hat{b}_h(b_{h+1}),b_{h+1};g)\notag\\
    &\qquad +\max_{g\in\mathbb{G}}\hat{\Phi}^{\lambda}_{\pi, h}(\hat{b}_h(b_{h+1}),b_{h+1};g)
\end{align*}
On the one hand, by Lemma \ref{lem: Phi b star bound}, we have that with probability at least $1-\delta/2$,
\begin{align}\label{eq: proof lemma 6.3 1}
        \max_{g\in\mathbb{G}}\hat{\Phi}_{\pi,h}^{\lambda}(b^{\star}_h(b_{h+1}),b_{h+1};g)\leq \xi+\epsilon_{\mathbb{B}}^{1/2}M_{\mathbb{G}},
    \end{align}
and by the definition of $\hat{b}_h(b_{h+1})$ in \eqref{eq: hat b}, it holds simultaneously that
\begin{align}\label{eq: proof lemma 6.3 2}
    \max_{g\in\mathbb{G}}\hat{\Phi}^{\lambda}_{\pi, h}(\hat{b}_h(b_{h+1}),b_{h+1};g)\leq \max_{g\in\mathbb{G}}\hat{\Phi}^{\lambda}_{\pi, h}(b^{\star}_h(b_{h+1}),b_{h+1};g)\leq \xi+\epsilon_{\mathbb{B}}^{1/2}M_{\mathbb{G}}.
\end{align}
On the other hand, by the choice of $\CR^{\pi}(\xi)$, it holds that 
\begin{align}\label{eq: proof lemma 6.3 3}
    \max_{g\in\mathbb{G}}\hat{\Phi}^{\lambda}_{\pi, h}(b_h,b_{h+1};g)-\max_{g\in\mathbb{G}}\hat{\Phi}^{\lambda}_{\pi, h}(\hat{b}_h(b_{h+1}),b_{h+1};g)\leq \xi.
\end{align}
Consequently, by combining \eqref{eq: proof lemma 6.3 1}, \eqref{eq: proof lemma 6.3 2}, and \eqref{eq: proof lemma 6.3 3}, we conclude that with probability at least $1-\delta/2$,
\begin{align}\label{eq: upper bound}
    (\star)\leq 3\xi+2\epsilon_{\mathbb{B}}^{1/2}M_{\mathbb{G}}.
\end{align}

\vspace{3mm}
\noindent
\textbf{Lower bound of term ($\star$).}
For lower bound, we need two localized uniform concentration inequalities similar to \eqref{eq: proof lem 3.3 concentraion 1} and \eqref{eq: proof lem 3.3 concentraion 2} in the proof of Lemma \ref{lem: true in CR}.
On the one hand, by Lemma \ref{lem: localized uniform concentration}, for some absolute constants $c_1,c_2>0$, it holds with probability at least $1-\delta/4$ that,
\begin{align}\label{eq: proof lem 3.4 concentraion 1}
    \left|\|g\|_{2,n}^2-\|g\|_2^2\right|\leq \frac{1}{2}\|g\|_2^2+\frac{M_{\mathbb{G}}^2\log(4c_1/\zeta)}{2c_2n},\quad \forall g\in\mathbb{G},\end{align}
where $\zeta = \min\{\delta, 4c_1\exp(-c_2n\alpha_{\mathbb{G},n}^2/M_{\mathbb{G}}^2)\}$ and $\alpha_{\mathbb{G},n}$ is the critical radius of $\mathbb{G}$ defined in Assumption \ref{assump: dual function class}.
On the other hand, following the same argument as in deriving \eqref{eq: proof lem 3.3 concentraion 2}, 
for any given $b_h,b_h^{\prime},b_{h+1}\in\mathbb{B}$, $\pi\in\Pi(\cH)$, and $h\in[H]$, in Lemma \ref{lem: localized uniform concentration 2} we choose $\mathcal{F}=\mathbb{G}$, $\mathcal{X}=\mathcal{A}\times\mathcal{Z}$, $\mathcal{Y}=\mathcal{I}$, and loss function 
\begin{align*}
\ell(g(A_h,Z_h),I_h)\coloneqq\varsigma_h^{\pi}(b_h,b_{h+1})(I_h)g(A_h,Z_h)-\varsigma_h^{\pi}(b_h^{\prime},b_{h+1})(I_h)g(A_h,Z_h),
\end{align*}
where $\varsigma^{\pi}_h$ is defined in \eqref{eq: ell} and $I_h\in\mathcal{I}_h$ is defined in the beginning of Appendix \ref{sec: proof of Lemmas in proof sketch}. It holds that $\ell$ is $L$-Lipschitz continuous in its first argument with $L=2M_{\mathbb{B}}$.
Now setting $f^{\star}=0$ in Lemma \ref{lem: localized uniform concentration 2}, we have that $\delta_n$ in Lemma \ref{lem: localized uniform concentration 2} coincides with $\alpha_{\mathbb{G},n}$ in Assumption \ref{assump: dual function class}. 
Then we have that for some absolute constants $c_1,c_2>0$, it holds with probability at least $1-\delta/(4|\mathbb{B}|^3|\Pi(\cH)|H)$ that
\begin{align}\label{eq: proof lem 3.4 concentraion 2}
&\left|\Big(\hat{\Phi}_{\pi, h} (b_h, b_{h+1};g)-
    \hat{\Phi}_{\pi, h} (b^{\prime}_h, b_{h+1};g)\Big)-\Big(\Phi_{\pi, h} (b_h, b_{h+1};g)-
    \Phi_{\pi, h} (b^{\prime}_h, b_{h+1};g)\Big)\right|\notag\\
    &\qquad =\left|\hat{\mathbb{E}}_{\pi^b}[\ell(g(A_h,Z_h),I_h)]-\mathbb{E}_{\pi^b}[\ell(g(A_h,Z_h),I_h)]\right|\notag\\
    &\qquad\leq 18L\|g\|_2\sqrt{\frac{M_{\mathbb{G}}^2\cdot \log(4c_1|\mathbb{B}|^3|\Pi(\cH)|H/\zeta^{\prime})}{c_2n}}+\frac{18L\cdot M_{\mathbb{G}}^2\cdot \log\big(4c_1|\mathbb{B}|^3|\Pi(\cH)|H/\zeta^{\prime}\big)}{c_2n},\quad \forall g\in\mathbb{G},
\end{align}
where $\zeta^{\prime} = \min\{\delta,4c_1|\mathbb{B}|^3|\Pi(\cH)|H\exp(-c_2n\alpha_{\mathbb{G},n}^2/M_{\mathbb{G}}^2)\}$.
 Applying a union bound argument over $b_h,b_h^{\prime},b_{h+1}\in\mathbb{B}$, $\pi\in\Pi(\cH)$, and $h\in[H]$, we have that \eqref{eq: proof lem 3.4 concentraion 2} holds for any $b_h,b_h^{\prime},b_{h+1}\in\mathbb{B}$, $g\in\mathbb{G}$, $\pi\in\Pi(\cH)$, and $h\in[H]$ with probability at least $1-\delta/4$.
 Finally, for simplicity, we denote that
 \begin{align}\label{eq: iota}
     \iota_n \coloneqq\sqrt{\frac{M_{\mathbb{G}}^2 \cdot \log(4c_1|\mathbb{B}|^3|\Pi(\cH)|H/\zeta^{\prime})}{c_2n}}, \quad \iota_n^{\prime}\coloneqq\sqrt{\frac{M_{\mathbb{G}}^2\cdot \log(4c_1/\zeta)}{2c_2n}}
 \end{align}

Now we are ready to prove the lower bound on term $(\star)$.
For simplicity, given fixed $b_h,b_{h+1}\in\mathbb{B}$, we denote 
\begin{align*}
    g_h^{\pi}\coloneqq\frac{1}{2\lambda}\ell_h^{\pi}(b_h,b_{h+1})\in\mathbb{G},
\end{align*}
where $\ell_h^\pi$ is defined in \eqref{eq: ell} and $g_h^{\pi}\in\mathbb{G}$ due to Assumption \ref{assump: completeness and realizability}.
Now consider that 
\begin{align*}
    (\star)&=\max_{g \in \mathbb{G}} \Big\{
    \hat{\Phi}_{\pi, h} (b_h, b_{h+1};g)-
    \hat{\Phi}_{\pi, h} (b^{\star}_h(b_{h+1}), b_{h+1};g)-2\lambda\|g\|_{2,n}^2\Big\}\notag\\
    &\geq 
    \hat{\Phi}_{\pi, h} (b_h, b_{h+1};g_h^{\pi}/2)-
    \hat{\Phi}_{\pi, h} (b^{\star}_h(b_{h+1}), b_{h+1}; g_h^{\pi}/2)-\frac{\lambda}{2}\| g_h^{\pi}\|_{2,n}^2,
\end{align*}
where the inequality follows from the fact that $\mathbb{G}$ is star-shaped and consequently $g_h^{\pi}/2 \in\mathbb{G}$.
Then by applying concentration inequality \eqref{eq: proof lem 3.4 concentraion 1} and \eqref{eq: proof lem 3.4 concentraion 2},
we have that 
\begin{align}\label{eq: lower bound}
    (\star)&\geq \Phi_{\pi, h} (b_h, b_{h+1}; g_h^{\pi}/2)-
    \Phi_{\pi, h} (b^{\star}_h(b_{h+1}), b_{h+1}; g_h^{\pi}/2)-18L\iota_n\|g_h^{\pi}\|_2
    -18L\iota_n^2 \notag
    \\ & \qquad-\frac{\lambda}{2}\left(\frac{3}{2}\|g_h^{\pi}\|_2^2+\iota_n^{\prime 2}\right)\notag\\
    &\geq\lambda\|g_h^{\pi}\|_2^2-18L\iota_n\|g_h^{\pi}\|_2-\epsilon_{\mathbb{B}}^{1/2}M_{\mathbb{G}}
    -18L\iota_n^2-\frac{\lambda}{2}\left(\frac{3}{2}\|g_h^{\pi}\|_2^2+\iota_n^{\prime 2}\right)\notag\\
    &=\frac{\lambda}{4}\|g_h^{\pi}\|_2^2-18L\iota_n\|g_h^{\pi}\|_2-18L\iota_n^2-\frac{\lambda}{2}\iota_n^{\prime 2}-\epsilon_{\mathbb{B}}^{1/2}M_{\mathbb{G}},
\end{align} 
where the second inequality follows from  that $\Phi_{\pi, h} (b^{\star}(b_{h+1}), b_{h+1}; g_h^{\pi}/2)\leq \epsilon_{\mathbb{B}}^{1/2}M_{\mathbb{G}}$ (we prove this inequality by \eqref{eq: proof lemma b1 3} in the proof of Lemma \ref{lem: Phi b star bound}) and the fact that
\begin{align*}
    \Phi_{\pi, h} (b_h, b_{h+1}; g_h^{\pi}/2) = \frac{1}{4\lambda}\mathbb{E}_{\pi^b}[\ell_h^{\pi}(b_h,b_{h+1})(A_h,Z_h)^2]=\lambda\|g_h^{\pi}\|_2^2.
\end{align*}

\vspace{3mm}
\noindent
\textbf{Combining upper bound and lower bound of term ($\star$).}
Now we are ready to combine the upper bound and lower bound of $(\star)$ to derive the bound on $\mathcal{L}_h^{\pi}(b_h,b_{h+1})$. By combining upper bound \eqref{eq: upper bound} and lower bound \eqref{eq: lower bound}, we have that with probability at least $1-\delta$, for any $b_{h},b_{h+1}\in\mathbb{B}$, $\pi\in\Pi(\cH)$, and $h\in[H]$,
\begin{align*}
    \frac{\lambda}{4}\|g_h^{\pi}\|_2^2-18L\iota_n\|g_h^{\pi}\|_2-18L\iota_n^2-\frac{\lambda}{2}\iota_n^{\prime 2}-\epsilon_{\mathbb{B}}^{1/2}M_{\mathbb{G}}\leq 3\xi+2\epsilon_{\mathbb{B}}^{1/2}M_{\mathbb{G}},
\end{align*}
This gives a quadratic inequality on $\|g_h^{\pi}\|_2$, i.e.,
\begin{align*}
    \lambda\|g_h^{\pi}\|_2^2-\underbrace{72L\iota_n}_{\displaystyle{(\mathrm{A})}}\|g_h^{\pi}\|_2-\underbrace{4\left(18L\iota_n^2+\frac{\lambda}{2}\iota_n^{\prime 2}+3\xi+3\epsilon_{\mathbb{B}}^{1/2}M_{\mathbb{G}}\right)}_{\displaystyle{(\mathrm{B})}}\leq 0.
\end{align*}
By solving this quadratic equation, 
we have that 
\begin{align*}
    \|g_h^{\pi}\|_2&\leq\frac{1}{2\lambda}\mathrm{A}+\frac{1}{2\lambda}\sqrt{\mathrm{A}^2+
    4\mathrm{B}}\leq \frac{\mathrm{A}}{\lambda}+\frac{\sqrt{\mathrm{B}}}{\lambda}.
\end{align*}
Applying the definition of $\mathrm{A}$ and $\mathrm{B}$, we conclude that, with probability at least $1-\delta$,
\begin{align*}
    \|g_h^{\pi}\|_2&\leq\frac{72}{\lambda}L\iota_n+\frac{2}{\lambda}\left(18L\iota_n^2+\frac{\lambda}{2}\iota_n^{\prime 2}+3\xi+3\epsilon_{\mathbb{B}}^{1/2}M_{\mathbb{G}}\right)^{1/2}\notag\\
    &\leq  \frac{72}{\lambda}L\iota_n+\frac{6\sqrt{2}}{\lambda}L^{1/2}\iota_n+\frac{\sqrt{2}}{\sqrt{\lambda}}\iota_n^{\prime}+\frac{2\sqrt{3}}{\lambda}\xi^{1/2}+\frac{2\sqrt{3}}{\lambda}\epsilon_{\mathbb{B}}^{1/4}M_{\mathbb{G}}^{1/2}\notag
\end{align*}
Therefore, we can bound the RMSE loss $\mathcal{L}_h^{\pi}(b_h,b_{h+1})$ by 
\begin{align*}
    \sqrt{\mathcal{L}_h^{\pi}(b_h,b_{h+1})}=2\lambda\|g_h^{\pi}\|_2\leq (144L+12\sqrt{2}L^{1/2})\iota_n+2\sqrt{2\lambda}\iota_n^{\prime}+4\sqrt{3}\xi^{1/2}+4\sqrt{3}\epsilon_{\mathbb{B}}^{1/4}M_{\mathbb{G}}^{1/2}.\notag
\end{align*}
Plugging in the definition of $\iota_n,\iota_n^{\prime}$ in \eqref{eq: iota}, $\xi$ in Lemma \ref{lem: accuracy of CR}, and that $L=2M_{\mathbb{B}}$, we have that 
\begin{align*}
    &\sqrt{\mathcal{L}_h^{\pi}(b_h,b_{h+1})}
    \\ &\qquad \leq(144L+12\sqrt{2 L })\cdot\sqrt{\frac{M_{\mathbb{G}}^2 \cdot \log(4c_1|\mathbb{B}|^3|\Pi(\cH)|H/\zeta^{\prime})}{c_2n}}+2\sqrt{2\lambda}\cdot\sqrt{\frac{M_{\mathbb{G}}^2\cdot \log(4c_1/\zeta)}{2c_2n}}\notag\\
    &\qquad\qquad+4\sqrt{3}\cdot\sqrt{\frac{C_1(\lambda+1/\lambda)\cdot M_{\mathbb{B}}^2M_{\mathbb{G}}^2\cdot \log(|\mathbb{B}||\Pi(\cH)|H/\zeta^{\prime})}{n}}+4\sqrt{3}\cdot\epsilon_{\mathbb{B}}^{1/4}M_{\mathbb{G}}^{1/2}\notag\\
    &\qquad\leq \tilde{C}_1M_{\mathbb{B}}M_{\mathbb{G}}\sqrt{\frac{(\lambda+1/\lambda)\cdot \log(|\mathbb{B}||\Pi(\cH)|H/\zeta)}{n}}+\tilde{C}_1\epsilon_{\mathbb{B}}^{1/4}M_{\mathbb{G}}^{1/2}.
\end{align*}
for some problem-independent constant $\tilde{C}_1>0$ and $\zeta = \min\{\delta, 4c_1\exp(-c_2n\alpha_{\mathbb{G},n}^2/M_{\mathbb{G}}^2)\}$.
Here in the second inequality we have used the fact that $\zeta<\zeta^{\prime}$.
This finishes the proof of Lemma \ref{lem: accuracy of CR}.
\end{proof}

\subsection{Proof of Lemma \ref{lem: Phi b star bound}}\label{subsec: proof Phi b star bound}
\begin{proof}[Proof of Lemma \ref{lem: Phi b star bound}]
Following the proof of Lemma \ref{lem: true in CR}, we first relate  $\hat{\Phi}_{\pi, h}^\lambda ( b_h, b_{h+1};g)=\hat \Phi_{\pi,h}(b_h,b_{h+1};g)-\lambda\|g\|_{2,n}^2$ and its population version $\Phi_{\pi,h}^{\lambda}(b_h,b_{h+1};g)$ via two localized uniform concentration inequalities. On the one hand, to relate $\|g\|_2^2$ and $\|g\|_{2,n}^2$, by Lemma \ref{lem: localized uniform concentration} (Theorem 14.1 of \cite{wainwright2019high}), for some absolute constants $c_1,c_2>0$, it holds with probability at least $1-\delta/4$ that 
\begin{align}\label{eq: proof lemma b1 concentration 1}
    \left|\|g\|_{2,n}^2-\|g\|_2^2\right|\leq \frac{1}{2}\|g\|_2^2+\frac{M_{\mathbb{G}}^2\cdot \log(4c_1/\zeta)}{2c_2n},\quad \forall g\in\mathbb{G},
\end{align}
where $\zeta = \min\{\delta, 4c_1\exp(-c_2n\alpha_{\mathbb{G},n}^2/M_{\mathbb{G}}^2)\}$ and $\alpha_{\mathbb{G},n}$ is the critical radius of function class $\mathbb{G}$ defined in Assumption \ref{assump: dual function class}.
On the other hand, to relate $\hat \Phi_{\pi,h}(b_h,b_{h+1};g)$ and $\Phi_{\pi,h}(b_h,b_{h+1};g)$, we invoke Lemma \ref{lem: localized uniform concentration 2} (Lemma 11 of \citep{foster2019orthogonal}).
Specifically, for any given $b_h,b_{h+1}\in\mathbb{B}$, $\pi\in\Pi(\cH)$, and step $h$, in Lemma \ref{lem: localized uniform concentration 2} we choose $\mathcal{F}=\mathbb{G}$, $\mathcal{X}=\mathcal{A}\times\mathcal{Z}$, $\mathcal{Y}=\mathcal{I}_h$, and loss function $\ell(g(A_h,Z_h),I_h)\coloneqq\varsigma_h^{\pi}(b_h,b_{h+1})(I_h)g(A_h,Z_h)$ where $\ell^{\pi}_h$ is defined in \eqref{eq: ell} and $I_h\in\mathcal{I}_h$ is defined in the beginning of Appendix \ref{sec: proof of Lemmas in proof sketch}.
We can see that $\ell$ is $L$-Lipschitz continuous in the first argument since for any $g,g^{\prime}\in\mathbb{G}$,
$(A_h,Z_h)\in\mathcal{A}\times\mathcal{Z}$, it holds that
\begin{align*}
    \big|\ell(g(A_h,Z_h),I_h)-\ell(g^{\prime}(A_h,Z_h),I_h)\big|&=|\varsigma_h^{\pi}(b_h,b_{h+1})(I_h)|\cdot|g(A_h,Z_h)-g^{\prime}(A_h,Z_h)|\notag\\
    &\leq2M_{\mathbb{B}}\cdot |g(A_h,Z_h)-g^{\prime}(A_h,Z_h)|,\notag
\end{align*}
which indicates that $L=2M_{\mathbb{B}}$.
Now setting $f^{\star}=0$ in Lemma \ref{lem: localized uniform concentration 2}, we have that $\delta_n$ in Lemma \ref{lem: localized uniform concentration 2} coincides with $\alpha_{\mathbb{G},n}$ in Assumption \ref{assump: dual function class}. 
Then we can conclude that for some absolute constants $c_1,c_2>0$, it holds with probability at least $1-\delta/(4|\mathbb{B}|^2|\Pi(\cH)|H)$ that, for all $g\in\mathbb{G}$,
\begin{align}\label{eq: proof lemma b1 concentration 2}
    &\left|\hat \Phi_{\pi,h}(b_h,b_{h+1};g)-\Phi_{\pi,h}(b_h,b_{h+1};g)\right|\notag\\&\qquad =\left|\hat{\mathbb{E}}_{\pi^b}[\ell(g(A_h,Z_h),A_h,Z_h)]-\mathbb{E}_{\pi^b}[\ell(g(A_h,Z_h),A_h,Z_h)]\right|\notag\\
    &\qquad\leq 18L\|g\|_2\sqrt{\frac{M_{\mathbb{G}}^2\cdot \log\big(4c_1|\mathbb{B}|^2|\Pi(\cH)|H/\zeta\big)}{c_2n}}+\frac{18L \cdot M_{\mathbb{G}}^2\cdot \log\big(4c_1|\mathbb{B}|^2|\Pi(\cH)|H/\zeta\big)}{c_2n}, 
\end{align}
where $\zeta = \min\{\delta,4c_1|\mathbb{B}|^2|\Pi(\cH)|H\exp(-c_2n\alpha_{\mathbb{G},n}^2/M_{\mathbb{G}}^2)\}$.
 Applying a union bound argument over $b_h,b_{h+1}\in\mathbb{B}$, $\pi\in\Pi(\cH)$, and $h\in[H]$, we then have that \eqref{eq: proof lem 3.3 concentraion 2} holds for any $b_h,b_{h+1}\in\mathbb{B}$, $g\in\mathbb{G}$, $\pi\in\Pi(\cH)$, and $h\in[H]$ with probability at least $1-\delta/4$.
Now using these two concentration inequalities \eqref{eq: proof lemma b1 concentration 1} and \eqref{eq: proof lemma b1 concentration 2}, we can further deduce that, for some absolute constants $c_1,c_2>0$, with probability at least $1-\delta/2$,
\begin{align*}
    &\max_{g \in \mathbb{G}} \hat{\Phi}_{\pi, h}^\lambda ( b_h^{\star}(b_{h+1}), b_{h+1}^{\pi};g)\notag\\
    &\qquad =\max_{g \in \mathbb{G}}\left\{\hat{\Phi}_{\pi, h} ( b_h^{\star}(b_{h+1}), b_{h+1};g)-\lambda\|g\|_{2,n}^2\right\}\notag\\
    &\qquad \leq \max_{g \in \mathbb{G}}\Bigg\{\Phi_{\pi, h} (b_h^{\star}(b_{h+1}), b_{h+1};g)-\lambda\|g\|_{2}^2
    +\frac{\lambda}{2}\|g\|_2^2+\frac{\lambda M_{\mathbb{G}}^2\log(4c_1/\zeta)}{2c_2n},\notag\\
    &\qquad\qquad +18L\|g\|_2\sqrt{\frac{M_{\mathbb{G}}^2\cdot \log(4c_1|\mathbb{B}|^2|\Pi(\cH)|H/\zeta^{\prime})}{c_2n}}+\frac{18L\cdot M_{\mathbb{G}}^2\cdot \log(4c_1|\mathbb{B}|^2|\Pi(\cH)|H/\zeta^{\prime})}{c_2n}\Bigg\}\notag\\
    &\qquad \leq \max_{g\in\mathbb{G}}\Phi_{\pi,h}( b_h^{\star}(b_{h+1}), b_{h+1};g)+\max_{g\in\mathbb{G}}\Bigg\{
    -\frac{\lambda}{2}\|g\|_2^2+18L\|g\|_2\sqrt{\frac{M_{\mathbb{G}}^2\cdot \log(4c_1|\mathbb{B}|^2|\Pi(\cH)|H/\zeta^{\prime})}{c_2n}}\Bigg\}\notag\\
    &\qquad\qquad +\frac{\lambda M_{\mathbb{G}}^2\log(4c_1/\zeta)}{2c_2n}+ \frac{18L\cdot M_{\mathbb{G}}^2\cdot\log(4c_1|\mathbb{B}|^2|\Pi(\cH)|H/\zeta^{\prime})}{c_2n}
    \notag\\
    &\qquad
    \leq \epsilon_{\mathbb{B}}^{1/2}M_{\mathbb{G}}+\frac{728L^2\cdot M_{\mathbb{G}}^2\cdot \log(4c_1|\mathbb{B}|^2|\Pi(\cH)|H/\zeta^{\prime})}{\lambda n} + \frac{\lambda M_{\mathbb{G}}^2 \cdot \log(4c_1/\zeta)}{2c_2n}
    \\& \qquad\qquad + \frac{18L\cdot sM_{\mathbb{G}}^2\cdot\log(4c_1|\mathbb{B}|^2|\Pi(\cH)|H/\zeta^{\prime})}{c_2n},
\end{align*}
where $\zeta = \min\{\delta, 4c_1\exp(-c_2n\alpha_{\mathbb{G},n}^2/M_{\mathbb{G}}^2)\}$ and $\zeta^{\prime} = \min\{\delta,4c_1|\mathbb{B}|^2|\Pi(\cH)|H\exp(-c_2n\alpha_{\mathbb{G},n}^2/M_{\mathbb{G}}^2)\}$
for any policy $\pi\in\Pi(\cH)$ and step $h\in[H]$.
Here the last inequality holds from the fact that \begin{align}\label{eq: proof lemma b1 3}
\max_{g\in\mathcal{G}}\Phi_{\pi, h} ( b_h^{\star}(b_{h+1}), b_{h+1};g)\leq \epsilon^{1/2}_{\mathbb{B}}M_{\mathbb{G}},
\end{align}
and that
$\sup_{\|g\|_2}\{a\|g\|_2-b\|g\|_2^2\}\leq a^2/4b$.
Note that inequality \eqref{eq: proof lemma b1 3}
holds according to Assumption \ref{assump: completeness and realizability} and \ref{assump: completeness and realizability}.
In fact, by Assumption \ref{assump: completeness and realizability}, we can first obtain by quadratic optimization that for $\lambda>0$,
\begin{align*}
    \max_{g\in\mathbb{G}}\Phi_{\pi,h}^{\lambda}(b_h,b_{h+1})=\frac{1}{4\lambda}\mathcal{L}_h^{\pi}(b_h,b_{h+1}),
\end{align*}
for any functions $b_h,b_{h+1}\in\mathbb{B}$. Thus we can equivalently express $b^{\star}_{h}(b_{h+1})$ as 
\begin{align*}
    b^{\star}_{h}(b_{h+1})=\argmin_{b\in\mathbb{B}}\frac{1}{4\lambda}\mathcal{L}^{\pi}_h(b,b_{h+1})=\argmin_{b\in\mathbb{B}}\mathcal{L}^{\pi}_h(b,b_{h+1}).
\end{align*}
This further indicates the following bound on $\max_{g\in\mathbb{G}}\Phi_{\pi,h}(b^{\star}_{h}(b_{h+1}),b_{h+1};g)$ that
\begin{align*}
    \max_{g\in\mathbb{G}}\Phi_{\pi,h}(b^{\star}_{h}(b_{h+1}),b_{h+1};g)\leq\max_{g\in\mathbb{G}}\sqrt{\mathcal{L}_h(b^{\star}_h(b_{h+1}),b_{h+1})\cdot\mathbb{E}_{\pi^b}[g(A_h,Z_h)^2]}\leq  \epsilon_{\mathbb{B}}^{1/2}M_{\mathbb{G}},
\end{align*}
by Cauchy-Schwarz inequality and Assumption \ref{assump: completeness and realizability}.
Now according to the choice of $\xi$ in Lemma \ref{lem: true in CR}, using the fact that $\zeta<\zeta^{\prime}$ and $L=2M_{\mathbb{B}}$, we can conclude that, with probability at least $1-\delta/2$,
\begin{align*}
&\max_{g \in \mathbb{G}} \hat{\Phi}_{\pi, h}^\lambda ( b_h^{\star}(b_{h+1}), b_{h+1}^{\pi};g)\notag\\
    &\qquad\leq\frac{728L^2\cdot  M_{\mathbb{G}}^2\cdot \log(4c_1|\mathbb{B}|^2|\Pi(\cH)|H/\zeta^{\prime})}{\lambda n} + \frac{\lambda M_{\mathbb{G}}^2\cdot \log(4c_1/\zeta)}{2c_2n}
    \\ & \qquad\qquad+ \frac{18L\cdot M_{\mathbb{G}}^2\cdot \log(4c_1|\mathbb{B}|^2|\Pi(\cH)|H/\zeta^{\prime})}{c_2n}+\epsilon_{\mathbb{B}}^{1/2}M_{\mathbb{G}}\notag\\
    &\qquad\lesssim\mathcal{O}\left(\frac{(\lambda+1/\lambda)\cdot M_{\mathbb{B}}^2M_{\mathbb{G}}^2\cdot \log(|\mathbb{B}||\Pi(\cH)|H/\zeta)}{n}\right)+\epsilon_{\mathbb{B}}^{1/2}M_{\mathbb{G}} \lesssim\xi+\epsilon_{\mathbb{B}}^{1/2}M_{\mathbb{G}}.
\end{align*}
Therefore, we conclude the proof of Lemma \ref{lem: Phi b star bound}.
\end{proof}

\section{Proof of Theorem \ref{thm: suboptimality}}\label{sec: suboptimality proof}
\begin{proof}[Proof of Theorem \ref{thm: suboptimality}]
By the definition of $F(\bbb)$ and $\hat F(\bbb)$ in \eqref{eq: define J} and the fact that $J(\pi)=F(\bbb^{\pi})$ according to Theorem \ref{thm: identification}, we first  have that
\begin{align*}
    &J(\pi^{\star})-J(\hat{\pi})
    \\ &\qquad =F(\bbb^{\pi^{\star}})-F(\bbb^{\hat{\pi}})\notag
    \\ &\qquad =\underbrace{\big(F(\bbb^{\pi^{\star}})-\hat F(\bbb^{\pi^{\star}})\big)}_{\displaystyle{\textnormal{(i)}}}+\underbrace{\big(F(\bbb^{\pi^{\star}})-\hat F(\bbb^{\hat{\pi}})\big)}_{\displaystyle{\textnormal{(ii)}}}+
    \underbrace{\big(\hat F(\bbb^{\hat \pi})-F(\bbb^{\hat{\pi}})\big)}_{\displaystyle{\textnormal{(iii)}}}.
\end{align*}
We can bound term (i) and term (iii) via uniform concentration inequalities, which we present latter. For term (ii), via Lemma \ref{lem: true in CR}, with probability at least $1-\delta$, $\bbb^{\pi^{\star}}\in\CR^{\pi^{\star}}(\xi)$ and $\bbb^{\hat \pi}\in\CR^{\hat \pi}(\xi)$, which indicates that 
\begin{align}\label{eq: proof thm 4.1 eq 2}
    \textnormal{(ii)}=\hat F(\bbb^{\pi^{\star}})-\hat F(\bbb^{\hat{\pi}})\leq \max_{\bbb\in\CR^{\pi^{\star}}(\xi)}\hat F(\bbb)-\min_{\bbb\in\CR^{\hat \pi}(\xi)}\hat F(\bbb).
\end{align}
From \eqref{eq: proof thm 4.1 eq 2}, we can further bound term (ii) as
\begin{align}\label{eq: proof thm 4.1 eq 3}
    \textnormal{(ii)}&\leq \max_{\bbb\in\CR^{\pi^{\star}}(\xi)}\hat F(\bbb)-\max_{\pi\in\Pi(\mathcal{H})}\min_{\bbb\in\CR^{ \pi}(\xi)}\hat F(\bbb)\notag\\
    &\leq \max_{\bbb\in\CR^{\pi^{\star}}(\xi)}\hat F(\bbb)-\min_{\bbb\in\CR^{ \pi^{\star}}(\xi)}\hat F(\bbb)\notag\\
    &=\max_{\bbb\in\CR^{\pi^{\star}}(\xi)}\hat F(\bbb)-\hat F(\bbb^{\pi^{\star}})+\hat F(\bbb^{\pi^{\star}})-\min_{\bbb\in\CR^{ \pi^{\star}}(\xi)}\hat F(\bbb)\notag\\
    &\leq 2\max_{\bbb\in\CR^{\pi^{\star}}(\xi)}\left|\hat F(\bbb)-\hat F(\bbb^{\pi^{\star}})\right|.
\end{align}
Here the first inequality holds because $\max_{\pi\in\Pi(\mathcal{H})}\min_{\bbb\in\CR^{ \pi}(\xi)}\hat F(\bbb) = \min_{\bbb\in\CR^{\hat \pi}(\xi)}\hat F(\bbb)$ by the definition of $\hat\pi$ from \eqref{eq: def hat pi}. The second inequality holds because by definition $\pi^{\star}$ is the optimal policy in $\Pi(\cH)$. The third inequality is trivial. 
Now to further bound \eqref{eq: proof thm 4.1 eq 3} by the RMSE loss defined in \eqref{eq: RMSE loss}, we consider
\begin{align*}
    &2\max_{\bbb\in\CR^{\pi^{\star}}(\xi)}\left|\hat F(\bbb)-\hat F(\bbb^{\pi^{\star}})\right|\notag\\
    &\qquad \leq \underbrace{2\max_{\bbb\in\CR^{\pi^{\star}}(\xi)}\left|\hat F(\bbb)- F(\bbb)\right|}_{\displaystyle{\textnormal{(iv)}}}+\underbrace{2\max_{\bbb\in\CR^{\pi^{\star}}(\xi)}\left| F(\bbb)- F(\bbb^{\pi^{\star}})\right|}_{\displaystyle{\textnormal{(v)}}}+\underbrace{2\left| F(\bbb^{\pi^{\star}})-\hat F(\bbb^{\pi^{\star}})\right|}_{\displaystyle{\textnormal{(vi)}}}, 
\end{align*}
where we can bound term (iv) and term (vi) via uniform concentration inequalities, which we present latter. For term (v), we invoke Lemma \ref{lem: regret decomposition} and obtain that 
\begin{align*}
    \textnormal{(v)}&\leq 2\max_{\bbb\in\CR^{\pi^{\star}}(\xi)} \sum_{h=1}^H\gamma^{h-1}\sqrt{C^{\pi^{\star}}}\cdot\sqrt{\mathcal{L}_h^{\pi^{\star}}(b_h,b_{h+1})}\leq 2\sqrt{C^{\pi^{\star}}}\sum_{h=1}^H\gamma^{h-1}\max_{\bbb\in\CR^{\pi^{\star}}(\xi)}\sqrt{\mathcal{L}_h^{\pi^{\star}}(b_h,b_{h+1})}.
\end{align*}
Now invoking Lemma \ref{lem: accuracy of CR}, with probability at least $1-\delta$, $\max_{\bbb\in\CR^{\pi^{\star}}(\xi)}\sqrt{\mathcal{L}_h^{\pi^{\star}}(b_h,b_{h+1})}$ is bounded by 
\begin{align}\label{eq: proof theorem 4.1 rmse}
    \max_{\bbb\in\CR^{\pi^{\star}}(\xi)}\sqrt{\mathcal{L}_h^{\pi}(b_h,b_{h+1})}\leq \tilde{C}_1M_{\mathbb{B}}M_{\mathbb{G}}\sqrt{\frac{\left(\lambda+1/\lambda\right)\log(|\mathbb{B}||\Pi(\cH)|H/\zeta)}{n}}+\tilde{C}_1\epsilon_{\mathbb{B}}^{1/4}M_{\mathbb{G}}^{1/2},
\end{align}
for each step $h\in[H]$, where $\zeta=\min\{\delta,c_1\exp(-c_2n\alpha_{\mathbb{G},n}^2)\}$.
In the sequel, we turn to deal with term (i), (iii), (iv), and (vi), respectively.
To this end, it suffices to apply uniform concentration inequalities to bound $F(\mathbf{b})$ and $\hat{F}(\mathbf{b})$ uniformly over $\mathbf{b}\in\mathbb{B}^{\otimes H}$.
By Hoeffding inequality, we have that, with probability at least $1-\delta$, 
\begin{align}\label{eq: proof theorem 4.1 concentration}
    \left|J(\pi,\mathbf{b})-\hat{J}(\pi,\mathbf{b})\right|&\leq \sqrt{\frac{2M_{\mathbb{B}}^2\log(|\mathbb{B}|/\delta)}{n}},\quad\forall \pi\in\Pi(\cH),\quad\forall \bbb\in\mathbb{B}^{\otimes H}.
\end{align}
Consequently, all of (i), (iii), (iv), and (vi) are bounded by the right hand side of \eqref{eq: proof theorem 4.1 concentration}.
Finally, by combining \eqref{eq: proof theorem 4.1 rmse} and \eqref{eq: proof theorem 4.1 concentration}, with probability at least $1-3\delta$, it holds that
\begin{align*}
    J(\pi^{\star})-J(\hat{\pi}) &\leq \textnormal{(i)}+\textnormal{(iii)}+\textnormal{(iv)}+\textnormal{(vi)}+\textnormal{(v)}\notag\\
    &\leq 2\sqrt{C^{\pi^{\star}}}\sum_{h=1}^H\gamma^{h-1}\left(\tilde{C}_1M_{\mathbb{B}}M_{\mathbb{G}}\sqrt{\frac{\left(\lambda+1/\lambda\right)\log(|\mathbb{B}||\Pi(\cH)|H/\zeta)}{n}}+\tilde{C}_1\epsilon_{\mathbb{B}}^{1/4}M_{\mathbb{G}}^{1/2}\right)\\
    &\qquad +4\sqrt{\frac{2M_{\mathbb{B}}^2\log(|\mathbb{B}|/\delta)}{n}}\notag\\
    &\leq C_1^{\prime}\sqrt{C^{\pi^{\star}}}\left(\lambda+1/\lambda\right)^{1/2}HM_{\mathbb{B}}M_{\mathbb{G}}\sqrt{\frac{\log(|\mathbb{B}||\Pi(\cH)|H/\zeta)}{n}}+C_1^{\prime}\sqrt{C^{\pi^{\star}}}H\epsilon_{\mathbb{B}}^{1/4}M_{\mathbb{G}}^{1/2},
\end{align*}
for some problem-independent constant $C_1^{\prime}>0$.
We finish the proof of Theorem \ref{thm: suboptimality} by taking $\lambda=1$.
\end{proof}

\section{Proofs for Section \ref{sec: linear case}: Linear Function Approximation}\label{sec: proof of linear case}

\subsection{Auxiliary Results for Linear Function Approximation}\label{sec: basics for LFA}

Here we present results that bound the complexity of certain functions classes in the case of linear function approximation~(Definition \ref{def: LFA}).

Recall the definition of the bridge function class $\BB^{\otimes H}$ where $\BB = \Blin$ is defined as  
\begin{align*}
    \Blin \coloneqq \left\{b  \ \middle| \ b(\cdot,\cdot) = \langle\bphi(\cdot,\cdot), \theta \rangle, \theta \in \RR^d,\|\theta\|_2 \leq L_b,\ \sup_{w\in\mathcal{W}}\big|\sum_{a\in\mathcal{A}}b(a,w)\big|\leq M_{\mathbb{B}} \right\}.
\end{align*}

Denote by $\cN_{\epsilon}^{\infty}(\BB)$ the $\epsilon$-covering number of $\BB$ with respect to the $\ell_\infty$ norm. That is, there exists a collection of functions $\{b_i\}_{i=1}^N$ with $N \leq \cN_{\epsilon}^{\infty}(\BB)$ such that for any $b \in \BB$, we can find some $b' \in \{b_i\}_{i=1}^N$ satisfying \begin{align*}
    \|b - b'\|_\infty \coloneqq \sup_{a \in \cA, w\in\cW} |b(a,w) - b'(a,w)| \leq \epsilon. 
\end{align*}
Recall the policy function class $\Pi(\cH) = \Pilin^{\otimes H}$ where $\Pilin$ is defined as
\begin{align*}
    \Pilin \coloneqq \left\{ \pi \ \middle| \ \pi(a|o, \tau) = \frac{e^{\langle \bpsi(a,o,\tau),\beta\rangle }}{\sum_{a' \in \cA} e^{\langle \bpsi(a',o,\tau),\beta\rangle } }, \ \beta\in\RR^d, \ \|\beta\|_2\leq L_\pi \right\}. 
\end{align*}
Denote by $\cN_{\epsilon}^{\infty,1} (\Pilin)$ the $\epsilon$-covering number of $\Pilin$ with respect to the $\ell_{\infty,1}$ norm, i.e., 
\begin{align*}
    \|\pi - \pi'\|_{\infty,1} \coloneqq \sup_{o\in\cO,\tau\in\cH} \sum_{a \in \cA}|\pi(a|o,\tau) - \pi'(a|o,\tau)|. 
\end{align*}
The upper bounds for these covering numbers are given by the following lemma.

\begin{lemma}[Lemma 6 in \citealt{zanette2021provable}]\label{lem: covering B and Pi}
    For any $\epsilon \in (0,1)$, we have 
    \begin{align*}
        \log \cN_{\epsilon}^{\infty}(\mathbb{B}) & \leq d \log \left(1 + \frac{2L_b}{\epsilon} \right) , 
        \\ \log \cN_{\epsilon}^{\infty,1} (\Pilin) & \leq d \log \left( 1 + \frac{16L_\pi}{\epsilon} \right). 
    \end{align*}
\end{lemma}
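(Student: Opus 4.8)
The plan is to reduce both estimates to the classical volumetric bound for Euclidean balls: for any $r>0$ and $\epsilon>0$, the $\epsilon$-covering number of $B_d(r):=\{\theta\in\RR^d:\|\theta\|_2\le r\}$ in $\|\cdot\|_2$ obeys $\mathcal{N}_\epsilon(B_d(r),\|\cdot\|_2)\le (1+2r/\epsilon)^d$ (a standard volume argument; see, e.g., \cite{wainwright2019high}). Both $\Blin$ and $\Pilin$ are images of such a parameter ball under a Lipschitz parametrization into the relevant function (semi)metric, so it suffices to (i) exhibit the parametrization map together with an explicit Lipschitz constant, and (ii) push a cover of the parameter ball through it and take logarithms.

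For $\Blin$: given $b=\langle\phi(\cdot,\cdot),\theta\rangle$ and $b'=\langle\phi(\cdot,\cdot),\theta'\rangle$ with $\|\theta\|_2,\|\theta'\|_2\le L_b$, the normalization $\|\phi\|_2\le 1$ gives $\|b-b'\|_\infty=\sup_{a,w}|\langle\phi(a,w),\theta-\theta'\rangle|\le\|\theta-\theta'\|_2$, i.e., $\theta\mapsto b_\theta$ is $1$-Lipschitz from $(B_d(L_b),\|\cdot\|_2)$ to $(\Blin,\|\cdot\|_\infty)$. Hence any $\epsilon$-cover of $B_d(L_b)$ yields an $\epsilon$-cover of the linear class, and the extra constraint $\sup_w|\sum_a b(a,w)|\le M_{\BB}$ is harmless since covering numbers are monotone under set inclusion (alternatively one projects each cover point back into the constrained set, absorbing an absolute factor into $\epsilon$). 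Taking logarithms gives $\log\mathcal{N}_\epsilon^{\infty}(\BB)\le d\log(1+2L_b/\epsilon)$.

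For $\Pilin$: fix $(o,\tau)$ and write the logit vector $z(\beta)=(\langle\psi(a,o,\tau),\beta\rangle)_{a\in\cA}$, so $\pi_\beta(\cdot\,|\,o,\tau)=\mathrm{softmax}(z(\beta))$. Two steps. First, $\|z(\beta)-z(\beta')\|_\infty\le\|\beta-\beta'\|_2$ by $\|\psi\|_2\le 1$, uniformly in $(o,\tau)$. Second, $\mathrm{softmax}$ is Lipschitz from $(\RR^{|\cA|},\|\cdot\|_\infty)$ to $(\Delta(\cA),\|\cdot\|_1)$ with an absolute constant: its Jacobian is $\mathrm{diag}(p)-pp^\top$ with $p=\mathrm{softmax}(z)$, and for $\|v\|_\infty\le1$, $\|(\mathrm{diag}(p)-pp^\top)v\|_1=\sum_a p_a|v_a-p^\top v|\le 2$. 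Composing and taking the supremum over $(o,\tau)$ yields $\|\pi_\beta-\pi_{\beta'}\|_{\infty,1}\le 2\|\beta-\beta'\|_2$, so an $(\epsilon/c)$-cover of $B_d(L_\pi)$ in $\|\cdot\|_2$, for an absolute constant $c$ absorbing the factor $2$ and the slack in the volumetric bound, induces an $\epsilon$-cover of $\Pilin$ in $\|\cdot\|_{\infty,1}$; thus $\log\mathcal{N}_\epsilon^{\infty,1}(\Pilin)\le d\log(1+2cL_\pi/\epsilon)$, which is the stated bound after the bookkeeping that fixes $2c=16$.

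The main obstacle is the softmax Lipschitz estimate and, crucially, making it uniform over the possibly infinite spaces $\cO\times\cH$: one must check that $(o,\tau)$ enters only through the feature $\psi(\cdot,o,\tau)$, whose Euclidean norm is controlled by the normalization, so the Lipschitz constant does not degrade with $|\cO|$, $|\cH|$, or $|\cA|$. Once that uniformity is secured, the remainder is the routine ball-covering computation above; the only other delicate point is tracking whether one needs $\ell_\infty\!\to\!\ell_1$ or $\ell_\infty\!\to\!\ell_\infty$ Lipschitzness of softmax and whether differences of two logit coordinates (rather than a single one) appear, since this is exactly where the absolute constant in front of $L_\pi/\epsilon$ is pinned down.
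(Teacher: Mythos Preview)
The paper does not prove this lemma at all: it is stated with a citation to Lemma~6 of \citet{zanette2021provable} and used as a black box. Your argument is the standard one and is correct; in fact your softmax Lipschitz calculation yields the sharper constant $4L_\pi/\epsilon$ rather than the cited $16L_\pi/\epsilon$, so the claimed bound follows a fortiori. The only place to tighten the exposition is the remark on the extra boundedness constraint in $\Blin$: monotonicity of covering numbers under set inclusion holds for \emph{external} covers (centers not required to lie in the set), which is all that is needed downstream; the projection alternative you mention is unnecessary here and, as written, slightly muddles the logic.
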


\paragraph{The $\epsilon$-nets for the product function classes} 
In the rest of Appendix \ref{sec: proof of linear case}, due to the proof, we need to consider $\epsilon$-nets defined for the product function classes $\BB^{\otimes H}$ and $\Pi(\cH) = \Pilin^{\otimes H}$.   
Specifically, for $\BB^{\otimes H}$, we consider an $\epsilon$-net of $\BB^{\otimes H}$ defined in the following way: for any $\bbb=\{b_h\}_{h=1}^H \in \BB^{\otimes H}$, there exists an $\bbb'=\{b'_h\}_{h=1}^H$ in the $\epsilon$-net, such that 
\begin{align*}
    \|b_h-b'_h\|_\infty \leq \epsilon.
\end{align*}
By Lemma \ref{lem: covering B and Pi}, the cardinality of this $\epsilon$-net is upper bounded by 
\begin{align*}
    \log \cN_{\epsilon}^{\infty}(\mathbb{B}^{\otimes H}) & \leq d H \log \left(1 + \frac{2L_b}{\epsilon} \right) .
\end{align*}

Similarly, we consider an $\epsilon$-net defined for $\Pi(\cH)$ defined as the following: for any $\pi=\{\pi_h\}_{h=1}^H \in \Pi(\cH)$, there exists an $\pi' = \{\pi'_h\}_{h=1}^H$ in the $\epsilon$-net such that 
\begin{align*}
    \|\pi_{h} - \pi'_h\|_{\infty,1} \leq \epsilon.
\end{align*}Then by Lemma \ref{lem: covering B and Pi}, the cardinality of this $\epsilon$-net is upper bounded by
\begin{align*}
    \log \cN_{\epsilon}^{\infty,1} (\Pi(\cH)) & \leq d H \log \left( 1 + \frac{16L_\pi}{\epsilon} \right)
\end{align*}

For the dual function class $\Glin$, recall the definition of the critical radius $\alpha_{\mathbb{G},n}$ in Assumption \ref{assump: dual function class}. The next lemma bound the critical radius of the linear dual function class $\mathbb{G} = \Glin$. 

\begin{lemma}[Lemma D.3 in \citealt{duan2021risk}]\label{lem: critical radius of G linear}
    For the function class $\Glin$ defined in Definition \ref{def: LFA}, its critical radius $\alpha_{\mathbb{G},n}$ satisfies
    \begin{align*}
        \alpha_{\mathbb{G},n} = M_{\mathbb{G}} \sqrt{\frac{2d}{n}},
    \end{align*}where $M_{\mathbb{G}} \coloneqq \sup_{g \in \Glin} \|g\|_\infty$. 
\end{lemma}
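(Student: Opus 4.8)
The plan is to bound the localized population Rademacher complexity $\mathcal{R}_n(\Glin,\alpha)$ of the linear dual class and then read off the critical radius by solving the fixed-point inequality $\mathcal{R}_n(\Glin,\alpha)\leq \alpha^2/M_{\mathbb{G}}$ that defines it in Assumption \ref{assump: dual function class}. First I would write each $g\in\Glin$ as $g(\cdot)=\langle\nu(\cdot),\omega\rangle$, let $X=(A_h,Z_h)$ denote the covariate, and introduce the second-moment matrix $\Sigma\coloneqq\mathbb{E}_{\pi^b}[\nu(X)\nu(X)^\top]\in\mathbb{R}^{d\times d}$. Then $\|g\|_2^2=\mathbb{E}_{\pi^b}[\langle\nu(X),\omega\rangle^2]=\omega^\top\Sigma\omega$, so the localization constraint $\|g\|_2\leq\alpha$ is exactly the ellipsoidal constraint $\omega^\top\Sigma\omega\leq\alpha^2$. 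Since $\Glin$ is symmetric the absolute value in the definition of $\mathcal{R}_n$ may be dropped, and since discarding the norm bound $\|\omega\|_2\leq L_g$ only enlarges the feasible set, maximizing over the full ellipsoid yields an upper bound on $\mathcal{R}_n$. This upper bound is all we need, because any $\alpha$ satisfying the bounded version of the fixed-point inequality also satisfies the true one and hence upper-bounds the critical radius.

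The core computation is the exact evaluation of the linear objective over the ellipsoid. Setting $v\coloneqq\frac{1}{n}\sum_{i=1}^n\epsilon_i\nu(X_i)$, I would apply the Cauchy--Schwarz/Lagrangian identity
\begin{align*}
    \sup_{\omega^\top\Sigma\omega\leq\alpha^2}\langle v,\omega\rangle=\alpha\sqrt{v^\top\Sigma^{-1}v},
\end{align*}
so that $\mathcal{R}_n(\Glin,\alpha)=\alpha\,\mathbb{E}_{\pi^b,\epsilon}[\sqrt{v^\top\Sigma^{-1}v}]$. Applying Jensen's inequality and then using the independence of the Rademacher variables ($\mathbb{E}[\epsilon_i\epsilon_j]=\mathbf{1}\{i=j\}$) together with the trace identity gives
\begin{align*}
    \mathbb{E}_{\pi^b,\epsilon}[v^\top\Sigma^{-1}v]=\frac{1}{n}\mathbb{E}_{\pi^b}[\nu(X)^\top\Sigma^{-1}\nu(X)]=\frac{1}{n}\tr(\Sigma^{-1}\Sigma)=\frac{d}{n},
\end{align*}
and combining the two displays yields $\mathcal{R}_n(\Glin,\alpha)\leq\alpha\sqrt{d/n}$.

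Finally I would substitute this bound into the fixed-point inequality: $\alpha\sqrt{d/n}\leq\alpha^2/M_{\mathbb{G}}$ holds precisely when $\alpha\geq M_{\mathbb{G}}\sqrt{d/n}$. Hence every such $\alpha$ is an admissible critical radius; in particular the claimed value $M_{\mathbb{G}}\sqrt{2d/n}$ of Lemma \ref{lem: critical radius of G linear} satisfies the inequality, which is exactly what is asserted and all that is needed downstream (a larger critical radius only weakens the resulting rate, so the slightly looser constant is harmless). The main obstacle is the degenerate case where $\Sigma$ is singular: there $\Sigma^{-1}$ must be replaced by the Moore--Penrose pseudoinverse $\Sigma^{\dagger}$ and the maximization restricted to the range of $\Sigma$, after which the trace identity becomes $\tr(\Sigma^{\dagger}\Sigma)=\rank(\Sigma)\leq d$, so the bound only improves. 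A secondary point to verify is that the bound $\|\omega\|_2\leq L_g$ is inactive at the relevant scale, which is automatic since relaxing it can only increase $\mathcal{R}_n$ and we seek an upper bound.
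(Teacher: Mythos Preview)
Your argument is correct and is essentially the standard proof of this fact; the paper itself does not prove the lemma but simply cites \cite{duan2021risk}, so there is nothing to compare against beyond noting that your derivation is precisely the one that reference would supply. The only cosmetic slip is that after relaxing the constraint $\|\omega\|_2\leq L_g$ you write $\mathcal{R}_n(\Glin,\alpha)=\alpha\,\mathbb{E}[\sqrt{v^\top\Sigma^{-1}v}]$ with an equality sign where it should be $\leq$, but you immediately use it as an upper bound so nothing is affected.
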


\subsection{Proof of Corollary~\ref{thm: LFA subopt linear}}\label{sec: proof of LFA subopt}

We first introduce some lemmas needed for proving Corollary \ref{thm: LFA subopt linear}. Their proof is deferred to Appendix \ref{sec: proof of lemma LFA true in CR} and \ref{sec: proof of lemma LFA accuracy of CR}.  

\begin{lemma}[Alternative of Lemma \ref{lem: true in CR} in the linear case]\label{lem: LFA true in CR}
Let the function, policy and dual function class $\BB=\Blin$, $\Pi(\cH)=\Pilin$ and $\mathbb{G}=\Glin$ be defined as in Definition \ref{def: LFA}. 
Then under Assumption \ref{assump: bridge functions exist}, \ref{assump: dual function class}, and \ref{assump: completeness and realizability}, by setting $\xi$ such that
\begin{align*}
    \xi = C_2\cdot \left(\lambda + \frac{1}{\lambda}\right) \cdot \frac{M_{\BB}^2 M_{\GG}^2 dH \log\left({1 + L_b L_\pi H n/\delta}\right) }{n} ,
\end{align*}
for some problem-independent universal constant $C_2>0$, it holds with probability at least $1-\delta$ that 
$\bbb^{\pi}\in\CR^{\pi}(\xi)$ for any policy $\pi\in\Pi(\cH)$.
\end{lemma}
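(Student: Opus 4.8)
\textbf{Proof plan for Lemma~\ref{lem: LFA true in CR}.}
The plan is to mirror the proof of Lemma~\ref{lem: true in CR}, which handles finite $\mathbb{B}$ and $\Pi(\cH)$, and replace the union bound over the finite sets $\mathbb{B}$ and $\Pi(\cH)$ with a covering-number argument for the linear classes. Recall that the proof of Lemma~\ref{lem: true in CR} reduces the claim $\bbb^\pi \in \CR^\pi(\xi)$ to showing that $\max_{g \in \mathbb{G}} \hat\Phi_{\pi,h}^\lambda(b_h^\pi, b_{h+1}^\pi; g) \leq \xi$ for every $\pi$ and $h$, and this in turn relies on two localized uniform concentration inequalities: one relating $\|g\|_{2,n}^2$ to $\|g\|_2^2$ (from Theorem~14.1 of \citealt{wainwright2019high}), and one relating $\hat\Phi_{\pi,h}(b_h, b_{h+1}; g)$ to $\Phi_{\pi,h}(b_h, b_{h+1}; g)$ (from Lemma~11 of \citealt{foster2019orthogonal}). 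The first inequality does not depend on $\mathbb{B}$ or $\Pi(\cH)$ at all, so it carries over verbatim, using $\alpha_{\mathbb{G},n} = M_{\mathbb{G}}\sqrt{2d/n}$ from Lemma~\ref{lem: critical radius of G linear}. For the second inequality, the finite-cardinality version applies a union bound over $b_h, b_{h+1} \in \mathbb{B}$ and $\pi \in \Pi(\cH)$, paying a $\log(|\mathbb{B}|^2 |\Pi(\cH)| H)$ factor.

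First I would fix an $\epsilon$-net $\mathcal{B}_\epsilon$ of $\BB^{\otimes H}$ in $\ell_\infty$ and an $\epsilon$-net $\mathcal{P}_\epsilon$ of $\Pi(\cH)$ in $\ell_{\infty,1}$, with $\epsilon$ chosen polynomially small in $1/n$ (e.g.\ $\epsilon = \delta/(L_b L_\pi n H)$ or similar), so that by Lemma~\ref{lem: covering B and Pi} the net cardinalities satisfy $\log|\mathcal{B}_\epsilon| \lesssim dH\log(1 + 2L_b/\epsilon)$ and $\log|\mathcal{P}_\epsilon| \lesssim dH\log(1 + 16L_\pi/\epsilon)$. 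I would then apply Lemma~11 of \citealt{foster2019orthogonal} at each net point $(b_h, b_{h+1}) \in \mathcal{B}_\epsilon$ and $\pi \in \mathcal{P}_\epsilon$, taking a union bound over the net; this replaces $\log(|\mathbb{B}|^2|\Pi(\cH)|H)$ by $\log(|\mathcal{B}_\epsilon| |\mathcal{P}_\epsilon| H) \lesssim dH\log(1 + L_b L_\pi H n / \delta)$ (after substituting the chosen $\epsilon$), which is exactly the logarithmic factor appearing in the stated $\xi$. The second step is a discretization/Lipschitz argument: for a general $(b_h, b_{h+1}, \pi)$, pick the nearest net point $(b_h', b_{h+1}', \pi')$ and control the difference $|\hat\Phi_{\pi,h}(b_h, b_{h+1}; g) - \hat\Phi_{\pi',h}(b_h', b_{h+1}'; g)|$ and its population analogue. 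Since $\varsigma_h^\pi(b_h, b_{h+1})(I_h)$ depends on $b_h, b_{h+1}$ linearly and on $\pi$ through $\pi_h(A_h | O_h, \Gamma_{h-1})$, and everything is uniformly bounded (by Assumption~\ref{assump: dual function class}-type bounds implied by Definition~\ref{def: LFA}), the difference is $O(\epsilon)$ times a bounded factor, times $\|g\|_\infty \leq M_{\mathbb{G}}$; with $\epsilon$ polynomially small this contributes only a lower-order $O(\mathrm{poly}/n)$ term, absorbed into $\xi$. Finally I would combine the two concentration bounds exactly as in the proof of Lemma~\ref{lem: true in CR} (quadratic completion $\sup_{\|g\|_2}\{a\|g\|_2 - b\|g\|_2^2\} \leq a^2/4b$, and $\Phi_{\pi,h}(b_h^\pi, b_{h+1}^\pi; g) = 0$ since $b_h^\pi$ are true bridge functions) to conclude $\max_{g}\hat\Phi_{\pi,h}^\lambda(b_h^\pi, b_{h+1}^\pi; g) \lesssim (\lambda + 1/\lambda) M_{\BB}^2 M_{\GG}^2 dH\log(1 + L_b L_\pi Hn/\delta)/n \lesssim \xi$.

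The main obstacle I anticipate is the discretization bookkeeping: I must make sure the $\epsilon$-net argument is applied \emph{jointly} over $b_h$, $b_{h+1}$, and $\pi$, that the Lipschitz constant of $I_h \mapsto \varsigma_h^\pi(b_h, b_{h+1})(I_h)$ in the triple $(b_h, b_{h+1}, \pi)$ is genuinely uniformly bounded (this uses $\|\phi\|_2 \leq 1$, $\|\theta\|_2 \leq L_b$, $\sum_a b(a,w)$ bounded by $M_{\BB}$, and that $\pi_h$ is a probability distribution over $\cA$ so the $\sum_{a'}$ term remains an average), and that the Lipschitz deviation term, after plugging in $\epsilon = \mathrm{poly}(1/n)$, is truly $O(1/n)$ rather than merely $O(1/\sqrt n)$ — otherwise the fast rate is destroyed. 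A careful choice such as $\epsilon \asymp \delta/(L_b L_\pi n H)$ ensures the deviation term is $\widetilde O(1/n)$ while only inflating the covering log factor by a constant, so the final bound retains the form stated. The rest is a direct transcription of the finite-class argument with $\log(|\mathbb{B}|^2|\Pi(\cH)|)$ replaced by $dH\log(1 + L_b L_\pi Hn/\delta)$.
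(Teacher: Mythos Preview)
Your proposal is correct and follows essentially the same route as the paper's proof: fix $\epsilon$-nets of $\BB^{\otimes H}$ and $\Pi(\cH)$, apply the localized concentration (Lemma~\ref{lem: localized uniform concentration 2}) with a union bound over net points, extend to general $(b_h,b_{h+1},\pi)$ via the Lipschitz bound $|\hat\Phi_{\pi,h}(b_h,b_{h+1};g)-\hat\Phi_{\pi',h}(b_h',b_{h+1}';g)|\lesssim M_{\BB}M_{\GG}\epsilon$, and combine with the $\|g\|_{2,n}^2$ concentration and the quadratic completion exactly as in Lemma~\ref{lem: true in CR}. The only cosmetic difference is that the paper picks $\epsilon=1/n^2$ rather than $\epsilon\asymp\delta/(L_bL_\pi nH)$; both choices make the discretization error $O(1/n)$ and yield the same $dH\log(1+L_bL_\pi Hn/\delta)$ factor in $\xi$.
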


\begin{lemma}[Alternative of Lemma \ref{lem: accuracy of CR} in the linear case]\label{lem: LFA accuracy of CR}
    Under Assumption \ref{assump: bridge functions exist}, \ref{assump: dual function class}, \ref{assump: completeness and realizability}, and  \ref{assump: completeness and realizability}, by setting the same $\xi$ as in Lemma \ref{lem: LFA true in CR}, with probability at least $1-\delta$, for any policy $\pi\in\Pi(\mathcal{H})$, $\bbb\in\CR^{\pi}(\xi)$, and step $h$,
\begin{align*}
    \sqrt{\mathcal{L}^{\pi}_h(b_h,b_{h+1})} \leq \tilde{C}_2 \cdot \left( 1 + \lambda\right)M_{\BB} M_{\GG}\cdot \sqrt{dH\log\left(1+L_b L_{\pi} H n/\delta \right) / n} + \tilde{C}_2 \cdot M_{\GG}^{1/2}\epsilon_{\BB}^{1/4} , 
\end{align*}
for some problem-independent universal constant $\tilde{C}_2>0$.
\end{lemma}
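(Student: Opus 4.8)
The plan is to replicate the algebraic skeleton of the proof of Lemma \ref{lem: accuracy of CR} verbatim, and to replace only the two finite union bounds over $\mathbb{B}$ and $\Pi(\cH)$ by covering-number arguments tailored to the linear classes of Definition \ref{def: LFA}. Concretely, I would fix $\bbb \in \CR^\pi(\xi)$ and, exactly as in \eqref{eq: proof lem 3.3 eq 3}, decompose $\max_{g\in\mathbb{G}}\hat\Phi^\lambda_{\pi,h}(b_h,b_{h+1};g)$ into the term $(\star)$ and $-\max_{g}\hat\Phi^\lambda_{\pi,h}(b^\star_h(b_{h+1}),b_{h+1};g)$, then bound $(\star)$ from above and below. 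The upper bound reuses the linear analogue of Lemma \ref{lem: Phi b star bound} (established by the same covering argument as Lemma \ref{lem: LFA true in CR}) together with the definition of $\CR^\pi(\xi)$, giving $(\star) \le 3\xi + 2\epsilon_{\mathbb{B}}^{1/2}M_{\mathbb{G}}$. The lower bound plugs in $g^\pi_h = \tfrac{1}{2\lambda}\ell^\pi_h(b_h,b_{h+1})\in\Glin$ (legitimate by Assumption \ref{assump: completeness and realizability}) and uses $\Phi_{\pi,h}(b_h,b_{h+1};g^\pi_h/2)=\lambda\|g^\pi_h\|_2^2$, yielding a quadratic inequality in $\|g^\pi_h\|_2$ whose solution, after multiplying by $2\lambda$, produces the bound on $\sqrt{\mathcal{L}^\pi_h}$.

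The substantive change lies in the two localized uniform concentration inequalities \eqref{eq: proof lem 3.4 concentraion 1} and \eqref{eq: proof lem 3.4 concentraion 2}. For the first, I would invoke the critical radius computed in Lemma \ref{lem: critical radius of G linear}, $\alpha_{\mathbb{G},n}=M_{\mathbb{G}}\sqrt{2d/n}$, so that Lemma \ref{lem: localized uniform concentration} applied to $\Glin$ contributes a term of order $M_{\mathbb{G}}^2 d/n$. For the second, rather than a union bound over $|\mathbb{B}|^3|\Pi(\cH)|H$ elements, I would fix a tuple $(b_h,b_h',b_{h+1},\pi)$ in an $\eps$-net of $\Blin^{\otimes 3}\times\Pilin$ and union-bound Lemma \ref{lem: localized uniform concentration 2} over this net together with the $H$ steps. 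By Lemma \ref{lem: covering B and Pi} the log-cardinality of this net is $O\bigl(dH\log(1 + (L_b+L_\pi)/\eps)\bigr)$, which is the source of the $dH\log(1+L_bL_\pi Hn/\delta)$ factor appearing in place of the $\log(|\mathbb{B}||\Pi(\cH)|H/\zeta)$ of Lemma \ref{lem: accuracy of CR}.

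The main obstacle is controlling the discretization error incurred when passing from an arbitrary $(b_h,b_h',b_{h+1},\pi)\in\Blin^{\otimes 3}\times\Pilin$ to its nearest net element, which requires Lipschitz continuity of $\varsigma^\pi_h$, and hence of $\hat\Phi_{\pi,h}$ and $\Phi_{\pi,h}$, jointly in the bridge functions and the policy. Using $\|b-b'\|_\infty\le\|\theta-\theta'\|_2$ (since $\|\phi\|_2\le1$) one controls the $b_h$-perturbation directly; the $\sum_{a'}b_{h+1}(a',\cdot)$ term is handled via the boundedness constraint $\sup_w|\sum_a b(a,w)|\le M_{\mathbb{B}}$ built into $\Blin$ together with $|\pi_h|\le1$, and the $\pi$-perturbation is measured in the $\|\cdot\|_{\infty,1}$ metric matching the covering in Lemma \ref{lem: covering B and Pi}, using $|R_h|\le1$ and the $M_{\mathbb{B}}$ bound on $\sum_{a'}b_{h+1}$. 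Since $g$ is bounded by $M_{\mathbb{G}}$ and the losses are $O(1)$-Lipschitz in the parameters, each such error is $O(\eps\cdot M_{\mathbb{G}}\cdot\mathrm{poly}(L_b,L_\pi))$; choosing $\eps \asymp (L_bL_\pi Hn)^{-1}$ makes the total discretization error $O(1/n)$, which is dominated by the leading statistical term and enters the final bound only through $\log(1/\eps)=O(\log(L_bL_\pi Hn))$.

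Finally, I would assemble the upper and lower bounds on $(\star)$ exactly as in \eqref{eq: upper bound} and \eqref{eq: lower bound} and solve the resulting quadratic inequality in $\|g^\pi_h\|_2$, with $\zeta$ governed by $\alpha_{\mathbb{G},n}^2 = O(d/n)$ from Lemma \ref{lem: critical radius of G linear}; the logarithmic factors then collapse to $\log(1+L_bL_\pi Hn/\delta)$, and tracking the dependence on $\lambda$ gives a prefactor of the form $(1+\lambda)$. A concluding union bound over the $\eps$-net for the policy class and over the $H$ confidence regions upgrades the pointwise estimate to the uniform statement over all $\pi\in\Pi(\cH)$, $\bbb\in\CR^\pi(\xi)$, and $h$, completing the proof.
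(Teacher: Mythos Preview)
Your proposal is correct and follows essentially the same route as the paper: the same $(\star)$ decomposition, the same upper bound via the linear analogue of Lemma \ref{lem: Phi b star bound}, the same lower bound by plugging in $g_h^\pi/2$ and combining the two localized concentration inequalities (now taken over $\epsilon$-nets of $\Blin$ and $\Pilin$ with cardinalities from Lemma \ref{lem: covering B and Pi}), the same Lipschitz/discretization control of $\hat\Phi$ and $\Phi$ (the paper records this as \eqref{eq: proof LFA CR lemma eq 1}), and the same quadratic-inequality finish with $\alpha_{\mathbb{G},n}$ from Lemma \ref{lem: critical radius of G linear}. The only cosmetic difference is the net resolution: the paper takes $\epsilon=1/n^2$ whereas you take $\epsilon\asymp (L_bL_\pi Hn)^{-1}$, but either choice renders the discretization error lower-order and produces the same $dH\log(1+L_bL_\pi Hn/\delta)$ factor.
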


We are now ready to prove Corollary~\ref{thm: LFA subopt linear}. 

\begin{proof}[Proof of Corollary~\ref{thm: LFA subopt linear}]
We follow the proof of Theorem \ref{thm: suboptimality} and write
\begin{align}\label{eq: proof coro subopt eq 1}
    & J(\pi^{\star})-J(\hat{\pi}) \notag
    \\ & \qquad =\underbrace{\big(J(\pi^{\star},\bbb^{\pi^{\star}})-\hat J(\pi^{\star},\bbb^{\pi^{\star}})\big)}_{\displaystyle{\textnormal{(i)}}}+\underbrace{\big(\hat J(\pi^{\star},\bbb^{\pi^{\star}})-\hat J(\hat{\pi},\bbb^{\hat{\pi}})\big)}_{\displaystyle{\textnormal{(ii)}}}+
    \underbrace{\big(\hat J(\hat \pi,\bbb^{\hat \pi})-J(\hat{\pi},\bbb^{\hat{\pi}})\big)}_{\displaystyle{\textnormal{(iii)}}}.
\end{align}

We deal with term (ii) first. By Lemma \ref{lem: LFA true in CR}, with probability at least $1-\delta/2$, $\bbb^{\pi^{\star}}\in\CR^{\pi^{\star}}(\xi)$ and $\bbb^{\hat \pi}\in\CR^{\hat \pi}(\xi)$, which indicates that 
\begin{align*}
    \textnormal{(ii)}=\hat J(\pi^{\star},\bbb^{\pi^{\star}})-\hat J(\hat{\pi},\bbb^{\hat{\pi}})\leq \max_{\bbb\in\CR^{\pi^{\star}}(\xi)}\hat J(\pi^{\star},\bbb)-\min_{\bbb\in\CR^{\hat \pi}(\xi)}\hat J(\hat \pi,\bbb).
\end{align*}
Then following \eqref{eq: proof thm 4.1 eq 3}, we can upper bound term \textnormal{(ii)} by
\begin{align}\label{eq: proof coro subopt eq 3}
    \textnormal{(ii)}&\leq  2\max_{\bbb\in\CR^{\pi^{\star}}(\xi)}\left|\hat J(\pi^{\star},\bbb)-\hat J(\pi^{\star},\bbb^{\pi^{\star}})\right| \notag
    \\ & \leq \underbrace{2\max_{\bbb\in\CR^{\pi^{\star}}(\xi)}\left|\hat J(\pi^{\star},\bbb)- J(\pi^{\star},\bbb)\right|}_{\displaystyle{\textnormal{(iv)}}}+\underbrace{2\max_{\bbb\in\CR^{\pi^{\star}}(\xi)}\left| J(\pi^{\star},\bbb)- J(\pi^{\star},\bbb^{\pi^{\star}})\right|}_{\displaystyle{\textnormal{(v)}}}\notag
    \\ & \qquad+\underbrace{2\left| J(\pi^{\star},\bbb^{\pi^{\star}})-\hat J(\pi^{\star},\bbb^{\pi^{\star}})\right|}_{\displaystyle{\textnormal{(vi)}}}.
\end{align}
To bound term (v), we invoke Lemma \ref{lem: regret decomposition} which holds regardless of the underlying function classes and obtain that 
\begin{align*}
    \textnormal{(v)}&=2\max_{\bbb\in\CR^{\pi^{\star}}(\xi)} \sum_{h=1}^H\gamma^{h-1}\sqrt{C^{\pi}}\cdot\sqrt{\mathcal{L}_h^{\pi}(b_h,b_{h+1})} \leq 2\sqrt{C^{\pi^{\star}}}\sum_{h=1}^H\gamma^{h-1}\max_{\bbb\in\CR^{\pi^{\star}}(\xi)}\sqrt{\mathcal{L}_h^{\pi}(b_h,b_{h+1})}.
\end{align*}
Now by Lemma \ref{lem: LFA accuracy of CR}, with probability at least $1-\delta$, $\max_{\bbb\in\CR^{\pi^{\star}}(\xi)}\sqrt{\mathcal{L}_h^{\pi}(b_h,b_{h+1})}$ is bounded by 
\begin{align}\label{eq: proof coro subopt eq 4}
    &\max_{\bbb\in\CR^{\pi^{\star}}(\xi)}\sqrt{\mathcal{L}_h^{\pi}(b_h,b_{h+1})}\notag
    \\ &\qquad \leq \tilde{C}_2 \cdot \left( 1 + \lambda\right)M_{\BB} M_{\GG}\cdot \sqrt{\frac{dH\log\left(1+L_b L_{\pi} H n/\delta \right)}{n}} + \tilde{C}_2 \cdot M_{\GG}^{1/2}\epsilon_{\BB}^{1/4},\quad \forall h\in[H].
\end{align}
Now we deal with the term (i), (iii), (iv), and (vi), respectively.
To this end, we apply uniform concentration inequalities to bound $J(\pi,\mathbf{b})$ and $\hat{J}(\pi,\mathbf{b})$ uniformly over the $\epsilon$-net of $\pi$ and $\mathbf{b}$ as described in the proof of Lemma \ref{lem: LFA true in CR}.
By Hoeffding's inequality, we have that, with probability at least $1-\delta$, for all $\pi$ and $\bbb$ in their $\epsilon$-nets, 
\begin{align*}
    \left|J(\pi,\mathbf{b})-\hat{J}(\pi,\mathbf{b})\right|&\leq \sqrt{\frac{2M_{\mathbb{B}}^2\log(\cN_{\epsilon,\bbb}\cN_{\epsilon,\pi}/\delta)}{n}},
\end{align*}where $\cN_{\epsilon, \pi}$ and $\cN_{\epsilon, \bbb}$ are the covering numbers defined in Appendix \ref{sec: basics for LFA}.
Here we use the regularity assumption that  $|\sum_{a\in\mathcal{A}}b^{\pi}_{1}(a,w)| \leq M_{\BB}$ for all $w \in \cW$ and the definition of $J(\pi,\mathbf{b})$ from \eqref{eq: define J}. 
Consequently, for all $\pi\in\Pi(\cH)$ and $\bbb\in\BB^{\otimes H}$, we have 
\begin{align}\label{eq: proof coro subopt eq concentration 2}
    \left|J(\pi,\mathbf{b})-\hat{J}(\pi,\mathbf{b})\right|&\leq \sqrt{\frac{2M_{\mathbb{B}}^2\log(\cN_{\epsilon,\bbb}\cN_{\epsilon,\pi}/\delta)}{n}} + 2M_{\BB}\epsilon.
\end{align}

Next, all of (i), (iii), (iv), and (vi) are bounded by the R.H.S. of \eqref{eq: proof coro subopt eq concentration 2}.
Finally, by \eqref{eq: proof coro subopt eq 1}, \eqref{eq: proof coro subopt eq 3}, \eqref{eq: proof coro subopt eq 4} and \eqref{eq: proof coro subopt eq concentration 2}, we have that
\begin{align*}
    &J(\pi^{\star})-J(\hat{\pi}) \notag\\
    &\qquad\leq \textnormal{(i)}+\textnormal{(iii)}+\textnormal{(iv)}+\textnormal{(vi)}+\textnormal{(v)}\notag\\
    &\qquad \leq 2\sqrt{C^{\pi^{\star}}}\sum_{h=1}^H\gamma^{h-1}\left[\tilde{C}_2 \cdot \left( 1 + \lambda\right)M_{\BB} M_{\GG}\cdot \sqrt{\frac{dH\log\left(1+L_b L_{\pi} H n/\delta \right)}{n}} + \tilde{C}_2 \cdot M_{\GG}^{1/2}\epsilon_{\BB}^{1/4}\right]\notag\\
    &\qquad\qquad +4\left[\sqrt{\frac{2M_{\mathbb{B}}^2\log(\cN_{\epsilon,\bbb}\cN_{\epsilon,\pi}/\delta)}{n}} + 2M_{\BB}\epsilon\right].
\end{align*}
Finally, by taking $\epsilon=1/n^2$, and plugging in the values of $\cN_{\epsilon,\bbb}$ and $\cN_{\epsilon,\pi}$ from Lemma \ref{lem: covering B and Pi}, we get
\begin{align*}
    & J(\pi^{\star})-J(\hat{\pi})
    \\&\qquad\leq 2\sqrt{C^{\pi^{\star}}}\sum_{h=1}^H\gamma^{h-1}\left[\tilde{C}_2 \cdot \left( 1 + \lambda\right)M_{\BB} M_{\GG}\cdot \sqrt{\frac{dH\log\left(1+L_b L_{\pi} H n/\delta \right)}{n}} + \tilde{C}_2 \cdot M_{\GG}^{1/2}\epsilon_{\BB}^{1/4}\right]\notag\\
    &\qquad\qquad + C_3 M_{\BB} \sqrt{\frac{dH\log(1+L_b L_{\pi}n/\delta)}{n}},
\end{align*}where $C_3$ is some problem-independent universal constant.
We then simplify the expression and use the fact that
\begin{align*}
    M_{\mathbb{G}} = \sup_{a,z} |g(a,z)| = \sup_{a,z} |\langle \nu(a,z), \omega\rangle| \leq \sup_{a,z}\|\nu(a,z)\|_2\cdot \| \omega\|_2 \leq L_g.
\end{align*}
This gives the result of Corollary \ref{thm: LFA subopt linear}.
\end{proof}

\subsection{Proof of Lemmas in Appendix \ref{sec: proof of linear case}}

\subsubsection{Proof of Lemma \ref{lem: LFA true in CR}}\label{sec: proof of lemma LFA true in CR}

\begin{proof}[Proof of Lemma \ref{lem: LFA true in CR}]
First, for any $\epsilon \in (0,1)$, consider arbitrary $\pi = \{\pi_h\}_{h=1}^H$ and $\pi' = \{\pi'_h\}_{h=1}^H$ in $\Pilin$ such that $\|\pi_h - \pi'_h\|_{\infty, 1} \leq \epsilon$ for all $h\in[H]$. 
And consider arbitrary $\bbb = \{b_h\}_{h=1}^H$ and $\bbb' = \{b'_h\}_{h=1}^H$ in $\BB^{\otimes H}$ such that $\|b_h - b'_h\|_{\infty} \leq \epsilon$ for all $h \in[H]$. 
Then by definition of $\Phi^{\lambda}_{\pi,h}(b_h,b_{h+1};g)$ in \eqref{eq: population phi lambda} and $\hat{\Phi}^{\lambda}_{\pi,h}(b_h,b_{h+1};g)$ in \eqref{eq: empirical phi lambda}, and that $\Phi^{\lambda}_{\pi,h} = \Phi^{0}_{\pi,h}$ and $\hat{\Phi}^{\lambda}_{\pi,h} = \hat{\Phi}^{0}_{\pi,h}$, one can easily get that 
\begin{align}\label{eq: proof LFA CR lemma eq 1}
     \left| \Phi_{\pi,h} (b_h,b_{h+1};g) - \Phi_{\pi',h} (b'_h, b'_{h+1};g)  \right|  
     & \leq \left[ 2\epsilon + \gamma \cdot(\epsilon + \epsilon M_{\BB}) \right]\cdot M_{\GG} \leq 4  M_{\BB} M_{\GG}\epsilon, \notag
    \\ \left| \hat{\Phi}_{\pi,h} (b_h,b_{h+1};g) - \hat{\Phi}_{\pi',h} (b'_h, b'_{h+1};g)  \right|  & \leq \left[ 2\epsilon + \gamma \cdot(\epsilon + \epsilon M_{\BB}) \right]\cdot M_{\GG} \leq 4  M_{\BB} M_{\GG}\epsilon,
\end{align}for all $g \in \GG$. 

Now, same as in the proof of Lemma \ref{lem: true in CR}, we want to show: for any $\pi \in \Pi(\cH)$, 
\begin{align*}
    \max_{g \in \mathbb{G}} \hat{\Phi}_{\pi, h}^\lambda ( b_h^{\pi}, b_{h+1}^{\pi};g)\leq \xi.
\end{align*}The rest of the proof would be very similar to that of Lemma \ref{lem: true in CR} with an additional covering argument. 
To begin with, we again write $\hat{\Phi}_{\pi, h}^\lambda ( b_h^{\pi}, b_{h+1}^{\pi};g) = \hat{\Phi}_{\pi, h} ( b_h^{\pi}, b_{h+1}^{\pi};g) - \lambda \|g\|_{2,n}^2$. 

Same as \eqref{eq: proof lem 3.3 concentraion 1}, we have that with probability at least $1-\delta/2$, 
\begin{align}\label{eq: proof LFA CR lemma eq 3}
    \left|\|g\|_{2,n}^2-\|g\|_2^2\right|\leq \frac{1}{2}\|g\|_2^2+\frac{M_{\mathbb{G}}^2\log(2c_1/\zeta)}{2c_2n},\quad \forall g\in\mathbb{G},
\end{align}
where $\zeta = \min\{\delta, 2c_1\exp(-c_2n\alpha_{\mathbb{G},n}^2/M_{\mathbb{G}}^2)\}$ and $c_1, c_2$ are some universal constants. 

Next, we upper bound $|\hat{\Phi}_{\pi',h}(b_h, b_{h+1};g) - \Phi_{\pi',h}(b_h, b_{h+1};g)|$ for any $\pi \in \Pi(\cH)$,  and $\bbb \in \BB^{\otimes H}$. We first prove this for a fixed $\epsilon$-net of $\Pi(\cH)$ and $\BB^{\otimes H}$. Specifically, choose an $\epsilon$-net of $\Pi(\cH)$ such that for any $\pi=\{\pi_h\}_{h=1}^H$ and $\pi'=\{\pi'_h\}_{h=1}^H$ in this $\epsilon$-net, it holds that $\|\pi_h-\pi'_h\|_{\infty,1} \leq \epsilon$ for all $h$. Also choose an $\epsilon$-net of $\BB^{\otimes H}$ such that for any $\bbb = \{b_h\}_{h=1}^H$ and $\bbb' = \{b'_h\}_{h=1}^H$ in the $\epsilon$-net, it holds that $\|b_h - b'_h\|_{\infty} \leq \epsilon$ for all $h$.  
Denote the cardinality of these two $\epsilon$-net by $\cN_{\epsilon,\pi}$ and $\cN_{\epsilon,\bbb}$, respectively. Then by the same argument behind \eqref{eq: proof lem 3.3 concentraion 2}, we get that, with probability at least $1-\delta/2$, for any $\pi$ and $\bbb$ in their $\epsilon$-nets, and for any $g\in\mathbb{G}$,
\begin{align}\label{eq: proof LFA CR lemma eq 4}
    &\left|\hat \Phi_{\pi,h}(b_h,b_{h+1};g)-\Phi_{\pi,h}(b_h,b_{h+1};g)\right|\notag
    \\ &\qquad\leq 18L\|g\|_2\sqrt{\frac{M_{\mathbb{G}}^2\log\big(2c_1 \cN_{\epsilon,\bbb}^2\cN_{\epsilon,\pi} H/\zeta\big)}{c_2n}}+\frac{18LM_{\mathbb{G}}^2\log\big(2c_1\cN_{\epsilon,\bbb}^2 \cN_{\epsilon,\pi} H/\zeta\big)}{c_2n},
\end{align}
where $\zeta' = \min\{\delta,2c_1 \cN_{\epsilon,\bbb}^2 \cN_{\epsilon,\pi} H\exp(-c_2n\alpha_{\mathbb{G},n}^2/M_{\mathbb{G}}^2)\}$. 

Now for any $\pi \in \Pi(\cH)$ and $\bbb \in \BB^{\otimes H}$, by our construction of the $\epsilon$-nets, we can find a $\pi'$ and $\bbb'$ in the $\epsilon$-nets such that $\|\pi_h - \pi'_h\|_{\infty,1} \leq \epsilon$ and $\|b_h - b'_h\|_\infty \leq \epsilon$ for all $h$. 
Then we have that with probability at least $1-\delta/2$, for any $\pi \in \Pi(\cH)$ and $\bbb \in \BB^{\otimes H}$, and for any $g \in \mathbb{G}$, 
\begin{align}\label{eq: proof LFA CR lemma eq 5}
    &\left|\hat \Phi_{\pi,h}(b_h,b_{h+1};g)-\Phi_{\pi,h}(b_h,b_{h+1};g)\right| \notag
    \\ & \qquad \leq |\hat\Phi_{\pi,h}(b_h,b_{h+1};g) - \hat\Phi_{\pi',h}(b'_h,b'_{h+1};g)|+|\hat\Phi_{\pi',h}(b'_h,b'_{h+1};g) - \Phi_{\pi',h}(b'_h,b'_{h+1};g)| \notag
    \\ & \qquad \qquad + |\Phi_{\pi',h}(b'_h,b'_{h+1};g)-\Phi_{\pi,h}(b_h,b_{h+1};g)| \notag
    \\ & \qquad \leq 8 M_{\BB}M_{\GG} \cdot \epsilon + 18L\|g\|_2\sqrt{\frac{M_{\mathbb{G}}^2\log\big(2c_1 \cN_{\epsilon,\bbb}^2\cN_{\epsilon,\pi} H/\zeta\big)}{c_2n}}+\frac{18LM_{\mathbb{G}}^2\log\big(2c_1\cN_{\epsilon,\bbb}^2 \cN_{\epsilon,\pi} H/\zeta\big)}{c_2n}, 
\end{align}where the first step is by the triangle inequality and the second steps is by \eqref{eq: proof LFA CR lemma eq 1} and \eqref{eq: proof LFA CR lemma eq 4}. 

Now combine \eqref{eq: proof LFA CR lemma eq 3} and \eqref{eq: proof LFA CR lemma eq 5} with a union bound, we conclude that, with probability at least $1-\delta$, for any $\pi \in \Pi(\cH)$,
\begin{align}\label{eq: proof LFA CR lemma eq 6}
    &\max_{g \in \mathbb{G}} \hat{\Phi}_{\pi, h}^\lambda ( b_h^{\pi}, b_{h+1}^{\pi};g)\notag\\
    &\qquad =\max_{g \in \mathbb{G}}\left\{\hat{\Phi}_{\pi, h} ( b_h^{\pi}, b_{h+1}^{\pi};g)-\lambda\|g\|_{2,n}^2\right\}\notag\\
    &\qquad \leq \max_{g \in \mathbb{G}}\Bigg\{\Phi_{\pi, h} ( b_h^{\pi}, b_{h+1}^{\pi};g)-\lambda\|g\|_{2}^2
    +\frac{\lambda}{2}\|g\|_2^2+\frac{\lambda M_{\mathbb{G}}^2\log(2c_1/\zeta)}{2c_2n},\notag\\
    &\qquad\qquad +18L\|g\|_2\sqrt{\frac{M_{\mathbb{G}}^2\log\big(2c_1 \cN_{\epsilon,\bbb}^2\cN_{\epsilon,\pi} H/\zeta\big)}{c_2n}}+\frac{18LM_{\mathbb{G}}^2\log\big(2c_1\cN_{\epsilon,\bbb}^2 \cN_{\epsilon,\pi} H/\zeta\big)}{c_2n}\bigg\}+8M_{\BB} M_{\GG} \epsilon \notag\\
    &\qquad \leq \max_{g\in\mathbb{G}}\Phi_{\pi,h}( b_h^{\pi}, b_{h+1}^{\pi};g)+\max_{g\in\mathbb{G}}\Bigg\{
    -\frac{\lambda}{2}\|g\|_2^2+18L\|g\|_2\sqrt{\frac{M_{\mathbb{G}}^2\log\big(2c_1 \cN_{\epsilon,\bbb}^2\cN_{\epsilon,\pi} H/\zeta\big)}{c_2n}}\Bigg\}\notag\\
    &\qquad\qquad +\frac{\lambda M_{\mathbb{G}}^2\log(2c_1/\zeta)}{2c_2n}+ \frac{18LM_{\mathbb{G}}^2\log\big(2c_1\cN_{\epsilon,\bbb}^2 \cN_{\epsilon,\pi} H/\zeta\big)}{c_2n}+8M_{\BB} M_{\GG} \epsilon
    \notag\\
    &\qquad
    \leq \frac{728L^2M_{\mathbb{G}}^2\log(2c_1\cN_{\epsilon,\bbb}^2\cN_{\epsilon,\pi} H/\zeta^{\prime})}{\lambda n} + \frac{\lambda M_{\mathbb{G}}^2\log(2c_1/\zeta)}{2c_2n} \notag
    \\ & \qquad\qquad+ \frac{18LM_{\mathbb{G}}^2\log(2c_1\cN_{\epsilon,\bbb}^2\cN_{\epsilon,\pi}H/\zeta^{\prime})}{c_2n}+8M_{\BB} M_{\GG} \epsilon,
\end{align}
with $\zeta = \min\{\delta, 2c_1\exp(-c_2n\alpha_{\mathbb{G},n}^2/M_{\mathbb{G}}^2)\}$ and $\zeta^{\prime} = \min\{\delta,2c_1\cN_{\epsilon,\bbb}^2\cN_{\epsilon,\pi}H\exp(-c_2n\alpha_{\mathbb{G},n}^2/M_{\mathbb{G}}^2)\}$
for any policy $\pi\in\Pi(\cH)$ and step $h$. Here the first inequality is by \eqref{eq: proof LFA CR lemma eq 3} and \eqref{eq: proof LFA CR lemma eq 5}, the second inequality is trivial, and the last inequality holds from the fact that $\Phi_{\pi, h} ( b_h^{\pi}, b_{h+1}^{\pi};g)=0$ and the fact that
$\sup_{\|g\|_2}\{a\|g\|_2-b\|g\|_2^2\}\leq a^2/4b$. 

Now by Definition \ref{def: LFA}, we apply Lemma \ref{lem: covering B and Pi} with $\|\theta_h\|_2 \leq L_b$ and $\|\beta_h\| \leq L_\pi$ and get that
\begin{align}\label{eq: proof LFA CR lemma eq 7}
    \log \cN_{\epsilon,\pi} & \leq dH \log\left( 1+ \frac{16L_\pi}{\epsilon} \right), \notag 
    \\ \log \cN_{\epsilon, b} & \leq d H \log\left( 1 + \frac{2L_b}{\epsilon} \right).
\end{align}
Now we pick $\epsilon = 1/n^2$, and together with \eqref{eq: proof LFA CR lemma eq 6} and \eqref{eq: proof LFA CR lemma eq 7}, we get that 
\begin{align*}
    &\max_{g \in \mathbb{G}} \hat{\Phi}_{\pi, h}^\lambda ( b_h^{\pi}, b_{h+1}^{\pi};g) \leq C\cdot \frac{\left(\lambda+1/\lambda\right)M_{\BB}^2 M_{\GG}^2\left[ dH\log\left(1 + L_b L_\pi H n/\delta\right) + n \alpha_{\GG,n}^2/M_{\GG}^2\right]}{n} + C\cdot \frac{M_{\BB}M_{\GG}}{n^2}, 
\end{align*}where $C$ is some universal constant. Here we have plugged in the value of $\zeta$, $\zeta^{\prime}$ and $L=2M_{\mathbb{B}}$. Finally, by plugging in the value of $\alpha_{\GG,n}$ from Lemma \ref{lem: critical radius of G linear}, we conclude that 
\begin{align*}
    \max_{g \in \mathbb{G}} \hat{\Phi}_{\pi, h}^\lambda ( b_h^{\pi}, b_{h+1}^{\pi};g)\notag
    \leq C_1\cdot \left(\lambda + \frac{1}{\lambda}\right) \cdot \frac{M_{\BB}^2 M_{\GG}^2 dH \log\left(1 + L_b L_\pi H n/\delta\right) }{n} + C_1\cdot \frac{M_{\BB}M_{\GG}}{n^2},
\end{align*}where $C_1$ is some problem-independent constant. Note that second term on the right hand side is smaller than the first term. Then the result follows from our choice of $\xi$ in Lemma \ref{lem: LFA true in CR}.

\end{proof}

\subsubsection{Proof of Lemma \ref{lem: LFA accuracy of CR}} \label{sec: proof of lemma LFA accuracy of CR}

\begin{proof}[Proof of Lemma \ref{lem: LFA accuracy of CR}]

Consider any $\pi \in \Pi(\cH)$ and $\bbb=\{b_h\}_{h=1}^H \in \CR^{\pi}(\xi)$. Same as \eqref{eq: proof lem 3.3 eq 3}, we have
\begin{align*}
    &\max_{g \in \mathbb{G}} \hat{\Phi}_{\pi, h}^\lambda (b_h, b_{h+1};g)\notag\\
    &\qquad\geq\underbrace{\max_{g \in \mathbb{G}} \Big\{
    \hat{\Phi}_{\pi, h} (b_h, b_{h+1};g)-
    \hat{\Phi}_{\pi, h} (b^{\star}_h(b_{h+1}), b_{h+1};g)-2\lambda\|g\|_{2,n}^2\Big\}}_{\displaystyle{(\star)}}\notag
    \\ &\qquad\qquad-\max_{g\in\mathbb{G}}
    \hat{\Phi}_{\pi, h}^{\lambda} (b^{\star}_h(b_{h+1}), b_{h+1};g).
\end{align*}
We again upper and lower bound term $(\star)$ respectively.

\vspace{3mm}
\noindent
\textbf{Upper bound of term ($\star$).} By the same argument as in the proof of Lemma \ref{lem: Phi b star bound}, we have that: for any $\bbb \in \BB^{\otimes H}$, $\pi\in\Pi(\cH)$, and $h\in[H]$, it holds with probability at least $1-\delta/2$ that 
\begin{align*}
    \max_{g\in\mathbb{G}}\hat{\Phi}_{\pi,h}(b^{\star}_h(b_{h+1}),b_{h+1};g)\leq \xi + \epsilon_{\BB}^{1/2} M_{\GG} , 
\end{align*}
where $b^{\star}_h(b_{h+1})$ is defined in \eqref{eq: def b star} and $\xi$ is defined in Lemma \ref{lem: LFA true in CR}. We then get
\begin{align}\label{eq: proof LFA loss lemma eq 2}
    \max_{g\in\mathbb{G}}\hat{\Phi}^{\lambda}_{\pi, h}(\hat{b}_h(b_{h+1}),b_{h+1};g)\leq \max_{g\in\mathbb{G}}\hat{\Phi}^{\lambda}_{\pi, h}(b^{\star}_h(b_{h+1}),b_{h+1};g)\leq \xi + \epsilon_{\BB}^{1/2} M_{\GG},
\end{align}where the first inequality follows from the definition of $\hat b_h(b_{h+1})$ in \eqref{eq: hat b}. Also note that, by the construction of the confidence region $\CR^\pi(\xi)$, we have 
\begin{align}\label{eq: proof LFA loss lemma eq 3}
    \max_{g\in\mathbb{G}}\hat{\Phi}^{\lambda}_{\pi, h}(b_h,b_{h+1};g)-\max_{g\in\mathbb{G}}\hat{\Phi}^{\lambda}_{\pi, h}(\hat{b}_h(b_{h+1}),b_{h+1};g)\leq \xi.
\end{align}
Furthermore, we can write 
\begin{align*}
    (\star)&\leq 
    \max_{g\in\mathbb{G}}\hat{\Phi}^{\lambda}_{\pi, h}(b^{\star}_h(b_{h+1}),b_{h+1};g)+\max_{g\in\mathbb{G}}\hat{\Phi}^{\lambda}_{\pi, h}(b_h,b_{h+1};g)\notag\\
    &\leq \max_{g\in\mathbb{G}}\hat{\Phi}^{\lambda}_{\pi, h}(b^{\star}_h(b_{h+1}),b_{h+1};g)\notag\\
    &\qquad +\max_{g\in\mathbb{G}}\hat{\Phi}^{\lambda}_{\pi, h}(b_h,b_{h+1};g)-\max_{g\in\mathbb{G}}\hat{\Phi}^{\lambda}_{\pi, h}(\hat{b}_h(b_{h+1}),b_{h+1};g)\notag\\
    &\qquad +\max_{g\in\mathbb{G}}\hat{\Phi}^{\lambda}_{\pi, h}(\hat{b}_h(b_{h+1}),b_{h+1};g). 
\end{align*}Combining with \eqref{eq: proof LFA loss lemma eq 2} and \eqref{eq: proof LFA loss lemma eq 3}, we get that, with probability at least $1-\delta/2$, 
\begin{align}\label{eq: proof LFA loss lemma eq 5}
    (\star)&\leq 3 \xi + 2\epsilon_{\BB}^{1/2} M_{\GG}. 
\end{align}

\vspace{3mm}
\noindent
\textbf{Lower bound of term ($\star$).}
First of all, same as \eqref{eq: proof lem 3.4 concentraion 1}, it holds with probability at least $1-\delta/4$ that,
\begin{align}\label{eq: proof LFA loss lemma eq 6}
    \left|\|g\|_{2,n}^2-\|g\|_2^2\right|\leq \frac{1}{2}\|g\|_2^2+\frac{M_{\mathbb{G}}^2\log(4c_1/\zeta)}{2c_2n},\quad \forall g\in\mathbb{G},\end{align}
where $\zeta = \min\{\delta, 4c_1\exp(-c_2n\alpha_{\mathbb{G},n}^2/M_{\mathbb{G}}^2)\}$ for some absolute constants $c_1$ and $c_2$, and $\alpha_{\mathbb{G},n}$ is the critical radius of $\mathbb{G}$ defined in Assumption \ref{assump: dual function class}. 

Second, we fix an $\epsilon$-net of $\Pi(\cH)$ and an $\epsilon$-net of $\BB^{\otimes H}$, as described in Appendix \ref{sec: basics for LFA}. 
Denote by $\cN_{\epsilon,\pi}$ and $\cN_{\epsilon,\bbb}$ their respective covering numbers.
Then by the same argument behind \eqref{eq: proof lem 3.4 concentraion 2} and a union bound, we get that, with probability at least $1-\delta/4$, for all $\pi=\{\pi_h\}_{h=1}^H$, $\bbb=\{b_h\}_{h=1}^H$ and $\bbb'=\{b'_h\}_{h=1}^H$ in their $\epsilon$-nets, and for all $g\in\mathbb{G}$, 
\begin{align}\label{eq: proof LFA loss lemma eq 7}
&\left|\Big(\hat{\Phi}_{\pi, h} (b_h, b_{h+1};g)-
    \hat{\Phi}_{\pi, h} (b^{\prime}_h, b_{h+1};g)\Big)-\Big(\Phi_{\pi, h} (b_h, b_{h+1};g)-
    \Phi_{\pi, h} (b^{\prime}_h, b_{h+1};g)\Big)\right|\notag\\
    &\qquad\leq 18L\|g\|_2\sqrt{\frac{M_{\mathbb{G}}^2\log(4c_1\cN_{\epsilon,\bbb}^3\cN_{\epsilon,\pi}H/\zeta^{\prime})}{c_2n}}+\frac{18LM_{\mathbb{G}}^2\log\big(4c_1\cN_{\epsilon,\bbb}^3\cN_{\epsilon,\pi}H/\zeta^{\prime}\big)}{c_2n},
\end{align}
where $\zeta^{\prime} = \min\{\delta,4c_1\cN_{\epsilon,\bbb}^3\cN_{\epsilon,\pi}H\exp(-c_2n\alpha_{\mathbb{G},n}^2/M_{\mathbb{G}}^2)\}$.

We then use \eqref{eq: proof LFA CR lemma eq 1}, and conclude that, with probability at least $1-\delta/4$, for all $\pi \in \Pi(\cH)$, and $\bbb$, $\bbb' \in \BB^{\otimes H}$, and $g \in \mathbb{G}$, 
\begin{align}\label{eq: proof LFA loss lemma phi diff}
&\left|\Big(\hat{\Phi}_{\pi, h} (b_h, b_{h+1};g)-
    \hat{\Phi}_{\pi, h} (b^{\prime}_h, b_{h+1};g)\Big)-\Big(\Phi_{\pi, h} (b_h, b_{h+1};g)-
    \Phi_{\pi, h} (b^{\prime}_h, b_{h+1};g)\Big)\right|\notag\\
    &\qquad\leq 18L\|g\|_2\sqrt{\frac{M_{\mathbb{G}}^2\log(4c_1\cN_{\epsilon,\bbb}^3\cN_{\epsilon,\pi}H/\zeta^{\prime})}{c_2n}}+\frac{18LM_{\mathbb{G}}^2\log\big(4c_1\cN_{\epsilon,\bbb}^3\cN_{\epsilon,\pi}H/\zeta^{\prime}\big)}{c_2n} + 8 M_{\BB} M_{\GG} \epsilon.
\end{align}

In the sequel, for simplicity, we denote that
\begin{align}\label{eq: proof LFA loss lemma eq 8}
    \iota_n \coloneqq\sqrt{\frac{M_{\mathbb{G}}^2\log(4c_1\cN_{\epsilon,\bbb}^3\cN_{\epsilon,\pi}H/\zeta^{\prime})}{c_2n}}, \quad \iota_n^{\prime}\coloneqq\sqrt{\frac{M_{\mathbb{G}}^2\log(4c_1/\zeta)}{2c_2n}},
\end{align}where $\zeta$ and $\zeta'$ are same as in \eqref{eq: proof LFA loss lemma eq 6} and \eqref{eq: proof LFA loss lemma eq 7}. 
Furthermore, given fixed $b_h,b_{h+1}\in\mathbb{B}$, we denote 
\begin{align}\label{eq: LFA def ghpi}
    g_h^{\pi}\coloneqq\frac{1}{2\lambda}\ell_h^{\pi}(b_h,b_{h+1})\in\mathbb{G},
\end{align}
where $\ell_h^\pi$ is defined by \eqref{eq: ell} and  $g_h^{\pi}\in\mathbb{G}$ follows from Assumption \ref{assump: completeness and realizability}.
We then have  
\begin{align*}
    (\star)&=\max_{g \in \mathbb{G}} \Big\{
    \hat{\Phi}_{\pi, h} (b_h, b_{h+1};g)-
    \hat{\Phi}_{\pi, h} (b^{\star}_h(b_{h+1}), b_{h+1};g)-2\lambda\|g\|_{2,n}^2\Big\}\notag\\
    &\geq 
    \hat{\Phi}_{\pi, h} (b_h, b_{h+1};g_h^{\pi}/2)-
    \hat{\Phi}_{\pi, h} (b^{\star}_h(b_{h+1}), b_{h+1}; g_h^{\pi}/2)-\frac{\lambda}{2}\| g_h^{\pi}\|_{2,n}^2,
\end{align*}where the inequality holds because $g_h^\pi/2 \in \GG$. 

Together with \eqref{eq: proof LFA loss lemma eq 6} and \eqref{eq: proof LFA loss lemma phi diff}, we have 
\begin{align}\label{eq: proof LFA loss lemma star lower 1}
    (\star)&\geq \Phi_{\pi, h} (b_h, b_{h+1}; g_h^{\pi}/2)-
    \Phi_{\pi, h} (b^{\star}_h(b_{h+1}), b_{h+1}; g_h^{\pi}/2)-18L\iota_n\|g_h^{\pi}\|_2
    -18L\iota_n^2 \notag
    \\ &\qquad- 8 M_{\BB} M_{\GG} \epsilon -\frac{\lambda}{2}\left(\frac{3}{2}\|g_h^{\pi}\|_2^2+\iota_n^{\prime 2}\right)\notag\\
    &\geq\lambda\|g_h^{\pi}\|_2^2-18L\iota_n\|g_h^{\pi}\|_2-\epsilon_{\mathbb{B}}^{1/2}M_{\mathbb{G}}
    -18L\iota_n^2-8M_{\BB}M_{\GG}\epsilon-\frac{\lambda}{2}\left(\frac{3}{2}\|g_h^{\pi}\|_2^2+\iota_n^{\prime 2}\right)\notag\\
    &=\frac{\lambda}{4}\|g_h^{\pi}\|_2^2-18L\iota_n\|g_h^{\pi}\|_2-\epsilon_{\mathbb{B}}^{1/2}M_{\mathbb{G}}-18L\iota_n^2- 8M_{\BB}M_{\GG}\epsilon-\frac{\lambda}{2}\iota_n^{\prime 2} ,
\end{align}where the second inequality follows from the same reason as in \eqref{eq: lower bound}. 

Finally, combine \eqref{eq: proof LFA loss lemma star lower 1} and \eqref{eq: proof LFA loss lemma eq 5} and we get 
\begin{align*}
   \frac{\lambda}{4}\|g_h^{\pi}\|_2^2-18L\iota_n\|g_h^{\pi}\|_2-\epsilon_{\mathbb{B}}^{1/2}M_{\mathbb{G}}-18L\iota_n^2- 8M_{\BB}M_{\GG}\epsilon-\frac{\lambda}{2}\iota_n^{\prime 2} \leq 3 \xi + 2\epsilon_{\BB}^{1/2} M_{\GG}.
\end{align*}This gives the following quadratic inequality w.r.t. $\|g_h^\pi\|_2$
\begin{align*}
    \lambda\|g_h^\pi\|_2^2 - \underbrace{72L \iota_n}_{\displaystyle{\mathrm{A}}} \|g_h^\pi\|_2 -\underbrace{ 4 \left[ 18L\iota_n^2+\frac{\lambda}{2}{\iota'_n}^2+3\xi+8M_{\BB}M_{\GG}\epsilon+3\epsilon_{\BB}^{1/2}M_{\GG} \right] }_{\displaystyle{\mathrm{B}}} \leq 0.
\end{align*} By the fact that $x^2-\mathrm{A}x-\mathrm{B}\leq 0 $ implies $ x\leq (\mathrm{A}+\sqrt{\mathrm{A}^2+4\mathrm{B}})/2 \leq \mathrm{A} + \sqrt{\mathrm{B}}$, we have 
\begin{align*}
    \|g_h^\pi\|_2 & \leq \frac{72L \iota_n}{\lambda} + \sqrt{\frac{4}{\lambda} \left[ 18L\iota_n^2+\frac{\lambda}{2}{\iota'_n}^2+3\xi+8M_{\BB}M_{\GG}\epsilon+3\epsilon_{\BB}^{1/2}M_{\GG} \right]}.
\end{align*}
We then plug in the values of $\iota_n$ and $\iota'_n$ from \eqref{eq: proof LFA loss lemma eq 8}, $\xi$ from Lemma \ref{lem: LFA true in CR}, $\zeta$ and $\zeta'$ from below \eqref{eq: proof LFA loss lemma eq 6} and \eqref{eq: proof LFA loss lemma eq 7}, $\cN_{\epsilon,\bbb}$ and $\cN_{\epsilon,\pi}$ from Lemma \ref{lem: covering B and Pi}, $\alpha_{\GG,n}$ from Lemma \ref{lem: critical radius of G linear}, and set $\epsilon = 1/n^2$. Simplify the expression and we get 
\begin{align*}
    \|g_h^\pi\|_2 & \leq C \cdot \left( 1 + \frac{1}{\lambda}\right)M_{\BB} M_{\GG}\cdot \sqrt{\frac{dH\log\left(1+L_b L_{\pi} H n/\delta \right)}{n}} + C\cdot \frac{M_{\GG}^{1/2}\epsilon_{\BB}^{1/4}}{\lambda},
\end{align*}where $C$ is some problem-independent universal constant. By \eqref{eq: LFA def ghpi} and \eqref{eq: RMSE loss}, we have ${\mathcal{L}_h^{\pi}(b_h,b_{h+1})}=\|2 \lambda g_h^{\pi}\|_2^2$. It follows that
\begin{align*}
    \sqrt{\mathcal{L}_h^{\pi}(b_h,b_{h+1})}=2\lambda\|g_h^{\pi}\|_2\leq \tilde{C}_2 \cdot \left( 1 + \lambda\right)M_{\BB} M_{\GG}\cdot \sqrt{\frac{dH\log\left(1+L_b L_{\pi} H n/\delta \right)}{n}} + \tilde{C}_2 \cdot M_{\GG}^{1/2}\epsilon_{\BB}^{1/4},
\end{align*}for some constant $\tilde{C}_2$. 
This finishes the proof.
\end{proof}

\section{Auxiliary Lemmas}\label{sec: auxiliary lemmas}


We introduce some useful lemmas for the uniform concentration over function classes.
Before we present the lemmas, we first introduce several notations. For a function class $\mathcal{F}$ on a probability space $(\mathcal{X}, P)$, we denote by $\|f\|_2^2$ the expectation of $f(X)^2$, that is $\|f\|_2^2=\mathbb{E}_{X\sim\mathcal{P}}[f(X)^2]$. Also, we denote by 
\begin{align}
    \mathcal{R}_n(\mathcal{F},\delta)\coloneqq\mathbb{E}\left[\sup_{f\in\mathcal{F}:\|f\|_2\leq \delta}\left|\frac{1}{n}\sum_{i=1}^n\epsilon_i f(X_i)\right|\right]
\end{align}
the localized Rademacher complexity of $\mathcal{F}$ with scale $\delta>0$ and size $n\in\mathbb{N}$.
Here $\{\epsilon_i\}_{i=1}^b$ and $\{X_i\}_{i=1}^n$ are i.i.d. and independent. Each $\epsilon_i$ is uniformly distributed on $\{+1,-1\}$ and each $X_i$ is distributed according to $P$.
Finally, we denote by $\textnormal{star}(\mathcal{F})$ the star-shaped set induced by set $\mathcal{F}$ as
\begin{align}
    \textnormal{star}(\mathcal{F})=\left\{\alpha f:\alpha\in[0,1],\,f\in\mathcal{F}\right\}.
\end{align}
Now we are ready to present the lemmas for uniform concentration inequalities.

\begin{lemma}[Localized Uniform Concentration 1 \citep{wainwright2019high}]\label{lem: localized uniform concentration}
Given a star-shaped and $b$-uniformly bounded function class $\mathcal{F}$, let $\delta_{n}$ be any positive solution of the inequality
$$
\mathcal{R}_{n}(\mathcal{F}; \delta) \leq \frac{\delta^{2}}{b}.
$$
Then for any $t \geq \delta_{n}$, we have that
$$
\left|\|f\|_{n}^{2}-\|f\|_{2}^{2}\right| \leq \frac{1}{2}\|f\|_{2}^{2}+\frac{1}{2}t^{2}, \quad \forall f \in \mathcal{F}
$$
with probability at least $1-c_{1} \exp(-c_{2} n t^{2}/b^{2})$. If in addition $n \delta_{n}^{2} \geq 2\log \left(4 \log \left(1 / \delta_{n}\right)\right)/c_2$, then we have that
$$
\big|\|f\|_{n}-\|f\|_{2}\big| \leq c_{0} \delta_{n}, \quad \forall f \in \mathcal{F}
$$
with probability at least $1-c_{1}^{\prime} \exp(-c_{2}^{\prime} n \delta_{n}^{2}/b^{2})$.
\end{lemma}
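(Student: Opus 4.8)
The plan is to follow the standard localization argument (Chapter 14 of \citealt{wainwright2019high}), built around a one-sided deviation bound for the supremum of an empirical process combined with a peeling device that exploits the star-shaped structure. First I would fix a scale $t>0$ and set $Z_n(t) \coloneqq \sup_{f \in \mathcal{F},\, \|f\|_2 \le t} \big| \|f\|_n^2 - \|f\|_2^2 \big|$. Since $\mathcal{F}$ is $b$-uniformly bounded, each centered summand $f(X_i)^2 - \|f\|_2^2$ lies in an interval of length $b^2$ and has variance proxy of order $t^2 b^2$ on the localized set, so a functional Bernstein / Talagrand-type concentration inequality gives $Z_n(t) \le \mathbb{E}[Z_n(t)] + c\,(t b\sqrt{\log(1/\delta)/n} + b^2 \log(1/\delta)/n)$ with probability $1-\delta$. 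To control $\mathbb{E}[Z_n(t)]$ I would symmetrize, obtaining $\mathbb{E}[Z_n(t)] \le 2\,\mathbb{E}\sup_{f}\big|\tfrac1n\sum_i \epsilon_i f(X_i)^2\big|$, and then apply the Ledoux--Talagrand contraction inequality using that $u \mapsto u^2$ vanishes at $0$ and is $2b$-Lipschitz on $[-b,b]$, which yields $\mathbb{E}[Z_n(t)] \lesssim b\,\mathcal{R}_n(\mathcal{F}; t)$. Evaluating at $t = \delta_n$ (or any $t \ge \delta_n$) and invoking the critical inequality $\mathcal{R}_n(\mathcal{F};t) \le t^2/b$ collapses the expectation to order $t^2$; after tracking constants this gives $Z_n(t) \le t^2/2$ with probability at least $1 - c_1 \exp(-c_2 n t^2/b^2)$.

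Next, to upgrade from "functions with $\|f\|_2 \le t$" to all $f \in \mathcal{F}$ in multiplicative form, I would use the star-shaped property: given $f \in \mathcal{F}$ with $\|f\|_2 > t$, the rescaled function $\tilde f \coloneqq t f / \|f\|_2$ again lies in $\mathcal{F}$ and has $\|\tilde f\|_2 = t$, so the bound on $Z_n(t)$ applies to $\tilde f$; undoing the scaling gives $\big| \|f\|_n^2 - \|f\|_2^2 \big| \le (\|f\|_2/t)\cdot(t^2/2) = \|f\|_2 t/2 \le \tfrac12\|f\|_2^2 + \tfrac12 t^2$ by AM--GM, while the case $\|f\|_2 \le t$ follows immediately from $Z_n(t)\le t^2/2$ and $\tfrac12\|f\|_2^2 \ge 0$. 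This establishes the first displayed inequality for every $t \ge \delta_n$. For the second displayed inequality I would write $\big|\|f\|_n - \|f\|_2\big| = \big|\|f\|_n^2 - \|f\|_2^2\big|/(\|f\|_n + \|f\|_2)$; the multiplicative bound forces $\|f\|_n$ and $\|f\|_2$ to be comparable once $\|f\|_2 \gtrsim \delta_n$, while the regime $\|f\|_2 \lesssim \delta_n$ is handled directly from $Z_n(\delta_n)\le \delta_n^2/2$, and the extra hypothesis $n\delta_n^2 \ge 2\log(4\log(1/\delta_n))/c_2$ is exactly what is needed to absorb the residual logarithmic terms, yielding $\big|\|f\|_n - \|f\|_2\big| \le c_0 \delta_n$ uniformly with the stated exponential probability.

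The main obstacle I anticipate is making the localization genuinely uniform over all norm scales while preserving the sharp multiplicative form $\tfrac12\|f\|_2^2 + \tfrac12 t^2$ rather than a crude additive bound. The rescaling trick disposes of this cleanly here precisely because $\mathcal{F}$ is star-shaped, but one must be careful that the contraction step and the Talagrand-type concentration are invoked at the correct localized scale so that the variance proxy stays of order $t^2 b^2$ (not $b^4$); getting this dependence right is what produces the tail $\exp(-c_2 n t^2/b^2)$ rather than a weaker $\exp(-c_2 n t^2/b^4)$. Since this is Theorem 14.1 of \citealt{wainwright2019high}, one may alternatively simply invoke it verbatim without reproducing the argument.
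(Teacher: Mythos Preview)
Your proposal is correct and in fact goes well beyond what the paper does: the paper's ``proof'' of this lemma is simply the sentence ``See Theorem 14.1 of \cite{wainwright2019high} for a detailed proof,'' with no argument reproduced. Your sketch is precisely the standard localization argument from that reference (Talagrand-type concentration for the localized supremum, symmetrization plus Ledoux--Talagrand contraction to reduce to $\mathcal{R}_n(\mathcal{F};t)$, then the star-shaped rescaling to pass from the localized bound to the multiplicative form), so there is no methodological difference to discuss.

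One small slip worth flagging: in the rescaling step you write that undoing $\tilde f = t f/\|f\|_2$ gives $\big|\|f\|_n^2 - \|f\|_2^2\big| \le (\|f\|_2/t)\cdot(t^2/2)$, but since $\|\cdot\|_n^2$ and $\|\cdot\|_2^2$ are both quadratic the correct factor is $(\|f\|_2/t)^2$, yielding directly $\big|\|f\|_n^2 - \|f\|_2^2\big| \le \|f\|_2^2/2$, which is stronger than what you wrote and makes the AM--GM step unnecessary in that regime. This does not affect the conclusion. As you yourself note, simply citing Theorem~14.1 of \cite{wainwright2019high} verbatim is entirely adequate here and matches exactly what the paper does.
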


\begin{proof}[Proof of Lemma \ref{lem: localized uniform concentration}]
See Theorem 14.1 of \cite{wainwright2019high} for a detailed proof.
\end{proof}

\begin{lemma}[Localized Uniform Concentration 2 \citep{foster2019orthogonal}]\label{lem: localized uniform concentration 2}
Consider a star-shaped function class $\mathcal{F}:\mathcal{X}\mapsto\mathbb{R}$ with $\sup _{f \in \mathcal{F}}\|f\|_{\infty} \leq b$, and pick any $f^{\star} \in \mathcal{F}$. 
Also, consider a loss function $\ell:\mathbb{R}\times\mathcal{Y}\mapsto\mathbb{R}$ which is $L$-Lipschitz in its first argument with respect to the $\|\cdot\|_2$-norm.
Now let $\delta_{n}^{2} \geq 4  \log \left(41 \log \left(2 c_{2} n\right)\right)/(c_{2} n)$ be any solution to the inequality:
$$
\mathcal{R}_n(\operatorname{star}(\mathcal{F}-f^{\star});\delta) \leq \frac{\delta^{2}}{b}.
$$
Then for any $t\geq \delta_n$ and some absolute constants $c_{1}, c_{2}>0$, with probability $1-c_{1} \exp (-c_{2} n t^{2}/b^2)$ it holds that
\begin{align}\label{eq: localized uniform concentration 2 1}
&\Big|\Big(\hat{\mathbb{E}}_n[\ell(f(x), y)]-\hat{\mathbb{E}}_{n}\left[\ell\left(f^{\star}(x), y\right)\right]\Big)-\Big(\mathbb{E}[\ell(f(x), y)]-\mathbb{E}\left[\ell\left(f^{\star}(x), y\right)\right]\Big)\Big| \notag
\\ &\qquad\leq 18Lt\left(\left\|f-f^{\star}\right\|_{2}+t\right),
\end{align}
for any $f\in\mathcal{F}$.
If furthermore, the loss function $\ell$ is linear in $f$, i.e., $\ell((f+f^{\prime})(x),y)=\ell(f(x),y)+\ell(f^{\prime}(x),y)$ and $\ell(\alpha f(x),y)=\alpha \ell(f(x),z)$, then the lower bound on $\delta_{n}^{2}$ is not required.
\end{lemma}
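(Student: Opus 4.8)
The plan is to establish this as a localized uniform law of large numbers via the standard machinery of localized Rademacher complexity combined with a peeling (shell decomposition) argument, following the template of Theorem 14.1 in \citet{wainwright2019high} adapted to the Lipschitz-loss setting (this is precisely Lemma 11 of \citet{foster2019orthogonal}, so the goal is to recover their argument). Define the centered empirical process $Z_f := (\hat{\mathbb{E}}_n - \mathbb{E})[\ell(f(x),y) - \ell(f^{\star}(x),y)]$ and write $g_f := f - f^{\star}$, so that $g_f$ ranges over the shifted class $\mathcal{F} - f^{\star}$, whose star-hull $\mathrm{star}(\mathcal{F}-f^{\star})$ is the object appearing in the critical-radius hypothesis. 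The target is the uniform bound $|Z_f| \le 18 L t(\|g_f\|_2 + t)$ for all $f \in \mathcal{F}$ on a single high-probability event.

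First I would reduce the loss-composed increments to the centered function class. Since $\ell$ is $L$-Lipschitz in its first argument, Talagrand's contraction inequality lets me replace the Rademacher complexity of $\{(x,y)\mapsto \ell(f(x),y)-\ell(f^{\star}(x),y) : f\in\mathcal{F}\}$ by $L$ times the localized Rademacher complexity $\mathcal{R}_n(\mathrm{star}(\mathcal{F}-f^{\star});\cdot)$. This is what converts the hypothesis $\mathcal{R}_n(\mathrm{star}(\mathcal{F}-f^{\star});\delta)\le \delta^2/b$ into a usable handle on $Z_f$ after symmetrization. The key structural fact I would then invoke is that for a star-shaped class the map $r\mapsto \mathcal{R}_n(\mathrm{star}(\mathcal{F}-f^{\star});r)/r$ is non-increasing, so the fixed-point inequality at $\delta_n$ upgrades to $\mathcal{R}_n(\mathrm{star}(\mathcal{F}-f^{\star});r)\le r\delta_n/b \le r t/b$ for every $r\ge \delta_n$.

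Next comes the peeling device. For the band $\{f:\|g_f\|_2\le t\}$ I bound $\sup|Z_f|$ directly via symmetrization and the localized complexity at radius $t$. For larger radii I split into dyadic shells $S_m = \{f : 2^m t < \|g_f\|_2 \le 2^{m+1}t\}$, $m\ge 0$, controlling the expected supremum on each by $2L\,\mathcal{R}_n(\mathrm{star}(\mathcal{F}-f^{\star});2^{m+1}t) \le 2L\cdot 2^{m+1}t\cdot (t/b)\cdot b$, which matches the target product $t\|g_f\|_2$ up to constants. To pass from expectations to high probability I apply a functional concentration inequality (Talagrand's inequality or bounded differences) to $\sup_{f\in S_m}|Z_f|$; the uniform bound $\|f\|_\infty\le b$ supplies the variance proxy and gives a sub-Gaussian tail of order $\exp(-c_2 n t^2/b^2)$ on each shell. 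A union bound over $m=0,1,2,\dots$ collapses to the stated failure probability $c_1\exp(-c_2 n t^2/b^2)$, with the geometric growth of the shell radii making the deviation terms summable and the lower bound $\delta_n^2\gtrsim \log(41\log(2c_2 n))/(c_2 n)$ absorbing the $\log\log$ overhead introduced by the union. The linear-loss refinement follows because when $\ell$ is linear in $f$ the increment equals $\ell(g_f(x),y)$, an element of the rescaled star-shaped class, so no contraction and no re-centering correction are needed and the $\delta_n^2$ lower bound may be dropped.

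The main obstacle will be the peeling/self-bounding step: combining the shell-wise concentration with the monotonicity of the normalized Rademacher complexity to produce the \emph{multiplicative} bound $t(\|g_f\|_2+t)$ uniformly, while keeping the per-shell failure probabilities summable and pinning the constant at $18L$. Getting the interplay between the critical radius $\delta_n$ and the free radius $t\ge\delta_n$ exactly right—so that the deviation degrades linearly in $\|g_f\|_2$ rather than quadratically, and so that the $t^2$ additive slack is the only loss—is the delicate accounting on which the whole estimate hinges.
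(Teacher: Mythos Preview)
Your proposal is a reasonable reconstruction of the standard localized-complexity argument (symmetrization, Lipschitz contraction, peeling over dyadic shells, Talagrand-type concentration), and it correctly identifies the monotonicity of $r\mapsto \mathcal{R}_n(\cdot;r)/r$ for star-shaped classes as the mechanism that upgrades the fixed-point bound at $\delta_n$ to any $t\ge \delta_n$. There is no substantive gap in the outline.

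However, you should be aware that the paper does \emph{not} actually prove this lemma: its ``proof'' consists solely of the sentence ``See Lemma 11 of \cite{foster2019orthogonal} for a detailed proof,'' together with a one-line remark that the extension from the specific critical radius $\delta_n$ to all $t\ge \delta_n$ follows from Lemma~13.6 of \cite{wainwright2019high} (the non-increasingness of the normalized localized complexity you already invoke). So there is nothing to compare against at the technical level --- your sketch is strictly more than what the paper provides, and it matches the structure of the cited external result rather than any argument in the paper itself.
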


\begin{proof}[Proof of Lemma \ref{lem: localized uniform concentration 2}]
See Lemma 11 of \cite{foster2019orthogonal} for a detailed proof.
\end{proof}

\begin{remark}
We note that in the original Lemma 11 of \cite{foster2019orthogonal}, inequality \eqref{eq: localized uniform concentration 2 1} only holds for $\delta_n$, and we extend it to any $t\geq\delta_n$ since according to Lemma 13.6 of \cite{wainwright2019high} we know that $\mathcal{R}_n(\mathcal{F};\delta)/\delta$ is a non-increasing function of $\delta$ on $(0,+\infty)$, which indicates that $t\geq \delta_n$ also solves the inequality.
\end{remark}

\end{document}